\def\preprint{}
\newtheorem{theorem}{Theorem}
\newtheorem{definition}[theorem]{Definition}
\newtheorem{lemma}[theorem]{Lemma}
\newtheorem{conjecture}[theorem]{Conjecture}
\newtheorem{remark}[theorem]{Remark}
\newcommand{\cD}{\ensuremath{{\mathcal D}}}
\newcommand{\cP}{\ensuremath{{\mathcal P}}}
\newcommand{\cA}{\ensuremath{{\mathcal A}}}
\DeclareMathOperator{\bE}{\mathbb E}
\newcommand{\ber}{\ensuremath{\mathrm{Ber}}}
\newcommand{\adv}{\ensuremath{\mathrm{adv}}}
\newcommand{\mcerror}{\ensuremath{\mathrm{mc}\textup{-}\mathrm{error}}}
\newcommand{\cF}{\ensuremath{{\mathcal F}}}
\newcommand{\cQ}{\ensuremath{{\mathcal Q}}}
\newcommand{\bR}{\ensuremath{\mathbb{R}}}
\newcommand{\bZ}{\ensuremath{\mathbb{Z}}}
\newcommand{\listl}{\textnormal{\textsf{ListL}}}
\newcommand{\bool}{\ensuremath{\mathsf{B}}}
\newcommand{\parity}{\ensuremath{\mathsf{BP}}}
\newcommand{\anti}{\ensuremath{\mathsf{BA}}}
\newcommand{\fat}{\ensuremath{\mathsf{fat}}}
\newcommand{\cH}{\ensuremath{\mathcal{H}}}
\mathchardef\mhyphen="2D
\newcommand{\oi}{\ensuremath{\mathsf{OI}}}
\newcommand{\dfr}{\ensuremath{\mathsf{DF\mhyphen R}}}
\newcommand{\dfa}{\ensuremath{\mathsf{DF\mhyphen A}}}
\newcommand{\dsr}{\ensuremath{\mathsf{DS\mhyphen R}}}
\newcommand{\dsa}{\ensuremath{\mathsf{DS\mhyphen A}}}
\newcommand{\sampdfr}{\ensuremath{\mathsf{SAMP\mhyphen DF\mhyphen R}}}
\newcommand{\sampdfa}{\ensuremath{\mathsf{SAMP\mhyphen DF\mhyphen A}}}
\newcommand{\sampdsr}{\ensuremath{\mathsf{SAMP\mhyphen DS\mhyphen R}}}
\newcommand{\sampdsa}{\ensuremath{\mathsf{SAMP\mhyphen DS\mhyphen A}}}
\newcommand{\loss}{\ensuremath{\mathsf{loss}}}
\newcommand{\vect}{\ensuremath{\mathsf{vec}}}
\newcommand{\acc}{\ensuremath{\textnormal{ACCEPT}}}
\newcommand{\rej}{\ensuremath{\textnormal{REJECT}}}
\newcommand{\cC}{\ensuremath{\mathcal{C}}}
\newcommand{\nei}{\ensuremath{\mathsf{Nei}}}
\newcommand{\one}{\mathds{1}}
\newcommand{\lemmaref}[1]{Lemma~\ref{#1}}
\newcommand{\sectionref}[1]{Section~\ref{#1}}
\newcommand{\algorithmref}[1]{Algorithm~\ref{#1}}
\newcommand{\appendixref}[1]{Appendix~\ref{#1}}
\newcommand{\conjectureref}[1]{Conjecture~\ref{#1}}
\newcommand{\theoremref}[1]{Theorem~\ref{#1}}
\newcommand{\remarkref}[1]{Remark~\ref{#1}}
\newcommand{\definitionref}[1]{Definition~\ref{#1}}
\title{Metric Entropy Duality and the Sample Complexity of Outcome Indistinguishability}
\author{Lunjia Hu\thanks{Computer Science Department, Stanford University. 
Email: \texttt{lunjia@stanford.edu}. 
Supported by Moses Charikar’s Simons Investigator award and Omer Reingold's NSF Award IIS-1908774.}
\and 
Charlotte Peale\thanks{Computer Science Department, Stanford University. 
Email: \texttt{cpeale@stanford.edu}.
Supported by the Simons Foundation Collaboration on the Theory of Algorithmic Fairness.} 
\and 
Omer Reingold\thanks{Computer Science Department, Stanford University. 
Email: \texttt{reingold@stanford.edu}.
Supported by the Simons Foundation Collaboration on the Theory of Algorithmic Fairness, the Sloan Foundation Grant 2020-13941 and the Simons Foundation investigators award 689988.}}
\date{}
\begin{document}
\maketitle
\begin{abstract}%
We give the first sample complexity characterizations for outcome indistinguishability, a theoretical framework of machine learning recently introduced by Dwork, Kim, Reingold, Rothblum, and Yona (STOC 2021). In outcome indistinguishability, the goal of the learner is to output a predictor that cannot be distinguished from the target predictor by a class $\cD$ of distinguishers examining the outcomes generated according to the predictors' predictions. While outcome indistinguishability originated from the algorithmic fairness literature, it provides a flexible objective for machine learning even when fairness is not a consideration. In this work, we view outcome indistinguishability as a relaxation of PAC learning that allows us to achieve meaningful performance guarantees under data constraint.

In the distribution-specific and realizable setting where the learner is given the data distribution together with a predictor class $\cP$ containing the target predictor, we show that the sample complexity of outcome indistinguishability is characterized by the metric entropy of $\cP$ w.r.t.\ the dual Minkowski norm defined by $\cD$, and equivalently by the metric entropy of $\cD$ w.r.t.\ the dual Minkowski norm defined by $\cP$. This equivalence makes an intriguing connection to the long-standing metric entropy duality conjecture in convex geometry. Our sample complexity characterization implies a variant of metric entropy duality, which we show is nearly tight. In the distribution-free setting, we focus on the case considered by Dwork et al.\ where $\cP$ contains all possible predictors, hence the sample complexity only depends on $\cD$. In this setting, we show that the sample complexity of outcome indistinguishability is characterized by the fat-shattering dimension of $\cD$.

We also show a strong sample complexity separation between realizable and agnostic outcome indistinguishability in both the distribution-free and the distribution-specific settings. This is in contrast to distribution-free (resp.\ distribution-specific) PAC learning where the sample complexity in both the realizable and the agnostic settings can be characterized by the VC dimension (resp.\ metric entropy).
\end{abstract}

\ifdefined\preprint
\newpage
\else
\begin{keywords}%
  outcome indistinguishability, covering number, duality conjecture, fat shattering%
\end{keywords}
\fi

\section{Introduction}
Prediction algorithms based on machine learning are becoming increasingly influential on major decisions that affect individual's lives in settings such as medical diagnoses, loan applications, and educational admissions processes. As a result, we must be careful that the predictions of these algorithms do not discriminate against any sub-communities within the input population. Unfortunately, standard measures of prediction accuracy don't necessarily guarantee the absence of such discrimination. Consider a binary classifier that is used to predict an individual's probability of repaying a loan. A natural way to measure the success of our classifier would be to use classification error--the fraction of instances from some representative test set that it classifies incorrectly. Ideally, we'd hope that a classifier with $5\%$ error incorrectly classifies any particular individual in our population only $5\%$ of the time. However, if some low-income minority group makes up 10\% of the population and has a low probability (say 40\%) of repaying a loan on average, a classifier that chooses to focus on getting $99\%$ accuracy on the majority group and applies a blanket policy of classifying every individual in the minority group as unlikely to pay back the loan can still receive this $5\%$ classification error guarantee despite incorrectly classifying minority individuals $40\%$ of the time. If factored into loan application decisions, such a classifier could deny loans to many worthy applicants from the minority group, further cementing and potentially exacerbating the financial disparities present in the population. 

This potential for discrimination was the original inspiration behind \emph{Outcome indistinguishability} (OI), a theoretical framework for machine learning recently proposed by~\citet*{dwork2021outcome} that aims to address disparities in the treatment of different groups by replacing the accuracy objective with a more flexible objective that can give stronger guarantees for sub-communities in a population. OI is a framework used when learning predictors, rather than classifiers. 
In this setting, an analogous measure to the classification error of a learned predictor $p$ is the $\ell_1$ error considered by e.g.\ \citet{MR1408000}.
Instead of using the $\ell_1$ error as a quality measure,
the main idea of OI is to measure the quality of a learned predictor $p$ by how easy it is for a predetermined class $\cD$ of distinguishers to distinguish between the output of $p$ and the output of the target predictor.

More precisely, the goal of OI is to output a predictor $p:X\rightarrow [0,1]$ assigning probabilities $p(x)\in[0,1]$ to individuals $x\in X$.
Given a distribution $\mu$ over $X$, every predictor $p$ defines a distribution $\mu_p$ for individual-outcome pairs $(x,o)\in X\times \{0,1\}$ where the individual $x$ is drawn from $\mu$, and the outcome $o\sim \ber(p(x))$ is drawn from the Bernoulli distribution with mean $p(x)$.
The input to the OI learner consists of examples $(x_1,o_1),\ldots,(x_n,o_n)$ drawn i.i.d.\ from $\mu_{p^*}$ for an unknown target predictor $p^*$. The outputted predictor is audited by a set of distinguishers $\cD$. Here, a distinguisher takes an individual-outcome pair $(x,o)\in X\times\{0,1\}$ as input, and either accepts or rejects the input.\footnote{
\citet{dwork2021outcome} also considered more advanced distinguishers with access to additional information about the predictors such as the predictions themselves or the predictor code, but the simplest distinguishers we describe here can already express non-trivial objectives.
Also,
while we describe outcome indistinguishability in the binary outcome setting, it is possible to generalize the notion to an arbitrary number of possible outcomes by considering predictors that output a description of a more general distribution.
} 
The distinguishing advantage of a distinguisher $d$ on two predictors $p_1$ and $p_2$ is defined as the absolute difference between the acceptance probabilities of $d$ on $(x,o)$ drawn from $\mu_{p_1}$ and $\mu_{p_2}$.
The quality of the learned predictor $p$ is then measured by the maximum distinguishing advantage (MDA) on $p$ and $p^*$ over all distinguishers $d\in\cD$.
In the loan repayment setting described above, adding a distinguisher that only compares the rate of positive classification on the minority group could identify and prevent this disparity for a small enough allowable MDA threshold. 

It can be shown that the MDA never exceeds the $\ell_1$ error $\bE_{x\sim\mu}|p(x) - p^*(x)|$, and when $\cD$ contains all possible distinguishers, the MDA and the $\ell_1$ error are known to be equal (see \lemmaref{lm:advantage-l1}). 
However, by leveraging our ability to select only the distinguishers we care about in OI, we can tune the MDA to be a more suitable quality measure compared to $\ell_1$ error even in settings outside of fairness. As an example, suppose we would like to learn a binary image classifier that can be used in self-driving cars to determine whether the road contains an obstruction. Ideally we would like to learn a model that gives classification error very close to zero because it means that we can expect the car to fail to detect an obstruction extremely rarely. However, what if we have insufficient data to learn a classifier with extremely low error, and must settle for larger error on the order of 1\% or more? This is where we need to observe that not all errors are born equal. Failing to recognize leaves on the road is dramatically different from failing to identify a parking vehicle. Unfortunately, a 1\% error may be completely concentrated on important obstructions (that may occur less than 1\% of the time). An overall error rate that would guarantee minuscule errors in important cases may very well be impossible and concentrating on the errors we care about may be mandated.

This example demonstrates that classification error alone may not be enough to tell us whether this is a model we would trust on the roads. In particular, the ability to refine our measure of performance to focus on the particular types of mistakes that we care about most would give us a better understanding of performance and might potentially allow us to get good targeted accuracy guarantees even with insufficient data to achieve high accuracy against \emph{all} types of errors.
If we believe that distinguishing between instances where a large obstruction is or is not present requires a smaller number of circuit gates than distinguishing between instances that may contain more minor obstructions such as a small tree branch or plastic bag, choosing $\cD$ to contain all distinguishers with small circuit sizes would filter out serious errors such as missing a pedestrian crossing the street even when we cannot achieve any meaningful overall accuracy due to limited data.
Moreover, if $\cD$ contains a distinguisher that accepts $(x,o)$ if and only if $x$ contains a large obstruction and $o$ equals $1$, a predictor that significantly underestimates the true prediction $p^*(x)$ when $x$ contains a serious obstruction would have a high MDA. 
In general,
when the distinguisher class $\cD$ is properly chosen, a predictor with $\ell_1$ error and MDA both being about $0.1$ can be less preferable than a predictor with $\ell_1$ error being $0.2$ but MDA being only $0.01$.

\subsection{Sample Complexity Characterizations}
\label{sec:characterization}
Traditionally, outcome indistinguishability and related notions such as multicalibration \citep*{hebert2018multicalibration} have been viewed as providing stronger guarantees than $\ell_1$ error alone by allowing a predictor's performance to be fine-tuned at the group level rather than just looking at the entire population. However, our obstruction-identification example from the previous section demonstrates how OI can also be viewed as a useful \emph{relaxation} of the standard $\ell_1$ performance benchmark. By focusing on the important errors specified by the distinguisher class,
outcome indistinguishability may allow us to achieve good performance with relatively small sample size.
It is natural to ask:
how much improvement in the sample size do we get from OI?
This is the main focus of this paper---understanding the sample complexity of outcome indistinguishability.

It is one of the major objectives of learning theory to
understand
the sample complexity of various learning tasks and many influential characterizations of sample complexity have been discovered. The most notable example is the VC dimension for PAC learning \citep*{MR3408730,MR1072253,MR1088804}. 
In PAC learning \citep*{valiant1984theory,MR776268}, 
the learner receives examples $(x_1,h^*(x_1)),\ldots,(x_n,h^*(x_n))$ where $x_1,\ldots,x_n$ are drawn i.i.d.\ from a distribution $\mu$ over $X$, and aims to output a binary classifier (a.k.a.\ concept or hypothesis) $h:X\rightarrow \{0,1\}$ that is ``close'' to the target classifier $h^*:X\rightarrow\{0,1\}$. 
Here, the performance of $h$ is measured by the classification error $\Pr_{x\sim\mu}[h(x) \ne h^*(x)]$.
In the realizable setting, the target classifier $h^*$ is assumed to be from a given class $\cH$, whereas in the agnostic setting, there is no assumption on $h^*$ but the performance of $h$ is measured relative to the best classifier in the class $\cH$.
The main result of VC theory states that the sample complexity of PAC learning in both the realizable and the agnostic settings are characterized by the \emph{VC dimension} \citep*{MR3408730} of the class $\cH$, which has a simple combinatorial definition.
The success of VC theory raises the possibility that the sample complexity of other learning tasks may also have natural characterizations. 
Below we discuss two such learning tasks that are most relevant to our work: predictor learning and distribution-specific learning.

The extension of PAC learning to predictor learning dates back to \citep*{MR1292865}, in which predictors were termed probabilistic concepts. 
In predictor learning, binary classifiers are replaced by predictors whose predictions take values in $[0,1]$.
\citet{MR1292865} introduced the notion of \emph{fat-shattering dimension} (see \sectionref{sec:fat} for a precise definition) as a generalization of VC dimension to predictor classes and showed a lower bound on the sample complexity of learning a predictor class by its fat-shattering dimension.
\citet*{MR1481318} and
\citet*{MR1408000} complemented the result with corresponding upper bounds and concluded that a predictor class is learnable from finitely many examples if and only if it has finite fat-shattering dimension. 
Quantitatively,
these papers showed that
the sample complexity of learning a predictor in a class $\cP$ within $\ell_1$ error $\varepsilon$ is characterized by
\[
(1/\varepsilon)^{O(1)}\fat_{\cP}\big(\varepsilon^{\Theta(1)}\big),
\]
assuming we want the success probability to be at least a constant, say $9/10$.
Moreover,
\citet{MR1408000} extended the characterization to the agnostic setting, where the objective is the mean absolute error between the learned prediction and the actual outcome.
Later,
the sample complexity bounds and the related uniform convergence results from \citet{MR1481318} and \citet{MR1408000} were improved by \citet*{bartlett1995more}, \citet*{MR1629694}, and \citet*{MR1824457}.

Another natural extension of PAC learning is distribution-specific learning. In both PAC learning and predictor learning discussed above, %
performance of the learner is evaluated based on the worst-case distribution $\mu$. These settings are referred to as \emph{distribution-free} due to their lack of dependence on a particular input distribution. 
Since the distribution is usually not the worst-case in practice, \emph{distribution-specific} learning focuses on the performance of the learner on a given distribution $\mu$. In this setting, 
the sample complexity of learning a binary classifier in class $\cH$ to achieve a classification error below $\varepsilon$ is characterized by 
\begin{equation}
\label{eq:distribution-specific-learning}
(1/\varepsilon)^{O(1)}\log N_{\mu}(\cH,\Theta(\varepsilon))
\end{equation}
using the \emph{metric entropy}, i.e., the logarithm of the covering number $N_{\mu}$ of $\cH$ w.r.t.\ the classification error (as a metric), which indeed depends on the specific distribution $\mu$ \citep{MR1122796}.

To compare OI with these previous clasification-error/$\ell_1$-error-based notions of learning in terms of sample complexity, we need a characterization of the sample complexity of OI using similar quantities such as the fat-shattering dimension or the metric entropy. While it is tempting to hope that we might directly apply such notions, OI introduces additional subtlety in that we must consider how the expressiveness of the predictor class $\cP$ \emph{and} the class of distinguishers $\cD$ interact to fully understand the sample complexity requirements. This is in contrast to standard settings where characterizing the expressiveness of the concept class via VC dimension or related notions is sufficient. 

We show a simple example where it is indeed important to consider the interplay between $\cP$ and $\cD$ rather than just considering their contributions independently: Partition the set of inputs $X$ into two equal-sized sets, $X_1$ and $X_2$. We consider a class of predictors that are maximally expressive on $X_2$ and constant on $X_1$: Let $\cP = \{p: X \rightarrow [0, 1] | p(x) = 0,\forall x \in X_1\}$. Similarly, we can define a distinguisher class that are maximally inquisitive on one set and ignores the other set but depending on which set is ignored we will get dramatically different complexity: Define two potential distinguisher classes: $\cD_1 = \{d: X \times \{0, 1\}\rightarrow\{\acc, \rej\} | d(x, b) = \rej, \forall x \in X_1, b \in \{0, 1\}\}$, $\cD_2 = \{d: X \times \{0, 1\}\rightarrow\{\acc, \rej\} | d(x, b) = \rej, \forall x \in X_2, b \in \{0, 1\}\}$. 
$\cD_1$ and $\cD_2$ are symmetric and thus identical in any independent measure of complexity. 
Nevertheless, it is easy to see that while achieving $\varepsilon$-OI on $\cP$ with respect to $\cD_1$ is equivalent to learning a predictor from $\cP$ with $\varepsilon$ $\ell_1$ error on the set $X_2$, learning an $\varepsilon$-OI predictor from $\cP$ with respect to $\cD_2$ is trivial as $\cP$ is constant on all individuals that $\cD_2$ can distinguish between, and therefore any $p \in \cP$ will satisfy perfect OI with respect to $\cD_2$. This example demonstrates the need for a more subtle variation of existing complexity notions in order to tightly characterize the sample complexity of OI.

Prior to our work, \citet{dwork2021outcome} showed that $O(\varepsilon^{-4}\log(|\cD|/\varepsilon\delta))$ examples are sufficient for achieving an advantage below $\varepsilon$ over a distinguisher class $\cD$ with probability at least $1-\delta$ in the distribution-free setting. In this work, we refine the $\log|\cD|$ dependence on $\cD$ to its fat-shattering dimension with a matching lower bound (\sectionref{sec:distribution-free}). In addition, we characterize the sample complexity of OI in the distribution-specific setting with greater generality for every given predictor class $\cP$, placing OI in the same setting as the classification-error/$\ell_1$-error-based notions of learning with improved sample complexity due to a well-tuned objective based on the distinguisher class $\cD$. The interplay between the distinguisher class $\cD$ and the predictor class $\cP$ 
connects our sample complexity characterizations to the intriguing metric entropy duality conjecture in convex geometry, which we discuss more in \sectionref{sec:intro-duality} below.

\subsection{Covering for Distinguishers and Metric Entropy Duality}
\label{sec:intro-duality}
Before we describe our sample complexity characterizations for OI, we would like to discuss the ideas behind the metric-entropy-based sample complexity characterization \eqref{eq:distribution-specific-learning} for distribution-specific learning by \citet{MR1122796}. Our characterizations for distribution-specific OI are based on similar ideas, 
but in a more subtle and arguably surprising way,
making an intriguing connection to the \emph{metric entropy duality conjecture}, a long-standing conjecture in convex geometry.

The idea behind the sample complexity upper bound in \eqref{eq:distribution-specific-learning} is to reduce the size of the classifier class $\cH$ by taking an $\varepsilon$-covering of it.
Consider the space of all binary classifiers endowed with the ``classification error metric'' $\eta(h_1,h_2) = \Pr_{x\sim\mu}[h_1(x) \ne h_2(x)]$. If two classifiers $h_1$ and $h_2$ are close w.r.t.\ this metric, choosing $h_1$ and $h_2$ would result in roughly equal classification errors w.r.t.\ the target predictor. This gives a natural procedure for simplifying the classifier class and consequently controlling the sample complexity: we can replace $\cH$ by an $\varepsilon$-covering $\cH'$ of it with only minor loss in its expressivity. Here, an $\varepsilon$-covering is a subset $\cH'\subseteq \cH$ such that every $h\in\cH$ can find a close companion $h'\in\cH'$ such that $\eta(h,h') \le \varepsilon$.
As the size of $\cH'$ can be bounded by the covering number $N_{\mu}(\cH,\varepsilon)$, 
we can use a relatively small amount of examples to accurately estimate the classification error of every classifier in $\cH'$. This naturally leads to the empirical risk minimization (ERM) algorithm used in \citep{MR1122796}.

To extend the ERM algorithm to outcome indistinguishability, a natural idea is to define the metric $\eta(p_1,p_2)$ between two predictors $p_1$ and $p_2$ to be the maximum distinguishing advantage for $p_1$ and $p_2$ w.r.t.\ the distinguisher class $\cD$, and compute an $\varepsilon$-covering $\cP'$ of the predictor class $\cP$.
However, this direct extension (\algorithmref{alg:erm}) of the ERM algorithm to OI does not give us the right sample complexity upper bound (we show this formally in \appendixref{sec:failure-erm}, especially in \lemmaref{lm:failure-erm-2}). This failure is partly 
because the acceptance probabilities of the distinguishers in $\cD$ are not all preserved on a small sample. Indeed, learning a predictor $p$ with MDA below $\varepsilon$ w.r.t.\ to the target predictor $p^*$ necessarily requires estimating the acceptance probability of every distinguisher in $\cD$ on $p^*$ within error $\varepsilon$ (the estimates are simply the acceptance probabilities on $p$).

To meet the need of estimating the acceptance probabilities of the distinguishers in $\cD$, we use a new algorithm (\algorithmref{alg:distinguisher-covering}) where we compute a covering of $\cD$ w.r.t.\ a metric defined by the predictor class $\cP$, i.e., we flip the roles of $\cP$ and $\cD$ in the covering.
Using this new algorithm, we get a sample complexity upper bound as a function of the metric entropy $\log N_{\mu,\cP}(\cD,\varepsilon)$ of $\cD$ w.r.t.\ $\cP$, but the sample complexity lower bound we get by extending the arguments from \citep{MR1122796} is still a function of the metric entropy $\log N_{\mu,\cD}(\cP,\varepsilon)$ of $\cP$ w.r.t.\ $\cD$. How do we make the upper and lower bounds match? 

Our idea is to first transform distinguishers into the same inner product space as the predictors, and then interpret $\cD$ and $\cP$ as two abstract vector sets $\cF_1$ and $\cF_2$ in the same inner product space that can exchange roles freely. Specifically, combining our upper and lower bounds, we know that the lower bound based on $\log N_{\mu,\cD}(\cP,\varepsilon)$ never exceeds the upper bound based on $\log N_{\mu,\cP}(\cD,\varepsilon)$. If we set $\cF_1  = \cP$ and $\cF_2 = \cD$, a more precise version of what we get is
\begin{equation}
\label{eq:duality-intro}
\log N_{\mu,\cF_2}(\cF_1,\varepsilon) \le O\Big(\varepsilon^{-2}\Big(1 + \log N_{\mu,\cF_2}(\cF_1,\Omega(\varepsilon))\Big)\Big).
\end{equation}
If $\cF_1$ and $\cF_2$ are two arbitrary abstract sets of vectors, we can inversely set $\cF_1 = \cD$ and $\cF_2 = \cP$ in the inequality above and get
\[
\log N_{\mu,\cP}(\cD,\varepsilon) \le O\Big(\varepsilon^{-2}\Big(1 + \log N_{\mu,\cD}(\cP,\Omega(\varepsilon))\Big)\Big).
\]
This inequality is exactly what we need to flip back the roles of $\cP$ and $\cD$ and make our upper bound match our lower bound. 

The key inequality \eqref{eq:duality-intro} that helps match our upper and lower bounds comes from combining the bounds themselves.
Moreover,
this inequality makes an intriguing connection to the long-standing \emph{metric entropy duality conjecture} in convex geometry, which conjectures that \eqref{eq:duality-intro} holds without the $\varepsilon^{-2}$ factor, but with the additional assumption that $\cF_1$ and $\cF_2$ are convex and symmetric around the origin. Without this additional assumption, we show that the quadratic dependence on $1/\varepsilon$ in \eqref{eq:duality-intro} is nearly tight (\lemmaref{lm:tight-duality}).

The metric entropy duality conjecture (formally stated in \conjectureref{conjecture}) was first proposed by \citet*{MR0361822}. While the conjecture remains open, many weaker versions of it have been proved \citep*[e.g.][]{MR1008716}. Most notably, the conjecture was proved by \citet*{MR2113023} in the special case where one of the two convex sets is an ellipsoid.
This result was further strengthened by \citet*{MR2105957}.
\citet*{MR2296760} proved a weaker form of the conjecture with the constant factors replaced by dimension dependent quantities.

\subsection{Our Contributions}

Outcome indistinguishability was originally proposed as a strong fairness and accuracy criterion, potentially requiring large sample sizes to achieve. In this work, we view OI differently as a meaningful notion when we have insufficient data for $\ell_1$-error-based learning.
For this reason, we focus on no-access OI, the simplest form of OI introduced by \citet{dwork2021outcome}.
For no-access OI, we show that (randomized) distinguishers can be converted to vectors in the same inner product space as the predictors (\sectionref{sec:distinguisher-function-individual}), connecting the MDA objective to the \emph{multiaccuracy} objective used by \citet{kim2019multiaccuracy}. This allows us to understand predictor classes $\cP$ and distinguisher classes $\cD$ using geometric notions such as the dual Minkowski norms $\|\cdot\|_{\mu,\cP}, \|\cdot\|_{\mu,\cD}$ and the covering numbers $N_{\mu,\cP}(\cdot,\cdot), N_{\mu,\cD}(\cdot,\cdot)$ defined by these norms (see \sectionref{sec:cover}).

In the distribution-specific setting, we consider realizable OI where the target predictor lies in an arbitrary given predictor class $\cP$. 
Setting the failure probability bound $\delta$ in \theoremref{thm:characterization} to be a constant, say $1/10$,
for every predictor class $\cP$, every distinguisher class $\cD$, every data distribution $\mu$ and every MDA bound $\varepsilon$, we characterize the sample complexity of distribution-specific realizable OI
both as
\begin{equation}
\label{eq:contributions-1}
(1/\varepsilon)^{O(1)}\log N_{\mu,\cD}(\cP ,\Theta(\varepsilon))
\end{equation}
 and as 
\begin{equation}
\label{eq:contributions-2}
(1/\varepsilon)^{O(1)}\log N_{\mu,\cP}(\cD ,\Theta(\varepsilon)).
\end{equation}

Our sample complexity characterizations \eqref{eq:contributions-1} and \eqref{eq:contributions-2} highlight an intriguing connection between learning theory and the metric entropy duality conjecture (\conjectureref{conjecture}) first proposed by \citet*{MR0361822}, which conjectures that
\[
\log N_{\mu,\cF_1}(\cF_2,\varepsilon) \le O(\log N_{\mu,\cF_2}(\cF_1,\Omega(\varepsilon)))
\]
whenever two function classes $\cF_1$ and $\cF_2$ are convex and symmetric around the origin.
Our sample complexity characterizations imply a variant version of metric entropy duality (\theoremref{thm:our-duality}) where $\cF_1$ and $\cF_2$ are not required to be convex and symmetric, which we show is nearly tight (\lemmaref{lm:tight-duality}).

In the distribution-free setting, we focus on the case where $\cP$ contains all possible predictors, which is the setting considered by \citet{dwork2021outcome}. Setting $\delta = 1/10$ in \theoremref{thm:characterization-free}, we show that the sample complexity of distribution-free OI in this setting is characterized by 
\[
(1/\varepsilon)^{O(1)}\fat_{\cD}(\Theta(\varepsilon)).
\]
This characterization extends to \emph{multicalibration} with some modifications (see \remarkref{remark:multicalibration-upper} and \remarkref{remark:multicalibration-lower}). Our result
refines the $\log |\cD|$ in the sample complexity upper bounds by \citet{dwork2021outcome} (for OI) and by \citet{hebert2018multicalibration} (for multicalibration) to the fat-shattering dimension of $\cD$ with a matching lower bound.

In addition, we show that the sample complexity of OI in the agnostic setting behaves very differently from the realizable setting. 
This is in contrast to many common learning settings where the sample complexities of realizable and agnostic learning usually behave similarly
(a recent independent work by \citet{hopkins2021realizable}
gives a unified explanation for this phenomenon).
In both the distribution-free and the distribution-specific settings, we show that the sample complexity of agnostic OI can increase when we remove some distinguishers from $\cD$, and it can become arbitrarily large even when the realizable sample complexity is bounded by a constant (\sectionref{sec:real-separation}).
This also suggests that OI can have larger sample complexity compared to $\ell_1$-error based learning in the agnostic setting. This is because in the agnostic setting, the performance of the learned predictor is measured relative to the best predictor in the class $\cP$, which can have a much better performance when measured using the selected objective of OI than using the $\ell_1$ error.
On the other hand, when the target predictor $p^*$ belongs to the symmetric convex hull of the predictor class $\cP$, we show that the sample complexity in the distribution-specific agnostic setting has the same characterizations as in the realizable setting (\lemmaref{lm:inside-hull}).

\subsection{Related Work}

The notion of outcome indistinguishability originated from the growing research of algorithmic fairness. Specifically, outcome indistinguishability can be treated as a generalization of multiaccuracy and multicalibration introduced by \citet*{hebert2018multicalibration} and \citet*{kim2019multiaccuracy}, in which the goal is to ensure that the learned predictor is accurate in expectation or calibrated conditioned on every subpopulation in a subpopulation class $\cC$.
Roughly speaking, the subpopulation class $\cC$ in multiaccuracy and multicalibration plays a similar role as the distinguisher class $\cD$ in outcome indistinguishability, and this connection has been discussed more extensively in \citep[Section 4]{dwork2021outcome}. Beyond fairness, multicalibration and OI also provide strong accuracy guarantees \citep*[see e.g.][]{rothblum2021multi, blum2020sleeping, zhao2021calibrating, gopalan2021omnipredictors,kim2022universal,burhanpurkar2021scaffolding}. 
For a general predictor class $\cP$ and a subpopulation class $\cC$,
\citet*{NEURIPS2020_9a96876e} showed sample complexity upper bounds of uniform convergence for multicalibration based on the maximum of suitable complexity measures of $\cC$ and $\cP$. They complemented this result with a lower bound which does not grow with $\cC$ and $\cP$.
In comparison, we focus on the weaker no-access OI setting where the sample complexity can be much smaller, and we provide matching upper and lower bounds in terms of the dependence on $\cD$ and $\cP$.

\subsection{Paper Organization}
The remainder of this paper is structured as follows. In \sectionref{sec:preli}, we formally define OI and related notions. We give lower and upper bounds for the sample complexity of distribution-specific realizable OI in \sectionref{sec:distribution-specific}, and turn these bounds into a characterization in \sectionref{sec:duality} via metric entropy duality. We characterize the sample complexity of distribution-free OI in \sectionref{sec:distribution-free}. Finally, in \sectionref{sec:separation} we show a strong separation between the sample complexities of realizable and agnostic OI in both the distribution-free and distribution-specific settings.

\section{Preliminaries}
\label{sec:preli}
We use $X$ to denote an arbitrary non-empty set of individuals throughout the paper.
For simplicity, whenever we say $\mu$ is a distribution over $X$, we implicitly assume that every subset of $X$ is measurable w.r.t.\ $\mu$ (this holds e.g.\ when $\mu$ is a discrete distribution), although all the results in this paper naturally extend to more general distributions under appropriate measurability assumptions. We use $\Delta_X$ to denote the set of all distributions over $X$ satisfying the implicit assumption.
For two sets $A$ and $B$, we use $B^A$ to denote the class of all functions $f:A\rightarrow B$. 
Given $r\in [0,1]$, we use $\ber(r)$ to denote the Bernoulli distribution over $\{0,1\}$ with mean $r$.
We use $\log$ to denote the base-$2$ logarithm.
\subsection{Outcome Indistinguishability}

\emph{Outcome indistinguishability} is a theoretical framework introduced by \cite{dwork2021outcome} that aims to guarantee that the outcomes produced by some learned predictor $p: X \rightarrow [0, 1]$ are indistinguishable to a predetermined class of distinguishers $\cD$ from outcomes sampled from the true probabilities for each individual defined by $p^*: X \rightarrow [0,1]$.   

The distinguishing task in outcome indistinguishability consists of drawing an individual $x \in X$ according to some population distribution $\mu\in\Delta_X$ and then presenting the distinguisher with an outcome/individual pair $(x, o)$ where $o$ is either sampled according to the true predictor $p^*$ from the Bernoulli distribution $\ber(p^*(x))$, or sampled according to the learned predictor $p$ from $\ber(p(x))$. 
Taking a pair $(x,o)\in X \times \{0,1\}$, a distinguisher $d$ outputs $d(x,o) = \textnormal{ACCEPT}$ or $d(x,o) = \textnormal{REJECT}$. 
We allow distinguishers to be randomized.

Given a predictor $p$ and a distribution $\mu$ over $X$, we define $\mu_p$ to be the distribution of pairs $(x, o) \in X \times \{0, 1\}$ drawn using the process described above such that $x \sim \mu$ and $o \sim \ber(p(x))$.
With this notation in hand, we can now provide a formal definition of outcome indistinguishability:

\begin{definition}[No-Access Outcome Indistinguishability, \cite{dwork2021outcome}]
\label{def:no-access-OI}
Let $\mu\in\Delta_X$ be a distribution over a set of individuals $X$ and $p^*: X \rightarrow [0, 1]$ be some target predictor for the set of individuals. For a class of distinguishers $\cD$ and $\varepsilon > 0$, a predictor $p: X \rightarrow [0, 1]$ satisfies $(\cD, \varepsilon)$-outcome indistinguishability (OI) if for every $d \in \cD$,
\begin{equation}
\label{eq:no-access-OI}
\left| \Pr_{(x, o) \sim \mu_{p}}\left[d(x, o) = \acc \right] - \Pr_{(x, o^*) \sim \mu_{p^*}}\left[d(x, o^*) = \acc \right]\right| \leq \varepsilon.
\end{equation}
\end{definition}

We refer to the left-hand-side of \eqref{eq:no-access-OI} as the \emph{distinguishing advantage} (or simply \emph{advantage}) of the distinguisher $d$, denoted $\adv_{\mu,d}(p, p^*)$. Given a class $\cD$ of distinguishers, we use $\adv_{\mu,\cD}(p,p^*)$ to denote the supremum $\sup_{d\in\cD}\adv_{\mu,d}(p, p^*)$. According to \definitionref{def:no-access-OI}, a learned predictor $p$ satisfies $(\cD,\varepsilon)$-OI if and only if $\adv_{\mu,\cD}(p,p^*) \le \varepsilon$.

There are many different extensions of OI to settings in which the power of distinguishers is extended to allow access to additional information about the learned predictor $p$ or access to more than one individual/outcome pair. We will be focusing on this most basic case in which the distinguisher only has access to a single pair $(x,o)$ and has no additional access to the learned predictor $p$ (hence the name \emph{No-Access OI}). 

\subsubsection{OI Algorithms}
An OI algorithm (or learner) takes examples $(x_1,o_1),\ldots,(x_n,o_n)$ drawn i.i.d.\ from $\mu_{p^*}$ and outputs a predictor $p$ that satisfies $(\cD,\varepsilon)$-OI with probability at least $1-\delta$:

\begin{definition}[No-Access OI Algorithms]
Given a target predictor $p^*\in[0,1]^X$, a class of distinguishers $\cD$, a distribution $\mu\in \Delta_X$, an advantage bound $\varepsilon \ge 0$, a failure probability bound $\delta \ge 0$, and a nonnegative integer $n$, we use $\oi_{n}(p^*,\cD,\varepsilon,\delta,\mu)$ to denote the set of all (possibly randomized and inefficient) algorithms $\cA$ with the following properties:
\begin{enumerate}
\item $\cA$ takes $n$ examples $(x_1,o_1),\ldots,(x_n,o_n)\in X\times\{0,1\}$ as input and outputs a predictor $p\in [0,1]^X$;
\item when the input examples are drawn i.i.d.\ from $\mu_{p^*}$, with probability at least $1-\delta$ the output predictor $p$ satisfies $\adv_{\mu,\cD}(p,p^*) \le \varepsilon$.
\end{enumerate}
\end{definition}

When seeking to learn an outcome indistinguishable predictor, there are two different tasks that we can consider. On one hand, in what we call the \emph{realizable} case, we assume that the target predictor $p^*$ lies in a known predictor class $\cP\subseteq[0,1]^X$, and we seek to achieve low distinguishing advantages over all distinguishers in the class $\cD$.\footnote{
Throughout the paper, we implicitly assume that all predictor classes and distinguisher classes are non-empty.
} 
Alternatively, in the \emph{agnostic} case, we can imagine a situation in which nothing is known about the target predictor $p^*$, but the performance of the learned predictor is measured relative to the best predictor in $\cP$. In both the agnostic and realizable settings, we can also specify whether we measure performance on the worst-case distribution $\mu$ over individuals in $X$, or on some particular distribution $\mu$ given to the learner. We call these the \emph{distribution-free} and \emph{distribution-specific} settings, respectively. 

\begin{definition}[Algorithms in Various Settings]
Given a predictor class $\cP$, a class of distinguishers $\cD$, a distribution $\mu$, an advantage bound $\varepsilon$, a failure probability bound $\delta$, and a nonnegative integer $n$, we define the sets of algorithms that solve various OI tasks using $n$ examples as follows:
\begin{align*}
& \dsr_{n}(\cP,\cD,\varepsilon,\delta,\mu)  = {\bigcap}_{p^*\in\cP}\oi_n(p^*,\cD,\varepsilon,\delta,\mu), \tag{distribution-specific realizable OI}\\
& \dsa_{n}(\cP,\cD,\varepsilon,\delta,\mu)  = {\bigcap}_{p^*\in[0,1]^X}\oi_n(p^*,\cD,\inf_{p\in\cP}\adv_{\mu,\cD}(p,p^*) + \varepsilon,\delta,\mu), \tag{distribution-specific agnostic OI}\\
& \dfr_{n}(\cP,\cD,\varepsilon,\delta)  = {\bigcap}_{\mu'\in\Delta_X}\dsr_n(\cP,\cD,\varepsilon,\delta,\mu'), \tag{distribution-free realizable OI}\\
& \dfa_{n}(\cP,\cD,\varepsilon,\delta)  = {\bigcap}_{\mu'\in\Delta_X}\dsa_n(\cP,\cD,\varepsilon, \delta,\mu'). \tag{distribution-free agnostic OI}
\end{align*}
\end{definition}

We note that while these learning goals are all defined with respect to some predictor class $\cP$, this class simply constrains the possible values of the target predictor $p^*$ (or in the agnostic case, constrains the predictors used to measure the performance of the returned predictor). In particular we do not require any of the OI algorithms to be proper, i.e.\ always output some $p \in \cP$, despite the fact that some of the algorithms discussed in our proofs happen to satisfy this property.

\begin{definition}[Sample complexity]
Given a predictor class $\cP$, a class of distinguishers $\cD$, a distribution $\mu$, an advantage bound $\varepsilon$, a failure probability bound $\delta$,
we define the sample complexity of various OI tasks as follows:
\begin{align*}
& \sampdsr(\cP,\cD,\varepsilon,\delta,\mu) = \inf\{n\in\bZ_{\ge 0}: \dsr_n(\cP,\cD,\varepsilon,\delta,\mu) \ne \emptyset\},\\
& \sampdsa(\cP,\cD,\varepsilon,\delta,\mu) = \inf\{n\in\bZ_{\ge 0}: \dsa_n(\cP,\cD,\varepsilon,\delta,\mu) \ne \emptyset\},\\
& \sampdfr(\cP,\cD,\varepsilon,\delta) = \inf\{n\in\bZ_{\ge 0}: \dfr_n(\cP,\cD,\varepsilon,\delta) \ne \emptyset\},\\
& \sampdfa(\cP,\cD,\varepsilon,\delta) = \inf\{n\in\bZ_{\ge 0}: \dfa_n(\cP,\cD,\varepsilon,\delta) \ne \emptyset\}.
\end{align*}
\end{definition}
It is clear from the definition that the following monotonicity properties hold: for $\cP' \subseteq \cP,\cD' \subseteq \cD, \varepsilon' \ge \varepsilon, \delta'\ge \delta$,
\begin{align}
& \sampdsr(\cP',\cD',\varepsilon',\delta',\mu) \le \sampdsr(\cP,\cD,\varepsilon,\delta,\mu),\notag \\
& \sampdsa(\cP',\cD,\varepsilon',\delta',\mu) \le \sampdsa(\cP,\cD,\varepsilon,\delta,\mu),\notag \\
& \sampdfr(\cP',\cD',\varepsilon',\delta') \le \sampdfr(\cP,\cD,\varepsilon,\delta), \notag\\
& \sampdfa(\cP',\cD,\varepsilon',\delta') \le \sampdfa(\cP,\cD,\varepsilon,\delta). \label{eq:samp-monotone-1}
\end{align}
Note that the sample complexities in the agnostic setting are not guaranteed to be monotone w.r.t.\ $\cD$ (see \sectionref{sec:real-separation}). It is also clear from definition that
\begin{align}
& \sampdsr(\cP,\cD,\varepsilon,\delta,\mu) \le \sampdfr(\cP,\cD,\varepsilon,\delta),\notag\\
& \sampdsa(\cP,\cD,\varepsilon,\delta,\mu) \le \sampdfa(\cP,\cD,\varepsilon,\delta). \label{eq:samp-monotone-2}
\end{align}

\subsubsection{No-Access Distinguishers as Functions of Individuals}
\label{sec:distinguisher-function-individual}

In the standard definition of OI, distinguishers are thought of as randomized algorithms that take as input an individual-outcome pair $(x,o)\in X\times \{0,1\}$ and output \acc\ or \rej . However, there is a natural way to transform every no-access distinguisher $d$ into a function $f_d$ that maps every individual $x \in X$ to a label $y \in [-1, 1]$ in such a way that the distinguishing advantage of $d$ can be recovered from $f_d$. 

In particular, given a randomized distinguisher $d$, we define $f_d: X \rightarrow [-1, 1]$ such that 
\[
f_d(x) = \Pr[d(x, 1) = \acc] - \Pr[d(x, 0) = \acc] \quad \textnormal{for all}\ x\in X.
\]
Given the function $f_d$, we show that we can always recover the distinguishing advantage of the original distinguisher $d$:

\begin{lemma}
\label{lm:distinguisher-transform}
For any two predictors $p_1, p_2: X \rightarrow [0,1]$,
$$\adv_{\mu,d}(p_1,p_2) = |\bE_{x\sim \mu}[f_d(x)(p_1(x) - p_2(x))]|.$$
\end{lemma}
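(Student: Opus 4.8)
The plan is to directly expand both acceptance probabilities appearing in the definition of $\adv_{\mu,d}(p_1,p_2)$ by conditioning first on the individual $x$ and then on the sampled outcome $o$, and then to recognize the resulting difference as the claimed expectation involving $f_d$.

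First I would write, for any predictor $p\in[0,1]^X$,
\[
\Pr_{(x,o)\sim\mu_p}[d(x,o)=\acc] = \bE_{x\sim\mu}\Big[\bE_{o\sim\ber(p(x))}\big[\Pr[d(x,o)=\acc]\big]\Big],
\]
where the innermost probability is taken over the internal randomness of the (possibly randomized) distinguisher $d$. Since $o$ is supported on $\{0,1\}$, the inner expectation over $o\sim\ber(p(x))$ collapses to the affine expression
\[
\bE_{o\sim\ber(p(x))}\big[\Pr[d(x,o)=\acc]\big] = p(x)\Pr[d(x,1)=\acc] + (1-p(x))\Pr[d(x,0)=\acc].
\]

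Next I would apply this identity with $p=p_1$ and with $p=p_2$ and subtract, using linearity of expectation over $x\sim\mu$. The contributions $\Pr[d(x,0)=\acc]$ that do not involve the predictor cancel in the right combination, leaving
\[
\Pr_{(x,o)\sim\mu_{p_1}}[d(x,o)=\acc] - \Pr_{(x,o)\sim\mu_{p_2}}[d(x,o)=\acc] = \bE_{x\sim\mu}\Big[(p_1(x)-p_2(x))\big(\Pr[d(x,1)=\acc]-\Pr[d(x,0)=\acc]\big)\Big].
\]
By the definition of $f_d$, the parenthesized quantity is exactly $f_d(x)$, so the difference equals $\bE_{x\sim\mu}[f_d(x)(p_1(x)-p_2(x))]$. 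Taking absolute values of both sides yields the lemma.

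There is essentially no serious obstacle here; the only points that need a little care are bookkeeping the three independent sources of randomness (the draw $x\sim\mu$, the outcome $o\sim\ber(p(x))$, and the internal coins of $d$) so that the tower rule applies under the standing measurability assumptions, and observing that because the outcome is binary the expectation over $o$ becomes \emph{affine} in $p(x)$ — which is precisely what lets us factor out $p_1(x)-p_2(x)$ and obtain a linear functional of the predictor.
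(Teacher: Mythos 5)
Your proof is correct and follows essentially the same route as the paper: both arguments exploit that the acceptance probability is affine in the binary outcome and apply the tower rule over $x\sim\mu$, the outcome draw, and the internal coins of $d$. The only cosmetic difference is that you explicitly expand $\bE_{o\sim\ber(p(x))}[\Pr[d(x,o)=\acc]]$ for each predictor and cancel, whereas the paper first records the pointwise identity $\Pr[d(x,o_1)=\acc]-\Pr[d(x,o_2)=\acc]=f_d(x)(o_1-o_2)$ and then takes expectation over a coupled draw $o_1\sim\ber(p_1(x))$, $o_2\sim\ber(p_2(x))$; the two are algebraically equivalent.
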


\begin{proof}
For any $x \in X$ and any two possible outcomes $(o_1,o_2)\in \{(0, 0), (0,1), (1, 0), (1, 1)\}$, it is easily verifiable that 
$$\Pr[d(x, o_1) = \acc] - \Pr[d(x, o_2) = \acc] = f_d(x)(o_1 - o_2),$$
where the probabilities are over the internal randomness of the distinguisher $d$.
The lemma is proved by the following chain of equations:
\begin{align*}
&\adv_{\mu,d}(p_1,p_2) \\
= {} & \left|\Pr_{(x, o) \sim \mu_{p_1}}\left[d(x, o) = \acc\right] - \Pr_{(x, o) \sim \mu_{p_2}}\left[d(x, o) = \acc\right]\right|\\
= {} & \left|\bE\limits_{x\sim\mu, o_1\sim \ber(p_1(x)), o_2\sim\ber(p_2(x))}[\Pr[d(x, o_1) = \acc] - \Pr[d(x, o_2) = \acc]]\right|\\
= {} & \left|\bE\limits_{x\sim\mu, o_1\sim \ber(p_1(x)), o_2\sim\ber(p_2(x))}[f_d(x)(o_1 - o_2)]\right|\\
= {} & \left|\bE\limits_{x\sim\mu}[f_d(x)(p_1(x) - p_2(x))]\right|.
\ifdefined\preprint
\qedhere
\fi
\end{align*}
\end{proof}
Note that the transformation from a distinguisher $d$ to the function $f_d\in[-1,1]^X$ is onto: given any function $f\in[-1,1]^X$, we can construct a distinguisher $d$ such that $f_d = f$ in the following way: if $f(x) \ge 0$, distinguisher $d$ accepts $(x,1)$ with probability $f(x)$, and accepts $(x,0)$ with probability $0$; if $f(x) < 0$, distinguisher $d$ accepts $(x,0)$ with probability $-f(x)$, and accepts $(x,1)$ with probability $0$.

\lemmaref{lm:distinguisher-transform} shows that all distinguishers $d$ mapped to the same function $f_d$ have equal distinguishing advantages.
It also shows that no-access OI has the same error objective as \emph{multiaccuracy} considered by \citet{kim2019multiaccuracy}.
From now on, 
when we refer to a distinguisher $d$, it should be interpreted as the function $f_d\in [-1,1]^X$. Similarly, 
we think of a distinguisher class $\cD$ as a non-empty subset of $[-1,1]^X$.

\subsection{Inner Product, Norm, and Covering Number}
\label{sec:cover}
The set $\bR^X$ of all real-valued functions on $X$ is naturally a linear space: for all $f_1,f_2\in\bR^X$, we define $f_1 + f_2 = g \in\bR^X$ where $g(x) = f_1(x) + f_2(x)$ for all $x\in X$, and for $f\in\bR^X$ and $r\in\bR$, we define $r f = h\in\bR^X$ where $h(x) = r f(x)$ for all $x\in X$.

For any function class $\cF\subseteq \bR^X$ and a real number $r$, we use $r\cF$ to denote $\{rf: f\in\cF\}$ and use $-\cF$ to denote $(-1)\cF$. For any two function classes $\cF_1,\cF_2\subseteq \bR^X$, we define
\begin{align*}
\cF_1 + \cF_2 &= \{f_1 + f_2: f_1\in\cF_1, f_2\in\cF_2\}, \textnormal{and}\\
\cF_1 - \cF_2 &= \{f_1 - f_2: f_1\in\cF_1, f_2\in\cF_2\}.
\end{align*}

For any non-empty function class $\cF\subseteq \bR^X$, we say a function $f\in\bR^X$ is in the convex hull of $\cF$ if there exist $n\in\bZ_{>0}$, $f_1,\ldots,f_n\in\cF$ and $r_1,\ldots,r_n\in\bR_{\ge 0}$ such that $r_1 + \cdots + r_n = 1$ and $f = r_1f_1 + \cdots + r_n f_n$. We use $\bar \cF$ to denote the symmetric convex hull of $\cF$ consisting of all functions in the convex hull of $\cF\cup (-\cF)$. When $\cF$ is empty, we define $\bar\cF$ to be the class consisting of only the all zeros function.

We say a function $f\in\bR^X$ is bounded if there exists $M > 0$ such that $|f(x)| \le M$ for all $x\in X$. We say a function class $\cF\subseteq \bR^X$ is bounded if there exists $M > 0$ such that $|f(x)|\le M$ for all $f\in\cF$ and $x\in X$.

For two bounded functions $f_1,f_2\in \bR^X$, we define their inner product w.r.t.\ distribution $\mu$ as
\[
\langle f_1,f_2 \rangle_\mu = \bE_{x\sim \mu}[f_1(x)f_2(x)].
\]
Although we call the above quantity an inner product for simplicity, it may not be positive definite ($\langle f, f\rangle_\mu = 0$ need not imply $f(x) = 0$ for all $x\in X$).
For a non-empty bounded function class $\cF_1\subseteq \bR^X$ and a bounded function $f\in\bR^X$, we define the dual Minkowski norm of $f$ w.r.t.\ $\cF_1$ to be
\[
\|f\|_{\mu,\cF_1} = \sup_{f_1\in \cF_1}|\langle f, f_1\rangle_{\mu}|.
\]
If $\cF_1$ is empty, we define $\|f\|_{\mu,\cF_1} = 0$.
The norm $\|\cdot\|_{\mu,\cF_1}$ is technically only a semi-norm as it may not be positive definite, but whenever it is positive definite, it is the dual norm of the Minkowski norm induced by (the closure of) $\bar\cF_1$ for finite $X$ \citep[see e.g.][Section 2.1]{MR3210796}.

For a non-empty bounded function class $\cF_2\subseteq \bR^X$ and $\varepsilon \ge 0$, we say a subset $\cF_2'\subseteq \cF_2$ is an $\varepsilon$-covering of $\cF_2$ w.r.t.\ the norm $\|\cdot \|_{\mu,\cF_1}$ if for every $f_2\in\cF_2$, there exists $f_2'\in\cF_2'$ such that $\|f_2 - f_2'\|_{\mu,\cF_1}\le \varepsilon$.
We define the covering number $N_{\mu,\cF_1}(\cF_2,\varepsilon)$ of $\cF_2$ w.r.t.\ the norm $\|\cdot \|_{\mu,\cF_1}$ to be the minimum size of such an $\varepsilon$-covering of $\cF_2'$. We refer to the logarithm of the covering number, $\log N_{\mu,\cF_1}(\cF_2,\varepsilon)$, as the metric entropy.

The following basic facts about the covering number are very useful:
\begin{lemma}
\label{lm:covering-basics}
We have the following facts:
\begin{enumerate}
\item \label{item:basics-monotone} if $\hat \cF_1\subseteq \cF_1\subseteq \bR^X$, then $N_{\mu,\hat \cF_1}(\cF_2,\varepsilon) \le N_{\mu,\cF_1}(\cF_2,\varepsilon)$;
\item \label{item:basics-hull} $N_{\mu,\cF_1}(\cF_2,\varepsilon) = N_{\mu,\bar\cF_1}(\cF_2,\varepsilon)$;
\item \label{item:basics-shift} for every bounded function $f\in\bR^X$, $N_{\mu,\cF_1}(\cF_2 + \{f\},\varepsilon) = N_{\mu,\cF_1}(\cF_2,\varepsilon)$;
\item \label{item:basics-homogeneity} for every $a, b\in\bR_{>0}$, $N_{\mu,a\cF_1}(b\cF_2,ab\varepsilon) = N_{\mu,\cF_1}(\cF_2,\varepsilon)$.
\end{enumerate}
\end{lemma}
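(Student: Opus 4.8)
The plan is to reduce every statement about covering numbers to a statement about the semi-norms $\|\cdot\|_{\mu,\cF_1}$, since an $\varepsilon$-covering is defined purely in terms of one such semi-norm together with the subset relation $\cF_2'\subseteq\cF_2$, and then to use explicit size-preserving bijections where the two function classes being covered differ only by a translation or a positive scaling. For part~\ref{item:basics-monotone}, I would first note that $\hat\cF_1\subseteq\cF_1$ gives $\|f\|_{\mu,\hat\cF_1}\le\|f\|_{\mu,\cF_1}$ for every bounded $f\in\bR^X$, because the supremum defining the left-hand side ranges over a subset of the index set of the right-hand side (and when $\hat\cF_1=\emptyset$ the left-hand side is $0$ by convention, so the inequality is trivial). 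Hence any $\varepsilon$-covering of $\cF_2$ w.r.t.\ $\|\cdot\|_{\mu,\cF_1}$ is still an $\varepsilon$-covering w.r.t.\ $\|\cdot\|_{\mu,\hat\cF_1}$, so the minimum covering size can only decrease, which is the claimed inequality.

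For part~\ref{item:basics-hull}, the key step is to show that the two semi-norms coincide, i.e.\ $\|f\|_{\mu,\cF_1}=\|f\|_{\mu,\bar\cF_1}$ for every bounded $f$. The direction ``$\le$'' is immediate from $\cF_1\subseteq\bar\cF_1$ and the monotonicity observation above. For ``$\ge$'', I would take an arbitrary $g\in\bar\cF_1$, write $g=\sum_{i=1}^n r_i g_i$ with $g_i\in\cF_1\cup(-\cF_1)$, $r_i\ge 0$, and $\sum_i r_i=1$, and then bound $|\langle f,g\rangle_\mu|\le\sum_i r_i|\langle f,g_i\rangle_\mu|\le\sum_i r_i\|f\|_{\mu,\cF_1}=\|f\|_{\mu,\cF_1}$, using linearity of $\langle f,\cdot\rangle_\mu$, the triangle inequality, and the identity $|\langle f,-f_1\rangle_\mu|=|\langle f,f_1\rangle_\mu|$; taking the supremum over $g\in\bar\cF_1$ completes the equality. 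The degenerate case $\cF_1=\emptyset$ amounts to comparing $\|\cdot\|_{\mu,\emptyset}=0$ with $\|\cdot\|_{\mu,\{0\}}=0$, using the convention $\bar\emptyset=\{0\}$ from \sectionref{sec:cover}. Once the two semi-norms are identical, the families of $\varepsilon$-coverings of $\cF_2$ w.r.t.\ each are literally the same, so the covering numbers agree.

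For parts~\ref{item:basics-shift} and~\ref{item:basics-homogeneity}, I would exhibit explicit bijections that preserve both cardinality and the covering property. In part~\ref{item:basics-shift}, the translation $f_2\mapsto f_2+f$ is a bijection from $\cF_2$ onto $\cF_2+\{f\}$ (which is again bounded since $f$ is bounded) carrying subsets to subsets, and $\|(f_2+f)-(f_2'+f)\|_{\mu,\cF_1}=\|f_2-f_2'\|_{\mu,\cF_1}$, so $\cF_2'$ is an $\varepsilon$-covering of $\cF_2$ if and only if $\cF_2'+\{f\}$ is an $\varepsilon$-covering of $\cF_2+\{f\}$; taking minima gives the equality. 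In part~\ref{item:basics-homogeneity}, the scaling $f_2\mapsto bf_2$ is a bijection from $\cF_2$ onto $b\cF_2$ (using $b>0$), and bilinearity of $\langle\cdot,\cdot\rangle_\mu$ yields $\|bf_2-bf_2'\|_{\mu,a\cF_1}=ab\,\|f_2-f_2'\|_{\mu,\cF_1}$, so the condition $\|bf_2-bf_2'\|_{\mu,a\cF_1}\le ab\varepsilon$ is equivalent to $\|f_2-f_2'\|_{\mu,\cF_1}\le\varepsilon$; hence $b\cF_2'$ is an $ab\varepsilon$-covering of $b\cF_2$ w.r.t.\ $\|\cdot\|_{\mu,a\cF_1}$ exactly when $\cF_2'$ is an $\varepsilon$-covering of $\cF_2$ w.r.t.\ $\|\cdot\|_{\mu,\cF_1}$, and the $\cF_1=\emptyset$ case is trivial since both semi-norms vanish.

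None of the four parts presents a real obstacle; the only place needing more than a one-line observation is the reverse inequality $\|f\|_{\mu,\bar\cF_1}\le\|f\|_{\mu,\cF_1}$ in part~\ref{item:basics-hull}, via the convex-combination estimate above, and the only subtlety is bookkeeping the degenerate cases where $\cF_1$ or $\cF_2$ is empty, which are dispatched directly from the conventions $\|\cdot\|_{\mu,\emptyset}=0$ and $\bar\emptyset=\{0\}$.
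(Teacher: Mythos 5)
The paper states this lemma without proof, treating the four facts as routine consequences of the definitions, so there is no proof of record to compare against. Your argument is correct and fills in exactly the expected details: part~\ref{item:basics-monotone} from the pointwise inequality $\|\cdot\|_{\mu,\hat\cF_1}\le\|\cdot\|_{\mu,\cF_1}$, part~\ref{item:basics-hull} from the identity $\|\cdot\|_{\mu,\cF_1}=\|\cdot\|_{\mu,\bar\cF_1}$ established via the convex-combination/triangle-inequality estimate, and parts~\ref{item:basics-shift} and~\ref{item:basics-homogeneity} from the size-preserving bijections $f_2\mapsto f_2+f$ and $f_2\mapsto bf_2$ together with translation invariance and positive homogeneity of the semi-norm. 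The handling of the empty-class conventions $\|\cdot\|_{\mu,\emptyset}=0$ and $\bar\emptyset=\{0\}$ is also correct.
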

\subsection{Distinguishing Advantages as Inner Products and Norms}
The inner products and norms provide convenient ways for describing distinguishing advantages. Given a distinguisher $d\in [-1,1]^X$ and two predictors $p_1,p_2\in[0,1]^X$, \lemmaref{lm:distinguisher-transform} tells us that
\begin{align*}
& \adv_{\mu,d}(p_1,p_2) = |\langle  d, p_1 - p_2 \rangle_\mu|, \quad \textnormal{and thus}\\
& \adv_{\mu,\cD}(p_1,p_2) = \|p_1 - p_2\|_{\mu,\cD} \quad \textnormal{for all distinguisher classes}\ \cD\subseteq[-1,1]^X.
\end{align*}
Using these representations for advantages, we can prove the following lemma relating advantages to the $\ell_1$ error:
\begin{lemma}
\label{lm:advantage-l1}
It holds that $\adv_{\mu,\cD}(p_1,p_2) = \|p_1 - p_2\|_{\mu,\cD} \le \bE_{x\sim\mu}[|p_1(x) - p_2(x)|]$. Moreover, when $\{-1,1\}^X\subseteq \cD$, $\adv_{\mu,\cD}(p_1,p_2) = \bE_{x\sim\mu}[|p_1(x) - p_2(x)|]$.
\end{lemma}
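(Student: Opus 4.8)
The plan is to prove the two parts separately, both relying on the inner-product representation $\adv_{\mu,\cD}(p_1,p_2) = \|p_1 - p_2\|_{\mu,\cD} = \sup_{d\in\cD}|\langle d, p_1-p_2\rangle_\mu|$ established just above the statement, together with \lemmaref{lm:distinguisher-transform}. For the inequality $\|p_1-p_2\|_{\mu,\cD}\le \bE_{x\sim\mu}[|p_1(x)-p_2(x)|]$, I would fix an arbitrary $d\in\cD$ and bound
\[
|\langle d, p_1-p_2\rangle_\mu| = \bigl|\bE_{x\sim\mu}[d(x)(p_1(x)-p_2(x))]\bigr| \le \bE_{x\sim\mu}\bigl[|d(x)|\,|p_1(x)-p_2(x)|\bigr] \le \bE_{x\sim\mu}[|p_1(x)-p_2(x)|],
\]
using Jensen/triangle inequality for the first step and $|d(x)|\le 1$ (which holds because $\cD\subseteq[-1,1]^X$ after the transformation in \sectionref{sec:distinguisher-function-individual}) for the second. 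Taking the supremum over $d\in\cD$ gives the claim.

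For the ``moreover'' part, assuming $\{-1,1\}^X\subseteq\cD$, it suffices by the first part to exhibit a single distinguisher in $\cD$ achieving advantage equal to $\bE_{x\sim\mu}[|p_1(x)-p_2(x)|]$. The natural choice is the sign function $d^*(x) = \mathrm{sign}(p_1(x)-p_2(x))$, with ties (where $p_1(x)=p_2(x)$) broken arbitrarily to a value in $\{-1,1\}$; this $d^*$ lies in $\{-1,1\}^X\subseteq\cD$. Then
\[
\langle d^*, p_1-p_2\rangle_\mu = \bE_{x\sim\mu}[\mathrm{sign}(p_1(x)-p_2(x))(p_1(x)-p_2(x))] = \bE_{x\sim\mu}[|p_1(x)-p_2(x)|],
\]
so $\adv_{\mu,d^*}(p_1,p_2) = \bE_{x\sim\mu}[|p_1(x)-p_2(x)|]$, and combined with the upper bound from the first part this forces equality.

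I don't anticipate a genuine obstacle here; the only point requiring a little care is the measurability of $d^*$, i.e.\ whether $\mathrm{sign}(p_1-p_2)$ is a legitimate member of $\{-1,1\}^X$ under whatever measurability conventions are in force. Under the paper's standing assumption (stated in \sectionref{sec:preli}) that every subset of $X$ is measurable w.r.t.\ $\mu$, this is automatic, so no extra argument is needed. One could alternatively avoid even mentioning this by noting $\{-1,1\}^X\subseteq\cD$ literally contains \emph{all} $\pm1$-valued functions, so $d^*$ is in $\cD$ by definition. I would write the proof in this second style to keep it maximally clean.
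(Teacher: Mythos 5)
Your proof is correct and follows essentially the same route as the paper: bound $|d(x)(p_1(x)-p_2(x))|$ pointwise by $|p_1(x)-p_2(x)|$ for the inequality, and use the sign distinguisher $d^*\in\{-1,1\}^X$ for the equality when $\{-1,1\}^X\subseteq\cD$. The paper's choice is $d(x)=1$ iff $p_1(x)\ge p_2(x)$ and $d(x)=-1$ otherwise, which is the same as your $\mathrm{sign}$ with a fixed tie-breaking rule, and the measurability remark is indeed unnecessary given the standing assumption.
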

\begin{proof}
To prove the first statement, we recall that
$$\adv_{\mu, \cD} (p_1,p_2) = \|p_1 - p_2\|_{\mu,\cD} = \sup_{d \in \cD} |\bE_{x\sim\mu}[(p_1(x) - p_2(x))d(x)]|.$$
Because $d(x) \in [-1, 1]$ for all $d\in\cD$ and $x\in X$, we are guaranteed that $|(p_1(x) - p_2(x))d(x)| \leq |p_1(x) - p_2(x)|$, which gives
$$\sup_{d \in \cD} |\bE_{x\sim\mu}[(p_1(x) - p_2(x))d(x)]| \le \sup_{d \in \cD} \bE_{x\sim\mu}[|(p_1(x) - p_2(x))d(x)|] \leq \bE_{x\sim\mu}[|p_1(x) - p_2(x)|],$$
as desired. 

For the second statement, consider the distinguisher $d$ defined such that $d(x) = 1$ if $p_1(x) \ge p_2(x)$ and $d(x) = -1$ otherwise. For all $x\in X$, distinguisher $d$ satisfies 
\[
(p_1(x) - p_2(x))d(x) = |p_1(x) - p_2(x)|.
\]
Therefore,
$$\adv_{\mu, d} (p_1,p_2) = |\bE_{x\sim\mu}[(p_1(x) - p_2(x))d(x)]| = \bE_{x\sim\mu}[|p_1(x) - p_2(x)|].$$
Since $d\in \{-1, 1\}^X\subseteq \cD$,
this proves the second statement.
\end{proof}
\subsection{Fat-Shattering Dimension}
\label{sec:fat}
Given a function class $\cF\subseteq \bR^X$ and $\gamma \ge 0$, we say $x_1,\ldots,x_n\in X$ are $\gamma$-fat shattered by $\cF$ w.r.t.\ $r_1,\ldots,r_n\in\bR$ if for every $(b_1,\ldots,b_n)\in\{-1,1\}^n$, there exists $f\in\cF$ such that
\[
b_i(f(x_i) - r_i) \ge \gamma \quad \textnormal{for all}\ i \in\{1,\ldots,n\}.
\]
We sometimes omit the mention of $r_1,\ldots,r_n$ and say $x_1,\ldots,x_n$ is $\gamma$-fat shattered by $\cF$ if such $r_1,\ldots,r_n$ exist.
The $\gamma$-fat-shattering dimension of $\cF$ introduced first by \citet{MR1292865} is defined to be
\[
\fat_\cF(\gamma) = \sup\{n\in\bZ_{\ge 0}: \textnormal{there exist $x_1,\ldots,x_n\in X$ that are $\gamma$-fat shattered by $\cF$}\}.
\]

\section{Sample Complexity of Distribution-Specific Realizable OI}
\label{sec:distribution-specific}
In this section,
we give lower and upper bounds for the sample complexity of distribution-specific realizable OI for every given predictor class $\cP$, distinguisher class $\cD$, and distribution $\mu$ over individuals. Our lower bound is based on the metric entropy of $\cP$ w.r.t.\ the norm $\|\cdot\|_{\mu,\cD}$, whereas in our upper bound the roles of $\cP$ and $\cD$ are flipped. In the next section, we remove this role flip and give a complete characterization of the sample complexity using a version of metric entropy duality implied from combining our lower and upper bounds.

\subsection{Lower Bound}
We prove the following lemma showing that the sample complexity of distribution-specific realizable OI is lower bounded by the metric entropy of the predictor class $\cP$ w.r.t.\ the dual Minkowski norm $\|\cdot\|_{\mu,\cD}$ defined by the distinguisher class $\cD$. This lemma generalizes \citep[Lemma 4.8]{MR1122796}, which considered the special case where every predictor is a binary classifier, and the distinguisher class $\cD$ contains all possible distinguishers ($\cD = [-1,1]^X$).
\begin{lemma}
\label{lm:OI-sample-lower}
For every predictor class $\cP\subseteq[0,1]^X$, every distinguisher class $\cD\subseteq [-1,1]^X$, every distribution $\mu\in\Delta_X$, every advantage bound $\varepsilon > 0$, and every failure probability bound $\delta \in (0,1)$,
the following sample complexity lower bound holds for distribution-specific realizable OI:
\[
\sampdsr(\cP,\cD,\varepsilon,\delta,\mu) \ge \log ( (1 - \delta)N_{\mu,\cD}(\cP,2\varepsilon)).
\]
\end{lemma}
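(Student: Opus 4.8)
The plan is to prove the lower bound by a counting argument: if an OI algorithm uses very few examples, then it cannot produce predictors that are "spread out" enough to $2\varepsilon$-cover $\cP$ in the norm $\|\cdot\|_{\mu,\cD}$, but on the other hand the algorithm, run against each possible target $p^*\in\cP$, must (with probability $\ge 1-\delta$) output something within $\varepsilon$ of $p^*$ in that norm. The key geometric fact is the triangle inequality for $\|\cdot\|_{\mu,\cD}$ combined with the identity $\adv_{\mu,\cD}(p,p^*) = \|p - p^*\|_{\mu,\cD}$: if the algorithm's output is $\varepsilon$-close (in advantage, hence in norm) to $p^*$, then two targets $p_1^*, p_2^*$ that both have a decent chance of producing the same output must satisfy $\|p_1^* - p_2^*\|_{\mu,\cD} \le 2\varepsilon$, i.e. they lie in a common $2\varepsilon$-ball.

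Concretely, first I would fix a maximal $2\varepsilon$-separated subset $\cP_0 \subseteq \cP$ with respect to $\|\cdot\|_{\mu,\cD}$ (or, to be careful with the definition of covering given in the paper, a subset realizing the covering number in a suitable sense); standard packing/covering comparison gives $|\cP_0| \ge N_{\mu,\cD}(\cP, 2\varepsilon)$ up to the usual factor, but it is cleaner to argue directly that we may take a set of $m := N_{\mu,\cD}(\cP, 2\varepsilon)$ elements of $\cP$ that are pairwise at distance $> 2\varepsilon$ — since if fewer than that many points were pairwise $>2\varepsilon$-separated we could not cover $\cP$ by $2\varepsilon$-balls around them (maximality of the packing forces it to be a cover). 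Then, suppose for contradiction that $n = \sampdsr(\cP,\cD,\varepsilon,\delta,\mu)$ examples suffice, so there is an algorithm $\cA \in \bigcap_{p^*\in\cP}\oi_n(p^*,\cD,\varepsilon,\delta,\mu)$.

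The core probabilistic step is a "no algorithm can serve too many far-apart targets on a small sample" argument. Consider running $\cA$ on a sample of size $n$ drawn from $\mu_{p^*}$ for each $p^* \in \cP_0$. Because $\cA$'s output (a predictor) is, with probability $\ge 1-\delta$, within advantage $\varepsilon$ of $p^*$, and because any two distinct $p_1^*,p_2^*\in\cP_0$ have $\|p_1^*-p_2^*\|_{\mu,\cD} > 2\varepsilon$, the triangle inequality shows that for fixed output randomness the "good" output sets for different targets are disjoint. The trick (as in \citep[Lemma 4.8]{MR1122796}) is to bound the number of distinct samples that can arise: on a sample $(x_1,o_1),\dots,(x_n,o_n)$, the algorithm's behavior depends only on that sample, and there are at most... — here one uses that it suffices to consider the \emph{outcome} bits, or more precisely one passes to a uniformly random $p^*$ from $\cP_0$ and shows that the mutual information between $p^*$ and the sample is at most $n$ bits (each example contributes at most one bit of information about which of the $m$ targets generated it, since $o_i\in\{0,1\}$), so if $2^n < (1-\delta)m$ the algorithm cannot succeed with probability $1-\delta$ simultaneously on all targets — giving $n \ge \log((1-\delta)m) = \log((1-\delta)N_{\mu,\cD}(\cP,2\varepsilon))$.

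The main obstacle I expect is making the information-theoretic / counting step fully rigorous in this general predictor setting: in \citep{MR1122796} the targets were binary classifiers and the sample $(x_i, h^*(x_i))$ is a deterministic function of $x_i$ given $h^*$, so one counts the at most $2^n$ possible label patterns directly; here the outcomes $o_i \sim \ber(p^*(x_i))$ are random, so $p^*$ is not determined by the sample and one genuinely needs the information-theoretic packaging (Fano-type inequality, or a direct averaging argument: picking $p^*$ uniformly from $\cP_0$, the expected success probability is $\le 2^n/m$ because for each fixed sample realization at most one target's "success event" can be satisfied and the sample takes at most... values — actually the clean route is Fano/data-processing since the sample is not finite-valued). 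I would handle this by: pick $p^*$ uniform on $\cP_0$; the learner must recover $p^*$ (in the sense of landing in its $2\varepsilon$-ball, and balls are disjoint) with probability $\ge 1-\delta$; by the data-processing inequality $I(p^*; \text{output}) \le I(p^*; \text{sample}) \le \sum_i I(p^*; (x_i,o_i)) \le n$ (each coordinate revealing at most one bit since conditioned on $x_i$ the pair adds only the bit $o_i$; and $x_i$ is independent of $p^*$); then Fano gives $(1-\delta)\log m \le I(p^*;\text{output}) + 1$-ish — I would instead use the sharper elementary bound that success probability $\le 2^n / m$ to land exactly the stated constant, closing the argument.
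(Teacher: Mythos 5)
Your overall strategy is the same as the paper's: replace $\cP$ by a maximal $2\varepsilon$-separated subset $\cP_0$ of size at least $N_{\mu,\cD}(\cP,2\varepsilon)$ (the maximal-packing-is-a-cover argument you sketch is correct), draw $p^*$ uniformly from $\cP_0$, note that by the triangle inequality for $\|\cdot\|_{\mu,\cD}$ a single output predictor can be $\varepsilon$-close to at most one member of $\cP_0$, and conclude that the success probability is at most $2^n/|\cP_0|$. Your core intuition --- that the $x_i$'s carry no information about $p^*$ and only the $o_i\in\{0,1\}$ matter --- is exactly right.

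However, you stop short of closing the one step you yourself flag as problematic: how to justify ``success probability $\le 2^n/m$'' when the sample space is not finite-valued. You go back and forth between a Fano-type bound and a ``direct averaging argument,'' and in the end you merely assert that you would ``use the sharper elementary bound,'' which is the thing that still needs justifying. The paper's resolution is short and worth having explicitly: because the marginal law of $(x_1,\ldots,x_n)$ is $\mu^{\otimes n}$ regardless of $p^*$, you may write the average success probability (over uniform $p^*\in\cP_0$, the sample, and the algorithm's coins) as an expectation over $(x_1,\ldots,x_n)$ and the coins of a conditional success probability; then you may \emph{fix} $(x_1,\ldots,x_n)$ and the internal randomness to values that maximize this conditional probability. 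After this fixing, the algorithm is a deterministic map of $(o_1,\ldots,o_n)\in\{0,1\}^n$, so it has at most $2^n$ distinct outputs, and each output can be the ``good'' output for at most one $p^*\in\cP_0$ by the $2\varepsilon$-separation. Hence the conditional success probability is at most $2^n/m$, and so is the unconditional one. Note also that you must fix the \emph{internal randomness} of $\cA$ in addition to the $x_i$'s, a point your writeup omits; without it the ``at most $2^n$ outputs'' count does not hold. The Fano route you mention as a fallback would also work, but yields $(1-\delta)\log m \le n + 1$, a slightly weaker constant than the statement asks for; the conditioning/fixing argument lands exactly on $n\ge\log((1-\delta)m)$, which is why the paper uses it.
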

\begin{proof}
Define $M = N_{\mu,\cD}(\cP,2\varepsilon)$. 
Let $\cP'$ be the maximum-size subset of $\cP$ such that
\begin{equation}
\label{eq:packing}
\|p_1 - p_2\|_{\mu,\cD} > 2\varepsilon \quad \textnormal{for all distinct}\ p_1,p_2\in\cP'.
\end{equation}
It is clear that $|\cP'|\ge M$ because otherwise $\cP'$ is not a $2\varepsilon$-covering of $\cP$ and we can add one more predictor into $\cP'$ without violating \eqref{eq:packing}, a contradiction with the maximality of $|\cP'|$.

Let $n$ be a nonnegative integer such that $\dsr_n(\cP,\cD,\varepsilon,\delta,\mu) \ne \emptyset$. Our goal is to prove
\begin{equation}
\label{eq:goal-OI-lower}
n \ge \log ( (1 - \delta)M).
\end{equation}
Let $\cA$ be an algorithm in $\dsr_n(\cP,\cD,\varepsilon,\delta,\mu)$.
We draw a predictor $p^*$ uniformly at random from $\cP'$, and draw examples $(x_1,o_1),\ldots,(x_n,o_n)$ i.i.d.\ from $\mu_{p^*}$.
We say algorithm $\cA$ succeeds if it outputs $p$ such that $\|p - p^*\|_{\mu,\cD}\le \varepsilon$. 
By assumption, when $(x_1,o_1),\ldots,(x_n,o_n)$ are given as input, algorithm $\cA$ succeeds with probability at least $1 - \delta$. Now instead of drawing $x_1,\ldots,x_n$ i.i.d.\ from $\mu$, we fix them so that the success probability is maximized. In other words, we can find fixed $x_1,\ldots,x_n\in X$ such that if we run algorithm $\cA$ on examples $(x_1,o_1),\ldots,(x_n,o_n)$ where $o_i\sim \ber(p^*(x_i))$ and $p^*$ is chosen uniformly at random from $\cP'$, the algorithm succeeds with probability at least $1 - \delta$. Similarly, we can fix the internal randomness of algorithm $\cA$ and assume that $\cA$ is deterministic without decreasing its success probability on $(x_1,o_1),\ldots,(x_n,o_n)$. Now algorithm $\cA$ has at most $2^n$ possible inputs, and thus has at most $2^n$ possible outputs. No output can be the success output for two different choices of $p^*$ from $\cP'$ because of \eqref{eq:packing}. Therefore, the success probability of algorithm $\cA$ is at most $2^n/M$, and thus
\[
2^n / M \ge 1 - \delta.
\]
This implies \eqref{eq:goal-OI-lower}, as desired.
\end{proof}

\subsection{Upper Bound}
We give an algorithm for distribution-specific realizable OI to prove a sample complexity upper bound for it. 
Before we describe our algorithm, 
let us briefly discuss the empirical risk minimization algorithm (\algorithmref{alg:erm}). \citet[Proof of Lemma 4.6]{MR1122796} showed that this natural algorithm works in the special case where 1) every predictor in $\cP$ is a binary classifier, and 2) the distinguisher class $\cD$ contains all possible distinguishers.
When both 1) and 2) are satisfied,
the algorithm gives a sample complexity upper bound of
\begin{equation}
\label{eq:dream-upper}
O((1/\varepsilon)^{O(1)}\log N_{\mu,\cD}(\cP,\varepsilon/2)), 
\end{equation}
which would give a satisfactory sample complexity characterization when combined with our lower bound in \lemmaref{lm:OI-sample-lower}.
However,
in \appendixref{sec:failure-erm}, we show that the algorithm fails when only one of the two conditions 1) and 2) (no matter which) is true.\footnote{
There is a variant of \algorithmref{alg:erm} that minimizes $\loss(p)$ over the entire predictor class $\cP$ instead of the covering $\cP'$. As discussed in \citep{MR1122796}, this variant is not guaranteed to give a sample complexity upper bound close to \eqref{eq:dream-upper} even under both conditions 1) and 2).
Changing the definition of $\loss(p)$ to $\sup_{d\in\cD}|\langle p,d\rangle_\mu - \frac 1n\sum_{i=1}^n d(x_i)o_i|$ (mimicking \algorithmref{alg:distinguisher-covering}) also makes \algorithmref{alg:erm} fail under both conditions 1) and 2). To see this, suppose $\mu$ is the uniform distribution over a finite domain $X$, $\cP = \{p_0,p_1\}$ where $p_0(x) = 0$ and $p_1(x) = 1$ for every $x\in X$,  and $\cD = [-1,1]^X$. Assuming $\varepsilon \in (0,1)$ and $n < |X|/10$, when the target predictor $p^*$ is $p_1$, \algorithmref{alg:erm} always outputs $p_0$ on the new $\loss$ (note that changing the values of $d$ on $x_1,\ldots,x_n$ can significantly change $\frac 1n\sum_{i=1}^n d(x_i)o_i$, but it never changes $\langle p,d \rangle_\mu$ by more than $2n/|X|$).
}
Since neither 1) nor 2) is guaranteed to hold in the distribution-specific realizable OI setting, we 
use a new algorithm (\algorithmref{alg:distinguisher-covering}) to
prove our sample complexity upper bound (\lemmaref{lm:OI-sample-upper}) where the roles of $\cP$ and $\cD$ flip compared to \eqref{eq:dream-upper}.
In \sectionref{sec:duality}, we show how to flip them back to get a sample complexity characterization for distribution-specific realizable OI.

\begin{algorithm2e}
  \SetKwInOut{KwPa}{Parameters}
  \SetKwInOut{KwIn}{Input}
  \SetKwInOut{KwOut}{Output}
  \KwPa{predictor class $\cP$, distinguisher class $\cD$, distribution $\mu$, MDA bound $\varepsilon$, positive integer $n$.}
  \KwIn{examples $(x_1,o_1),\ldots,(x_n,o_n)$.}
  \KwOut{predictor $p\in\cP$. }
  $\cP'\gets$ minimum-size $\varepsilon/2$-covering of $\cP$ w.r.t.\ norm $\|\cdot \|_{\mu,\cD}$\;
  \Return $p\in\cP'$ that minimizes the empirical error
  \[
  \loss(p):=\sup_{d\in\cD}\left|\frac 1n\sum_{i=1}^n d(x_i)(p(x_i) - o_i)\right|;
  \]
  \caption{Empirical Risk Minimization}
  \label{alg:erm}
\end{algorithm2e}

\begin{algorithm2e}
  \SetKwInOut{KwPa}{Parameters}
  \SetKwInOut{KwIn}{Input}
  \SetKwInOut{KwOut}{Output}
  \KwPa{predictor class $\cP$, distinguisher class $\cD$, distribution $\mu$, MDA bound $\varepsilon$, positive integer $n$.}
  \KwIn{examples $(x_1,o_1),\ldots,(x_n,o_n)$.}
  \KwOut{predictor $p\in\cP$. }
  $\cQ\gets \cP - \cP$ \tcc*{Recall that $\cP - \cP = \{p_1 - p_2:p_1,p_2\in\cP\}$.}
  $\cD'\gets$ minimum-size $\varepsilon/2$-covering of $\cD$ w.r.t.\ norm $\|\cdot \|_{\mu,\cQ}$\;
  \Return $p\in\cP$ that $\varepsilon/16$-minimizes
  \[
  \loss(p):=\sup_{d\in\cD'}\left|\langle p, d\rangle_\mu - \frac 1n\sum_{i=1}^nd(x_i)o_i\right|,
  \]
  i.e., $\loss(p) \le \inf_{p'\in\cP}\loss(p') + \varepsilon/16$\;
  \caption{Distinguisher Covering}
  \label{alg:distinguisher-covering}
\end{algorithm2e}

Our sample complexity upper bound is based on the following analysis of \algorithmref{alg:distinguisher-covering}:
\begin{lemma}
\label{lm:alg-distinguisher-covering}
For every predictor class $\cP\subseteq[0,1]^X$, every distinguisher class $\cD\subseteq[-1,1]^X$, every distribution $\mu\in\Delta_X$, every advantage bound $\varepsilon\in(0,1)$, and every failure probability bound $\delta\in(0,1)$, there exists a positive integer
\begin{align}
n & \le O\big(\varepsilon^{-2}\big(\log N_{\mu,\cQ}(\cD,\varepsilon/2) + \log (2/\delta)\big)\big) \label{eq:OI-sample-upper-1}\\
& \le O\big(\varepsilon^{-2}\big(\log N_{\mu,\cP}(\cD,\varepsilon/4) + \log(2/\delta)\big)\big) \label{eq:OI-sample-upper-2}
\end{align}
such that \algorithmref{alg:distinguisher-covering} belongs to
\[
{\bigcap}_{p^*\in[0,1]^X}\oi_n(p^*,\cD,3 \inf_{p\in\cP}\adv_{\mu,\cD}(p,p^*) + \varepsilon,\delta,\mu),
\]
where
$\cQ = \cP - \cP = \{p_1 - p_2:p_1,p_2\in\cP\}$.
\end{lemma}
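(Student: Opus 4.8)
The plan is to fix an arbitrary target predictor $p^*\in[0,1]^X$, write $\alpha := \inf_{p'\in\cP}\adv_{\mu,\cD}(p',p^*)$, and show that when the $n$ input examples are drawn i.i.d.\ from $\mu_{p^*}$, with probability at least $1-\delta$ the output $p$ of \algorithmref{alg:distinguisher-covering} satisfies $\adv_{\mu,\cD}(p,p^*)\le 3\alpha+\varepsilon$. Since $\adv_{\mu,\cD}(p_1,p_2)=\|p_1-p_2\|_{\mu,\cD}$, it suffices to fix any $\beta>\alpha$, pick $p^\circ\in\cP$ with $\|p^\circ-p^*\|_{\mu,\cD}\le\beta$, and establish $\|p-p^*\|_{\mu,\cD}\le 3\beta+\varepsilon$ on a good event of probability at least $1-\delta$; letting $\beta\downarrow\alpha$ then gives the claim. (If $N_{\mu,\cQ}(\cD,\varepsilon/2)$ is infinite the asserted bound on $n$ is vacuous, so we may assume it is finite.)

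\emph{Good event and sample size.} Let $\cD'\subseteq\cD$ be the minimum-size $\varepsilon/2$-covering of $\cD$ w.r.t.\ $\|\cdot\|_{\mu,\cQ}$ computed by the algorithm, so $|\cD'|=N_{\mu,\cQ}(\cD,\varepsilon/2)$. For each $d'\in\cD'$, the i.i.d.\ random variables $d'(x_i)o_i$ lie in $[-1,1]$ and have expectation $\langle d',p^*\rangle_\mu$, because $\bE[o_i\mid x_i]=p^*(x_i)$. By Hoeffding's inequality and a union bound over $\cD'$, there is an integer $n=O\big(\varepsilon^{-2}(\log N_{\mu,\cQ}(\cD,\varepsilon/2)+\log(2/\delta))\big)$ for which, with probability at least $1-\delta$, the event
\[
E:\quad \left|\frac1n\sum_{i=1}^n d'(x_i)o_i-\langle d',p^*\rangle_\mu\right|\le \varepsilon/8 \text{ for every } d'\in\cD'
\]
holds; this yields the bound~\eqref{eq:OI-sample-upper-1}. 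The bound~\eqref{eq:OI-sample-upper-2} then follows because $\cQ=\cP-\cP\subseteq 2\bar\cP$ (indeed $p_1-p_2=2\big(\tfrac12 p_1+\tfrac12(-p_2)\big)$ with $\tfrac12 p_1+\tfrac12(-p_2)\in\bar\cP$), so the monotonicity, homogeneity, and hull facts of \lemmaref{lm:covering-basics} give $N_{\mu,\cQ}(\cD,\varepsilon/2)\le N_{\mu,2\bar\cP}(\cD,\varepsilon/2)=N_{\mu,\bar\cP}(\cD,\varepsilon/4)=N_{\mu,\cP}(\cD,\varepsilon/4)$.

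\emph{Controlling the loss on $E$.} On $E$, for every $q\in\cP$ the loss $\loss(q)=\sup_{d'\in\cD'}|\langle q,d'\rangle_\mu-\tfrac1n\sum_{i}d'(x_i)o_i|$ differs from $\sup_{d'\in\cD'}|\langle q-p^*,d'\rangle_\mu|=\|q-p^*\|_{\mu,\cD'}$ by at most $\varepsilon/8$. In particular, using $\cD'\subseteq\cD$ we get $\loss(p^\circ)\le\|p^\circ-p^*\|_{\mu,\cD'}+\varepsilon/8\le\|p^\circ-p^*\|_{\mu,\cD}+\varepsilon/8\le\beta+\varepsilon/8$. Since $p$ is an $\varepsilon/16$-minimizer of $\loss$ over $\cP$ and $p^\circ\in\cP$, $\loss(p)\le\loss(p^\circ)+\varepsilon/16\le\beta+\varepsilon/8+\varepsilon/16$, and hence $\|p-p^*\|_{\mu,\cD'}\le\loss(p)+\varepsilon/8\le\beta+\varepsilon/4+\varepsilon/16$.

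\emph{Transferring from $\cD'$ to $\cD$.} This is the step that pays for the flipped roles of $\cP$ and $\cD$. Fix $d\in\cD$ and choose $d'\in\cD'$ with $\|d-d'\|_{\mu,\cQ}\le\varepsilon/2$, and split $p-p^*=(p-p^\circ)+(p^\circ-p^*)$, noting $p-p^\circ\in\cP-\cP=\cQ$. Then
\[
|\langle p-p^*,d\rangle_\mu|\le|\langle p-p^*,d'\rangle_\mu|+|\langle p-p^\circ,d-d'\rangle_\mu|+|\langle p^\circ-p^*,d\rangle_\mu|+|\langle p^\circ-p^*,d'\rangle_\mu|,
\]
and the four terms are bounded respectively by $\|p-p^*\|_{\mu,\cD'}\le\beta+\varepsilon/4+\varepsilon/16$, by $\|d-d'\|_{\mu,\cQ}\le\varepsilon/2$ (since $p-p^\circ\in\cQ$), by $\|p^\circ-p^*\|_{\mu,\cD}\le\beta$ (since $d\in\cD$), and again by $\beta$ (since $d'\in\cD'\subseteq\cD$). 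Summing gives $|\langle p-p^*,d\rangle_\mu|\le 3\beta+\tfrac{13}{16}\varepsilon<3\beta+\varepsilon$; taking the supremum over $d\in\cD$ and then $\beta\downarrow\alpha$ completes the proof. The one genuinely delicate point is recognizing in this last step that the decomposition must route $p-p^\circ$ rather than $p-p^*$ (which need not lie in $\cQ$, as $p^*\notin\cP$ in general) through the $\|\cdot\|_{\mu,\cQ}$-covering --- this is exactly why \algorithmref{alg:distinguisher-covering} covers $\cD$ with respect to $\|\cdot\|_{\mu,\cQ}$ rather than $\|\cdot\|_{\mu,\cP}$; everything else is Hoeffding's inequality and triangle inequalities.
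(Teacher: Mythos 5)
Your proof is correct and follows essentially the same approach as the paper's: concentrate the empirical averages over the finite cover $\cD'$, compare the loss of the output $p$ to that of a near-optimal $p^\circ\in\cP$, and then transfer from $\cD'$ to all of $\cD$ by exploiting that $p-p^\circ\in\cQ=\cP-\cP$ is exactly the class defining the covering norm. The only differences are cosmetic bookkeeping (the paper fixes a concrete near-minimizer $\tilde p$ with $\|\tilde p-p^*\|_{\mu,\cD}\le\varepsilon_0+\varepsilon/16$ rather than taking $\beta\downarrow\alpha$, uses $\varepsilon/16$ rather than $\varepsilon/8$ in the concentration step, and first bounds $\|p-\tilde p\|_{\mu,\cD}$ as an intermediate quantity before the final triangle inequality), and a different but equally valid derivation of~\eqref{eq:OI-sample-upper-2} (you go through $\cQ\subseteq 2\bar\cP$ and the covering-number identities of \lemmaref{lm:covering-basics}, whereas the paper directly observes $\|f\|_{\mu,\cQ}\le 2\|f\|_{\mu,\cP}$).
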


\begin{proof}
We first note that $\|f\|_{\mu,\cQ} \le 2\|f\|_{\mu,\cP}$ for all bounded functions $f\in \bR^X$, which implies that
\[
N_{\mu,\cQ}(\cD,\varepsilon/2) \le N_{\mu,\cP}(\cD,\varepsilon/4).
\]
This proves inequality \eqref{eq:OI-sample-upper-2}.

Define $M = N_{\mu,\cQ}(\cD,\varepsilon/2)$ and $\varepsilon_0 = \inf_{p\in\cP}\adv_{\mu,\cD}(p,p^*)$ for an arbitrary $p^*\in[0,1]^X$.
It remains to prove that \algorithmref{alg:distinguisher-covering} belongs to
\[
{\bigcap}_{p^*\in[0,1]^X}\oi_n(p^*,\cD,3 \varepsilon_0 + \varepsilon,\delta,\mu)
\]
for some $n = O(\varepsilon^{-2}(\log M + \log (2/\delta) ))$ determined below. Given $n$ examples $(x_1,o_1),\ldots,(x_n,o_n)$, we define $K(d) = \frac 1n\sum_{i=1}^no_id(x_i)$ for every distinguisher $d\in\cD'$, where $\cD'$ is the minimum-size $\varepsilon/2$ covering of $\cP$ computed in \algorithmref{alg:distinguisher-covering}.
By definition, $|\cD'| = M$, so by the Chernoff bound and the union bound, for some $n = O(\varepsilon^{-2}(\log M + \log (2/\delta) ))$, with probability at least $1-\delta$,
\begin{equation}
\label{eq:sample-upper-concentration}
|K(d) - \langle p^*, d\rangle_\mu|\le \varepsilon / 16 \quad \textnormal{for all}\ d\in\cD'.
\end{equation}
Assuming that \eqref{eq:sample-upper-concentration} is true, it suffices to prove that the output $p$ of \algorithmref{alg:distinguisher-covering} satisfies $\|p - p^*\|_{\mu,\cD} \le 3\varepsilon_0 + \varepsilon$.
Let $\tilde p\in\cP$ be a predictor that satisfies $\|\tilde p - p^*\|_{\mu,\cD} \le \varepsilon_0 + \varepsilon/16$.
We have 
\[
\sup_{d\in\cD'}|K(d) - \langle \tilde p, d\rangle_\mu| \le \sup_{d\in\cD'}|\langle p^*,d\rangle_\mu - \langle \tilde p, d\rangle_\mu| + \varepsilon/16 \le \varepsilon_0 + \varepsilon/8.
\]
The output predictor $p$ of \algorithmref{alg:distinguisher-covering} satisfies
\[
\sup_{d\in\cD'}|K(d) - \langle p, d\rangle_\mu| \le \sup_{d\in\cD'}|K(d) - \langle \tilde p, d\rangle_\mu| + \varepsilon/16 \le \varepsilon_0 + 3\varepsilon/16.
\]
Therefore,
\begin{align*}
\sup_{d\in\cD'}|\langle p - \tilde p, d\rangle_\mu| & \le \sup_{d\in \cD'}|\langle p, d\rangle_\mu - K(d)| + \sup_{d\in\cD'}|\langle \tilde p, d\rangle_\mu - K(d)|\\
& \le (\varepsilon_0 + \varepsilon/8) + (\varepsilon_0 + 3\varepsilon/16)\\
& \le 2\varepsilon_0 + 5\varepsilon / 16.
\end{align*}
Since $p - \tilde p\in \cQ$ and $\cD'$ is an $\varepsilon/2$-covering w.r.t.\ $\|\cdot\|_{\mu,\cQ}$,
\[
\|p - \tilde p\|_{\mu,\cD} = 
\sup_{d\in\cD}|\langle p - \tilde p, d\rangle_\mu| \le \sup_{d\in\cD'}|\langle p - \tilde p, d\rangle_\mu| + \varepsilon / 2 \le 2\varepsilon_0 + 13\varepsilon/16.
\]
Finally,
\[
\|p - p^*\|_{\mu,\cD} \le \|p - \tilde p\|_{\mu,\cD} + \|\tilde p - p^*\|_{\mu,\cD} \le (2\varepsilon_0 + 13\varepsilon/16) + (\varepsilon_0 + \varepsilon/16) \le 3\varepsilon_0 + \varepsilon,
\]
as desired.
\end{proof}
We are now ready to state and prove our sample complexity upper bound for distribution-specific realizable OI.
\begin{lemma}
\label{lm:OI-sample-upper}
For every predictor class $\cP\subseteq[0,1]^X$, every distinguisher class $\cD\subseteq[-1,1]^X$, every distribution $\mu\in\Delta_X$, every advantage bound $\varepsilon > 0$, and every failure probability bound $\delta\in(0,1)$, 
the following sample complexity upper bound holds for distribution-specific realizable OI:
\begin{align*}
\sampdsr(\cP,\cD,\varepsilon,\delta,\mu) & \le O\big(\varepsilon^{-2}\big(\log N_{\mu,\cQ}(\cD,\varepsilon/2) + \log (2/\delta)\big)\big) \\
& \le O\big(\varepsilon^{-2}\big(\log N_{\mu,\cP}(\cD,\varepsilon/4) + \log(2/\delta)\big)\big),
\end{align*}
where $\cQ = \cP - \cP = \{p_1 - p_2:p_1,p_2\in\cP\}$.
\end{lemma}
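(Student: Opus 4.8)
The plan is to derive this lemma as an essentially immediate corollary of \lemmaref{lm:alg-distinguisher-covering}, whose analysis of \algorithmref{alg:distinguisher-covering} already does all the heavy lifting. The key observation is that in the \emph{realizable} setting the target predictor $p^*$ lies in $\cP$, so $\inf_{p\in\cP}\adv_{\mu,\cD}(p,p^*) = 0$ (take $p = p^*$, for which the advantage of every distinguisher is zero). Substituting $\varepsilon_0 = 0$ into the guarantee of \lemmaref{lm:alg-distinguisher-covering} collapses the slack term $3\inf_{p\in\cP}\adv_{\mu,\cD}(p,p^*) + \varepsilon$ down to just $\varepsilon$.

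Concretely, first I would handle the trivial boundary case $\varepsilon \ge 1$: by \lemmaref{lm:advantage-l1}, $\adv_{\mu,\cD}(p,p^*) \le \bE_{x\sim\mu}[|p(x)-p^*(x)|] \le 1 \le \varepsilon$ for any predictor, so outputting a fixed element of $\cP$ with zero examples already satisfies $(\cD,\varepsilon)$-OI; hence $\sampdsr(\cP,\cD,\varepsilon,\delta,\mu) = 0$ and both bounds hold since their right-hand sides are nonnegative. For the main case $\varepsilon\in(0,1)$, apply \lemmaref{lm:alg-distinguisher-covering}: there is a positive integer $n$ with
\[
n \le O\big(\varepsilon^{-2}\big(\log N_{\mu,\cQ}(\cD,\varepsilon/2) + \log (2/\delta)\big)\big) \le O\big(\varepsilon^{-2}\big(\log N_{\mu,\cP}(\cD,\varepsilon/4) + \log(2/\delta)\big)\big)
\]
such that \algorithmref{alg:distinguisher-covering} lies in $\bigcap_{p^*\in[0,1]^X}\oi_n(p^*,\cD,3\varepsilon_0+\varepsilon,\delta,\mu)$ where $\varepsilon_0 = \inf_{p\in\cP}\adv_{\mu,\cD}(p,p^*)$. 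Intersecting over the smaller index set $p^*\in\cP$ rather than $p^*\in[0,1]^X$, and using that $\varepsilon_0 = 0$ for each such $p^*$, this says \algorithmref{alg:distinguisher-covering} lies in $\bigcap_{p^*\in\cP}\oi_n(p^*,\cD,\varepsilon,\delta,\mu) = \dsr_n(\cP,\cD,\varepsilon,\delta,\mu)$.

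Therefore $\dsr_n(\cP,\cD,\varepsilon,\delta,\mu) \ne \emptyset$, and by the definition of $\sampdsr$ as an infimum over such $n$, we conclude $\sampdsr(\cP,\cD,\varepsilon,\delta,\mu) \le n$, which is bounded as in the statement; the second inequality in the lemma is inherited directly from the second inequality in \lemmaref{lm:alg-distinguisher-covering} (which in turn comes from $\|f\|_{\mu,\cQ}\le 2\|f\|_{\mu,\cP}$ and hence $N_{\mu,\cQ}(\cD,\varepsilon/2)\le N_{\mu,\cP}(\cD,\varepsilon/4)$). There is essentially no obstacle here beyond bookkeeping: the only things to be careful about are (i) correctly specializing the ``agnostic-flavored'' guarantee of \lemmaref{lm:alg-distinguisher-covering} to the realizable case by noting $\varepsilon_0=0$, and (ii) covering the $\varepsilon\ge 1$ regime, which the earlier lemma excludes. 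All of the genuine work — the covering of $\cD$ w.r.t.\ $\|\cdot\|_{\mu,\cQ}$, the Chernoff/union-bound concentration, and the triangle-inequality chase relating $\|p-p^*\|_{\mu,\cD}$ to the empirical loss — has already been carried out in the proof of \lemmaref{lm:alg-distinguisher-covering}.
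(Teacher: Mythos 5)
Your proposal is correct and matches the paper's proof essentially step for step: handle $\varepsilon\ge 1$ trivially, then invoke \lemmaref{lm:alg-distinguisher-covering}, specialize to $p^*\in\cP$ so that $\inf_{p\in\cP}\adv_{\mu,\cD}(p,p^*)=0$, and conclude $\dsr_n(\cP,\cD,\varepsilon,\delta,\mu)\ne\emptyset$ with $n$ bounded as stated. Your slightly more explicit justification of the $\varepsilon\ge 1$ case (via \lemmaref{lm:advantage-l1}) is fine and consistent with the paper, which simply calls that case trivial.
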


\begin{proof}
When $\varepsilon \ge 1$, we have $\sampdsr(\cP,\cD,\varepsilon,\delta,\mu) = 0$ and the lemma is trivially true. We assume $\varepsilon\in(0,1)$ henceforth.

For every $p^*\in\cP$, we have $\inf_{p\in\cP}\adv_{\mu,\cD}(p,p^*) = 0$, so by \lemmaref{lm:alg-distinguisher-covering}, there exists a positive integer $n$ satisfying \eqref{eq:OI-sample-upper-1} and \eqref{eq:OI-sample-upper-2} such that
\begin{align*}
\emptyset\ne
{\bigcap}_{p^*\in[0,1]^X}\oi_n(p^*,\cD,3 \inf_{p\in\cP}\adv_{\mu,\cD}(p,p^*) + \varepsilon,\delta,\mu)
& \subseteq 
{\bigcap}_{p^*\in \cP}\oi_n(p^*,\cD,\varepsilon,\delta,\mu)\\
& = \dsr_n(\cP,\cD,\varepsilon,\delta,\mu).
\end{align*}
This completes the proof by the definition of $\sampdsr(\cP,\cD,\varepsilon,\delta,\mu)$.
\end{proof}

\section{Metric Entropy Duality}
\label{sec:duality}
In the previous section, we proved lower and upper bounds for the sample complexity of distribution-specific realizable OI,
but these bounds do not yet give a satisfactory sample complexity characterization because of the exchanging roles of the predictor class $\cP$ and the distinguisher class $\cD$ in the lower and upper bounds. 
In this section, we solve the issue by proving the following theorem:

\begin{theorem}
\label{thm:our-duality}
There exists an absolute constant $c \ge 1$ with the following property. For $M_1,M_2 > 0$, let $\cF_1\subseteq[-M_1,M_1]^X$ and $\cF_2\subseteq[-M_2,M_2]^X$ be two non-empty bounded function classes. For any distribution $\mu\in\Delta_X$ and any $\varepsilon > 0$, it holds that
\[
\log N_{\mu,\cF_2}(\cF_1,\varepsilon) \le c(M_1M_2/\varepsilon)^2(1 + \log N_{\mu,\cF_1}(\cF_2,\varepsilon/8)).
\]
\end{theorem}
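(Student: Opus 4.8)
The plan is to deduce the inequality by combining the sample-complexity \emph{lower} bound for distribution-specific realizable OI (\lemmaref{lm:OI-sample-lower}) with the sample-complexity \emph{upper} bound (\lemmaref{lm:OI-sample-upper}), after reinterpreting the abstract classes $\cF_1,\cF_2$ as a predictor class $\cP\subseteq[0,1]^X$ and a distinguisher class $\cD\subseteq[-1,1]^X$ via an affine renormalization. First I would dispose of the trivial cases: if $N_{\mu,\cF_2}(\cF_1,\varepsilon)=1$ the left-hand side is $0$, and if $N_{\mu,\cF_1}(\cF_2,\varepsilon/8)=\infty$ the right-hand side is $\infty$, so we may assume $N_{\mu,\cF_2}(\cF_1,\varepsilon)\ge 2$ and $N_{\mu,\cF_1}(\cF_2,\varepsilon/8)<\infty$. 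Since $|\langle f_1-f_1',f_2\rangle_\mu|\le 2M_1M_2$ for all $f_1,f_1'\in\cF_1$ and $f_2\in\cF_2$ (bounding by the box constraints), having two points of $\cF_1$ that are not mutually $\varepsilon$-close forces $\varepsilon<2M_1M_2$, hence the quantity $\varepsilon':=\varepsilon/(4M_1M_2)$ lies in $(0,1/2)$.

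Next I would set $\cP:=\tfrac{1}{2M_1}\cF_1+\{\tfrac12\mathbf 1\}$, where $\mathbf 1\in\bR^X$ is the constant-one function (so $\cP\subseteq[0,1]^X$); $\cD:=\tfrac{1}{M_2}\cF_2\subseteq[-1,1]^X$; $\cQ:=\cP-\cP$; and $\delta:=1/10$. All of $\cP,\cD$ are non-empty. Using the covering-number facts of \lemmaref{lm:covering-basics} — shift-invariance of the covered class, homogeneity under scaling, equality under passing to the symmetric convex hull, and monotonicity in the norm-defining class — together with the inequality $\|f\|_{\mu,\cF_1-\cF_1}\le 2\|f\|_{\mu,\cF_1}$ used in the proof of \lemmaref{lm:alg-distinguisher-covering} (equivalently $\cF_1-\cF_1\subseteq 2\bar\cF_1$), I would translate the two covering numbers that appear in the OI bounds into
\[
N_{\mu,\cD}(\cP,2\varepsilon')=N_{\mu,\cF_2}(\cF_1,\varepsilon)\quad\text{and}\quad N_{\mu,\cQ}(\cD,\varepsilon'/2)=N_{\mu,\cF_1-\cF_1}(\cF_2,M_1M_2\varepsilon')\le N_{\mu,\cF_1}(\cF_2,\varepsilon/8).
\]
The crucial point is that the additive shift defining $\cP$ cancels in $\cQ=\cP-\cP=\tfrac{1}{2M_1}(\cF_1-\cF_1)$, so it leaves $\|\cdot\|_{\mu,\cQ}$ unchanged; this is exactly why I must invoke the version of the upper bound phrased through $\cQ$ rather than the one phrased through $\cP$, since an additive shift of $\cP$ would alter the Minkowski norm $\|\cdot\|_{\mu,\cP}$.

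Finally I would chain the bounds. Because $N_{\mu,\cF_1}(\cF_2,\varepsilon/8)<\infty$, \lemmaref{lm:OI-sample-upper} guarantees that $\sampdsr(\cP,\cD,\varepsilon',\delta,\mu)$ is a finite nonnegative integer, so \lemmaref{lm:OI-sample-lower} applies and
\[
\log\!\big((1-\delta)N_{\mu,\cD}(\cP,2\varepsilon')\big)\le \sampdsr(\cP,\cD,\varepsilon',\delta,\mu)\le O\!\big(\varepsilon'^{-2}\big(\log N_{\mu,\cQ}(\cD,\varepsilon'/2)+\log(2/\delta)\big)\big).
\]
Substituting the two translations above, using $\log(1-\delta)\ge -1$ and $\log(2/\delta)=O(1)$, absorbing the additive constants into the $O(\varepsilon'^{-2}(\cdots))$ term (legitimate since $\varepsilon'<1$, so $\varepsilon'^{-2}>1$), and recalling $\varepsilon'^{-2}=16(M_1M_2/\varepsilon)^2$, one obtains
\[
\log N_{\mu,\cF_2}(\cF_1,\varepsilon)\le O\!\big((M_1M_2/\varepsilon)^2\big(1+\log N_{\mu,\cF_1}(\cF_2,\varepsilon/8)\big)\big),
\]
and taking $c$ to be the resulting absolute constant (enlarged if necessary so that $c\ge 1$) completes the argument.

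The step I expect to require the most care is the renormalization bookkeeping: casting $\cF_1$ as a predictor class forces both a scaling and an additive shift, and the shift would break the upper bound if it were applied in its $\|\cdot\|_{\mu,\cP}$ form — so one must route through $\cQ=\cP-\cP$ — and then one must carefully track the constants ($M_1$, $M_2$, and the factors $2,4,8$) through \lemmaref{lm:covering-basics} so that the covering radius on the right-hand side lands exactly at $\varepsilon/8$ as claimed.
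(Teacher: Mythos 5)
Your proposal is correct and follows essentially the same route as the paper's proof: recast $\cF_1$ and $\cF_2$ as a predictor class $\cP$ and distinguisher class $\cD$ via an affine renormalization (shift plus scaling), combine the distribution-specific realizable lower bound (\lemmaref{lm:OI-sample-lower}) with the upper bound (\lemmaref{lm:OI-sample-upper}) on $\sampdsr$, and unwind the covering-number identities via \lemmaref{lm:covering-basics} and the inequality $\|\cdot\|_{\mu,\cP-\cP}\le\|\cdot\|_{\mu,\cF_1}$. The only differences are cosmetic: the paper normalizes $M_1=M_2=1$ upfront by homogeneity (so it tracks $\varepsilon/4$ rather than your $\varepsilon'$), and uses $\delta=1/3$ rather than $1/10$; your explicit handling of trivial cases and the observation that $\varepsilon<2M_1M_2$ are careful but not strictly needed since the bound is vacuously true when either covering number misbehaves.
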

Before we prove this theorem, we note that it has a similar statement to the long-standing metric entropy duality conjecture proposed first by \citet{MR0361822}. The conjecture can be stated as follows using our notations \citep[for other equivalent statements of the conjecture, see e.g.][]{MR2105957}:
\begin{conjecture}[\citet{MR0361822}]
\label{conjecture}
There exist absolute constants $c_1,c_2 \ge 1$ with the following property.
For any two bounded function classes $\cF_1,\cF_2\subseteq \bR^X$ over a non-empty finite set $X$, if $\cF_1$ and $\cF_2$ are convex and symmetric (i.e.\ $\bar\cF_1 = \cF_1,\bar\cF_2 = \cF_2$), then for any distribution $\mu\in\Delta_X$ and any $\varepsilon >0$, 
\[
\log N_{\mu,\cF_2}(\cF_1,\varepsilon) \le c_1 \log N_{\mu,\cF_1}(\cF_2,\varepsilon/c_2).
\]
\end{conjecture}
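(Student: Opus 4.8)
The plan is to deduce \theoremref{thm:our-duality} by chaining the sample-complexity lower bound (\lemmaref{lm:OI-sample-lower}) against the sample-complexity upper bound (\lemmaref{lm:OI-sample-upper}) for distribution-specific realizable OI, after re-encoding the abstract classes $\cF_1,\cF_2$ as a predictor class and a distinguisher class. Writing $\mathbf 1$ for the all-ones function on $X$, I would set $\cP := \tfrac1{2M_1}\cF_1 + \{\tfrac12\mathbf 1\}$ and $\cD := \tfrac1{M_2}\cF_2$; since $\cF_1\subseteq[-M_1,M_1]^X$ and $\cF_2\subseteq[-M_2,M_2]^X$ this gives $\cP\subseteq[0,1]^X$ and $\cD\subseteq[-1,1]^X$, so $\cP$ and $\cD$ are legitimate nonempty predictor and distinguisher classes to which the OI bounds apply. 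First I would dispose of the regime $\varepsilon\ge 2M_1M_2$: there any $g_1,g_2\in\cF_1$ obey $\|g_1-g_2\|_{\mu,\cF_2}=\sup_{e\in\cF_2}|\bE_\mu[(g_1-g_2)e]|\le 2M_1M_2\le\varepsilon$, so $N_{\mu,\cF_2}(\cF_1,\varepsilon)=1$ and the claim is trivial. Henceforth I assume $\varepsilon<2M_1M_2$ and put $\varepsilon':=\varepsilon/(4M_1M_2)\in(0,\tfrac12)$.

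Next I would use the elementary facts in \lemmaref{lm:covering-basics} to translate the two covering numbers occurring in the OI bounds back into covering numbers of $\cF_1$ and $\cF_2$. For the lower bound, shift-invariance (item~\ref{item:basics-shift}) and homogeneity (item~\ref{item:basics-homogeneity}) give $N_{\mu,\cD}(\cP,2\varepsilon') = N_{\mu,\cD}(\tfrac1{2M_1}\cF_1,2\varepsilon') = N_{\mu,\cF_2}(\cF_1,4\varepsilon' M_1M_2) = N_{\mu,\cF_2}(\cF_1,\varepsilon)$. For the upper bound, the crucial point is to use the first (sharper) form of \lemmaref{lm:OI-sample-upper}, which is stated in terms of $\cQ:=\cP-\cP$ rather than $\cP$ itself: the additive shift cancels in the difference, so $\cQ=\tfrac1{2M_1}(\cF_1-\cF_1)$ no longer depends on the shift at all. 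Homogeneity then gives $N_{\mu,\cQ}(\cD,\varepsilon'/2)=N_{\mu,\cF_1-\cF_1}(\cF_2,\varepsilon' M_1M_2)$, and the bound $\|f\|_{\mu,\cF_1-\cF_1}\le 2\|f\|_{\mu,\cF_1}$ (already used inside the proof of \lemmaref{lm:alg-distinguisher-covering}) yields $N_{\mu,\cF_1-\cF_1}(\cF_2,\varepsilon' M_1M_2)\le N_{\mu,\cF_1}(\cF_2,\varepsilon' M_1M_2/2)=N_{\mu,\cF_1}(\cF_2,\varepsilon/8)$.

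Then I would chain the two OI bounds with $\delta=1/2$ (assuming $N_{\mu,\cF_1}(\cF_2,\varepsilon/8)<\infty$, since the theorem is vacuous otherwise, so that all quantities below are finite and the lower bound applies):
\[
\log\big(\tfrac12 N_{\mu,\cF_2}(\cF_1,\varepsilon)\big) \le \sampdsr(\cP,\cD,\varepsilon',1/2,\mu) \le O\big((\varepsilon')^{-2}(\log N_{\mu,\cF_1}(\cF_2,\varepsilon/8)+1)\big).
\]
Since $(\varepsilon')^{-2}=16(M_1M_2/\varepsilon)^2$ and $\varepsilon'<1$, rearranging and folding the additive $\log\tfrac12$ and the additive constant into the leading factor produces $\log N_{\mu,\cF_2}(\cF_1,\varepsilon)\le c(M_1M_2/\varepsilon)^2\big(1+\log N_{\mu,\cF_1}(\cF_2,\varepsilon/8)\big)$ for a suitable absolute constant $c\ge1$.

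The step I expect to be the main obstacle is precisely the one resolved above by routing through $\cQ$. A direct attempt would try to read the upper bound's covering number $N_{\mu,\cP}(\cD,\cdot)$ as a covering number of $\cF_1$; but turning $\cF_1$ into a predictor class forces the additive shift $\tfrac12\mathbf 1$, and shifting the \emph{norm-defining} class changes the dual Minkowski norm by an uncontrolled additive $\tfrac12\langle\,\cdot\,,\mathbf 1\rangle_\mu$ term (the shift-invariance in \lemmaref{lm:covering-basics} applies only to the class being covered, not to the subscript class). Using the $\cQ=\cP-\cP$ form of the upper bound sidesteps this because the shift cancels identically; after that, only the bookkeeping of the scaling factors $M_1,M_2$ through homogeneity and the collecting of constants remains, which is routine.
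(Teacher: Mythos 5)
The labeled statement is Conjecture~\ref{conjecture}, a long-standing open problem; what you have written is a proof of the paper's weaker Theorem~\ref{thm:our-duality}, which is the right reading of the task since neither the paper nor your argument removes the $(M_1M_2/\varepsilon)^2$ factor separating the two (it comes from the Chernoff step in Lemma~\ref{lm:OI-sample-upper} and persists even when $\cF_1,\cF_2$ are convex and symmetric). Your proof of the theorem is correct and takes essentially the same approach as the paper's: re-encode $\cF_1,\cF_2$ as a predictor/distinguisher pair, chain Lemma~\ref{lm:OI-sample-lower} against the $\cQ=\cP-\cP$ form of Lemma~\ref{lm:OI-sample-upper}, and translate back via Lemma~\ref{lm:covering-basics}; the only cosmetic differences are that the paper normalizes $M_1=M_2=1$ up front while you carry the scales through and handle the trivial regime $\varepsilon\ge 2M_1M_2$ explicitly, and your observation that the $\cQ$-form is essential because the shift $\tfrac12\mathbf{1}$ cannot be cancelled out of a norm-defining class is exactly the reason the paper's proof also routes through $\cQ$.
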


Compared to \conjectureref{conjecture},
our \theoremref{thm:our-duality} gives a variant of metric entropy duality which does not require $\cF_1$ and $\cF_2$ to be convex and symmetric, but has the constant $c_1$ in \conjectureref{conjecture} replaced by a quantity dependent on the granularity $\varepsilon$ and the scale of the functions in $\cF_1$ and $\cF_2$.
Since the predictor class $\cP$ and the distinguisher class $\cD$ are not in general convex and symmetric, 
our \theoremref{thm:our-duality} is more convenient for proving sample complexity characterizations for OI (see \theoremref{thm:characterization}).
In \lemmaref{lm:tight-duality}, we show that the quadratic dependence on $M_1M_2/\varepsilon$ in \theoremref{thm:our-duality} is nearly tight.

Below we prove \theoremref{thm:our-duality} by combining our lower and upper bounds in the previous section.
\begin{proof}
By \lemmaref{lm:covering-basics} Item \ref{item:basics-homogeneity}, we can assume w.l.o.g.\ that $M_1 = M_2 = 1$.
Define $\cD = \cF_2\subseteq[-1,1]^X$, $\cP = \{(1 + f)/2:f\in \cF_1\}\subseteq[0,1]^X$, and $\cQ = \cP - \cP$. 
Combining \lemmaref{lm:OI-sample-lower} and \lemmaref{lm:OI-sample-upper} with $\delta = 1/3$ and $\varepsilon$ replaced by $\varepsilon/4$, there exists a constant $c \ge 1$ such that
\[
\log N_{\mu,\cD}(\cP,\varepsilon/2) \le c\varepsilon^{-2}(1 + \log N_{\mu,\cQ}(\cD,\varepsilon/8)).
\]

Using \lemmaref{lm:covering-basics} Items \ref{item:basics-shift} and \ref{item:basics-homogeneity},
\begin{align}
\log N_{\mu,\cF_2}(\cF_1,\varepsilon) & = \log N_{\mu,\cD}(\cP,\varepsilon/2)\notag\\
& \le c\varepsilon^{-2}(1 + \log N_{\mu,\cQ}(\cD,\varepsilon/8))\notag\\
& \le c\varepsilon^{-2}(1 + \log N_{\mu,\cF_1}(\cD,\varepsilon/8)) \label{eq:switch-2}\\
& = c\varepsilon^{-2}(1 + \log N_{\mu,\cF_1}(\cF_2,\varepsilon/8)),\notag
\end{align}
as desired.
Here, inequality \eqref{eq:switch-2} holds because $\|f\|_{\mu,\cQ}\le \|f\|_{\mu, \cF_1}$ for every bounded function $f\in \bR^X$.
\end{proof}
We are now ready to state and prove our sample complexity characterizations for distribution-specific realizable OI.
\begin{theorem}
\label{thm:characterization}
For every predictor class $\cP\subseteq[0,1]^X$, every distinguisher class $\cD\subseteq[-1,1]^X$, every distribution $\mu\in\Delta_X$, every advantage bound $\varepsilon > 0$, and every failure probability bound $\delta\in(0,1)$, 
the following sample complexity characterizations hold for distribution-specific realizable OI:
\begin{align*}
& \log N_{\mu,\cD}(\cP,2\varepsilon) + \log(1 - \delta) \\
\le {} &
\sampdsr(\cP,\cD,\varepsilon,\delta,\mu)\\
\le {} & O(\varepsilon^{-4}\log N_{\mu,\cD}(\cP,\varepsilon/32) + \varepsilon^{-2}\log(2/\delta)),
\end{align*}
and
\begin{align*}
& \Omega(\varepsilon^2\log N_{\mu,\cP}(\cD,16\varepsilon)) -1 + \log(1 - \delta)\\
\le {} &
\sampdsr(\cP,\cD,\varepsilon,\delta,\mu)\\
\le {} &
O(\varepsilon^{-2}\log N_{\mu,\cP}(\cD,\varepsilon/4) + \varepsilon^{-2}\log(2/\delta)).
\end{align*}
\end{theorem}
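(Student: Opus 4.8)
The plan is to obtain all four inequalities by combining three results already in hand: the lower bound $\sampdsr(\cP,\cD,\varepsilon,\delta,\mu)\ge\log N_{\mu,\cD}(\cP,2\varepsilon)+\log(1-\delta)$ from \lemmaref{lm:OI-sample-lower}, the upper bound $\sampdsr(\cP,\cD,\varepsilon,\delta,\mu)\le O(\varepsilon^{-2}\log N_{\mu,\cP}(\cD,\varepsilon/4)+\varepsilon^{-2}\log(2/\delta))$ from \lemmaref{lm:OI-sample-upper}, and the metric entropy duality variant \theoremref{thm:our-duality}. Two of the four inequalities are then immediate: the first displayed lower bound is exactly \lemmaref{lm:OI-sample-lower} after writing $\log((1-\delta)N_{\mu,\cD}(\cP,2\varepsilon))=\log N_{\mu,\cD}(\cP,2\varepsilon)+\log(1-\delta)$, and the second displayed upper bound is exactly \lemmaref{lm:OI-sample-upper}. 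The remaining two inequalities are obtained by invoking \theoremref{thm:our-duality} with $\cF_1=\cD$ and $\cF_2=\cP$ (so $M_1=M_2=1$ since $\cD\subseteq[-1,1]^X$ and $\cP\subseteq[0,1]^X\subseteq[-1,1]^X$) to trade a covering number of $\cD$ for a covering number of $\cP$ and vice versa.

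For the first displayed upper bound, I would start from \lemmaref{lm:OI-sample-upper} and apply \theoremref{thm:our-duality} at granularity $\varepsilon/4$, giving $\log N_{\mu,\cP}(\cD,\varepsilon/4)\le c\,(4/\varepsilon)^2\bigl(1+\log N_{\mu,\cD}(\cP,\varepsilon/32)\bigr)$; substituting yields a bound of the form $O\bigl(\varepsilon^{-4}+\varepsilon^{-4}\log N_{\mu,\cD}(\cP,\varepsilon/32)+\varepsilon^{-2}\log(2/\delta)\bigr)$. The one point needing care is the stray additive $\varepsilon^{-4}$: when $N_{\mu,\cD}(\cP,\varepsilon/32)\ge 2$ we have $\log N_{\mu,\cD}(\cP,\varepsilon/32)\ge 1$ (the logarithm is base $2$), so this term is absorbed into the claimed form; and when $N_{\mu,\cD}(\cP,\varepsilon/32)=1$, a single predictor $p_0\in\cP$ satisfies $\|p_0-p^*\|_{\mu,\cD}\le\varepsilon/32<\varepsilon$ for every $p^*\in\cP$, so the algorithm that ignores its input and returns $p_0$ already shows $\sampdsr(\cP,\cD,\varepsilon,\delta,\mu)=0$, and the claimed bound holds trivially because it is nonnegative (as $\delta<1$). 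For the second displayed lower bound, I would start from \lemmaref{lm:OI-sample-lower} and apply \theoremref{thm:our-duality} at granularity $16\varepsilon$ to get $\log N_{\mu,\cP}(\cD,16\varepsilon)\le c\,(16\varepsilon)^{-2}\bigl(1+\log N_{\mu,\cD}(\cP,2\varepsilon)\bigr)$, which rearranges to $\log N_{\mu,\cD}(\cP,2\varepsilon)\ge\Omega\bigl(\varepsilon^2\log N_{\mu,\cP}(\cD,16\varepsilon)\bigr)-1$; chaining this with \lemmaref{lm:OI-sample-lower} gives the stated $\Omega\bigl(\varepsilon^2\log N_{\mu,\cP}(\cD,16\varepsilon)\bigr)-1+\log(1-\delta)$.

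Since \theoremref{thm:our-duality} does the real work, this theorem is essentially bookkeeping and I do not expect a genuine obstacle; the only things to watch are lining up the constants inside the covering-number arguments (so that, for instance, the factor $1/8$ inside \theoremref{thm:our-duality} turns granularity $\varepsilon/4$ into $\varepsilon/32$) and disposing of the degenerate case $N_{\mu,\cD}(\cP,\varepsilon/32)=1$ in the first upper bound as above.
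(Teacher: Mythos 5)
Your proposal is correct and follows essentially the same route as the paper: both chain \lemmaref{lm:OI-sample-lower}, \lemmaref{lm:OI-sample-upper}, and \theoremref{thm:our-duality} applied at the same granularities, and both split off the degenerate case $N_{\mu,\cD}(\cP,\varepsilon/32)\le 1$ to absorb the additive $\varepsilon^{-4}$ term. The only difference is cosmetic: you spell out why $\sampdsr=0$ in the degenerate case (the paper takes this as obvious) and present the four inequalities one at a time, whereas the paper writes three of them as a single chain.
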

Since $\log N_{\mu,\cD}(\cP,\varepsilon)$ is monotone w.r.t.\ $\cD$ (\lemmaref{lm:covering-basics} Item~\ref{item:basics-monotone}), compared to $\ell_1$-error-based learning which corresponds to $\cD = [-1,1]^X$ (see \lemmaref{lm:advantage-l1}), \theoremref{thm:characterization} shows that a selected class $\cD$ of distinguishers helps reduce the sample complexity, or allows us to achieve smaller $\varepsilon$ (and potentially better performance guarantees) with the same sample size. We prove \theoremref{thm:characterization} below.

\begin{proof}
We start with the following chain of inequalities:
\begin{align}
& \Omega(\varepsilon^2\log N_{\mu,\cP}(\cD,16\varepsilon)) -1 + \log(1-\delta)  \notag\\
\le {} & \log N_{\mu,\cD}(\cP,2\varepsilon) + \log(1 - \delta) \tag{by \theoremref{thm:our-duality}}\\
\le {} & \sampdsr(\cP,\cD,\varepsilon,\delta,\mu) \tag{by \lemmaref{lm:OI-sample-lower}}\\
\le {} & O(\varepsilon^{-2}\log N_{\mu,\cP}(\cD,\varepsilon/4) + \varepsilon^{-2}\log(2/\delta)),\label{eq:characterization-2}
\end{align}
where the last inequality is by \lemmaref{lm:OI-sample-upper}.
It remains to show
\begin{equation}
\label{eq:characterization-1}
\sampdsr(\cP,\cD,\varepsilon,\delta,\mu) \le O(\varepsilon^{-4}\log N_{\mu,\cD}(\cP,\varepsilon/32) + \varepsilon^{-2}\log(2/\delta)).
\end{equation}
If $N_{\mu,\cD}(\cP,\varepsilon/32) \le 1$, it is clear that $\sampdsr(\cP,\cD,\varepsilon,\delta,\mu) = 0$, so the inequality is trivial. If $N_{\mu,\cD}(\cP,\varepsilon/32) \ge 2$, 
we have the following inequality by \theoremref{thm:our-duality}:
\[
\varepsilon^{-2}\log N_{\mu,\cP}(\cD,\varepsilon/4)) \le O(\varepsilon^{-4}(1 + \log N_{\mu,\cD}(\cP,\varepsilon/32))) \le O(\varepsilon^{-4}\log N_{\mu,\cD}(\cP,\varepsilon/32)).
\]
Inequality \eqref{eq:characterization-1} is proved by plugging the inequality above into \eqref{eq:characterization-2}.
\end{proof}

We complete the section by showing near-tightness of \theoremref{thm:our-duality}.

\begin{lemma}
\label{lm:tight-duality}
There exists a constant $c > 0$ such that for all $M_1>0,M_2>0,\varepsilon\in(0,M_1M_2/2)$, there exist a ground set $X$, a distribution $\mu$ over $X$, and function classes $\cF_1\subseteq[-M_1,M_2]^X,\cF_2\subseteq[-M_2,M_2]^X$ such that
\[
\log N_{\mu,\cF_2}(\cF_1,\varepsilon) \ge c(1 + \log |\cF_2|)\left(\frac{M_1M_2}{\varepsilon}\right)^2\left(\log\left(\frac {M_1M_2}{\varepsilon}\right)\right)^{-1}.
\]
\end{lemma}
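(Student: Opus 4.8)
The plan is to produce, via a probabilistic construction, a pair of classes of $\pm1$-valued functions for which the dual norm $\|\cdot\|_{\mu,\cF_2}$ behaves on the relevant differences like a scaled Euclidean norm, so that covering $\cF_1$ becomes a Sudakov-type problem of packing a Boolean cube by Euclidean balls; the analysis itself will be elementary (a direct packing bound plus an anti-concentration estimate for Rademacher sums).

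First I would reduce to $M_1=M_2=1$. By \lemmaref{lm:covering-basics}, Item~\ref{item:basics-homogeneity}, replacing $\cF_1,\cF_2,\varepsilon$ by $M_1\cF_1,M_2\cF_2,M_1M_2\varepsilon$ preserves $N_{\mu,\cF_2}(\cF_1,\varepsilon)$ and $|\cF_2|$ while turning $1/\varepsilon$ into $M_1M_2/\varepsilon$; so it suffices to exhibit, for each $\varepsilon\in(0,1/2)$, classes $\cF_1,\cF_2\subseteq[-1,1]^X$ with $\log N_{\mu,\cF_2}(\cF_1,\varepsilon)\ge c\,(1+\log|\cF_2|)\,\varepsilon^{-2}/\log(1/\varepsilon)$. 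For $\varepsilon$ bounded away from $0$ (say $\varepsilon\ge\varepsilon_0$ for a small absolute constant $\varepsilon_0$ fixed later) this is trivial: take $X$ a single point, $\cF_1=\{-1,1\}$, $\cF_2=\{1\}$; then $N_{\mu,\cF_2}(\cF_1,\varepsilon)=2$ while the right-hand side is at most $c\sup_{\varepsilon\in[\varepsilon_0,1/2)}\varepsilon^{-2}/\log(1/\varepsilon)$, which is $\le1$ once $c$ is small enough. So the content is the regime $\varepsilon<\varepsilon_0$.

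For small $\varepsilon$: pick an integer $T$ large (depending on $\varepsilon$), take $X=[T]$ with $\mu$ uniform, set $\cF_1=\{-1,1\}^X$, and let $\cF_2=\{d_1,\dots,d_L\}$ with the $d_j$ drawn i.i.d.\ uniformly from $\{-1,1\}^X$, where $L$ is a suitable absolute-constant multiple of $T\,e^{16C_1\varepsilon^2T}$ (the constant $C_1$ being from the tail bound below). Both classes lie in $\{-1,1\}^X\subseteq[-1,1]^X$, and $\log|\cF_2|=\log_2L=\Theta(\varepsilon^2T)$ once $T$ is large enough that $\varepsilon^2T$ dominates $\log T$. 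Fix a Gilbert--Varshamov code $B\subseteq\{-1,1\}^T$ of minimum Hamming distance $\ge T/4$ and size $|B|=\lceil2^{T/10}\rceil$; I claim that with positive probability over $\cF_2$, every pair of distinct codewords $b,b'\in B$ is \emph{witnessed}: some $d_j$ has $\tfrac1T|\langle b-b',d_j\rangle|>2\varepsilon$. Granting the claim, since $B\subseteq\cF_1$ is then $2\varepsilon$-separated in the seminorm $\|\cdot\|_{\mu,\cF_2}$ (two witnessed codewords are in fact $\ge4\varepsilon$ apart) and two $2\varepsilon$-separated points cannot share a common $\varepsilon$-ball, any $\varepsilon$-cover of $\cF_1$ has at least $|B|$ elements, so $\log N_{\mu,\cF_2}(\cF_1,\varepsilon)\ge\log_2|B|\ge T/10$. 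To prove the claim, fix $b\ne b'$, write $v=b-b'\in\{-2,0,2\}^T$ with support size $k=\operatorname{Ham}(b,b')\in[T/4,T]$; for uniform $d$, $\langle v,d\rangle$ is distributed as $2S_k$ with $S_k$ a sum of $k$ independent Rademacher signs, and a standard anti-concentration (reverse-Chernoff) bound for Rademacher sums gives $\Pr[|S_k|\ge2\varepsilon T]\ge c_1\exp(-C_1(2\varepsilon T)^2/k)\ge c_1\exp(-16C_1\varepsilon^2T)=:q$, valid since $2\varepsilon T\le k/\mathrm{const}$ when $\varepsilon$ is below an absolute threshold. Hence each $d_j$ witnesses $(b,b')$ independently with probability $\ge q$, the pair is un-witnessed with probability $\le(1-q)^L\le e^{-qL}$, and a union bound over the $\le|B|^2\le2^{T/2}$ pairs fails with probability $\le2^{T/2}e^{-qL}<1$ by the choice of $L$ (which makes $qL$ a constant times $T$, beating $\tfrac T2\ln2$). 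Finally, comparing $\log N_{\mu,\cF_2}(\cF_1,\varepsilon)\ge T/10$ with $c(1+\log|\cF_2|)\varepsilon^{-2}/\log(1/\varepsilon)=O\!\big(c\cdot\varepsilon^2T\cdot\varepsilon^{-2}/\log(1/\varepsilon)\big)=O\!\big(cT/\log(1/\varepsilon)\big)$, the desired inequality holds as soon as $\log(1/\varepsilon)$ exceeds an absolute constant multiple of $c$; choosing $c$ small enough, and then $\varepsilon_0$ small enough to cover both this condition and the ``$\varepsilon$ small'' requirements (anti-concentration regime, $L\le2^T$), finishes the proof, with the trivial construction handling $\varepsilon\ge\varepsilon_0$.

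I expect the main obstacle to be the quantitative calibration in the last step rather than any single hard lemma. The anti-concentration estimate has to be invoked exactly at deviation scale $\Theta(\varepsilon T)$ — equivalently $\Theta(\sqrt{\log L})$ standard deviations — which simultaneously pins the separation a witness produces just above $2\varepsilon$, forces $\log|\cF_2|=\Theta(\varepsilon^2T)$, and must still beat a union bound over $2^{\Theta(T)}$ codeword pairs; making the $\varepsilon^{-2}$ in the target cancel against the $\varepsilon^2$ in $\log|\cF_2|$ and leave only the $1/\log(1/\varepsilon)$ gap (precisely the sense in which \theoremref{thm:our-duality} is ``nearly tight'') requires the packing size $|B|$, the size $L$ of $\cF_2$, and the witness scale to be balanced against one another, and requires checking the reverse-Chernoff bound is used inside its range of validity ($\varepsilon$ small, $k=\Theta(T)$). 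The reduction to $M_1=M_2=1$, the Gilbert--Varshamov bound, and the probabilistic method itself are routine by comparison; one could presumably replace the random $\cF_2$ by an explicit pseudorandom sign ensemble, but that is not needed for the statement.
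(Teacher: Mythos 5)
Your proposal is correct in its overall structure, and it takes a genuinely different route from the paper's proof. The paper works in ambient dimension $m\approx 1/(2\varepsilon^2)$, takes $\cF_1=[-1,1]^X$ and $\cF_2$ to be the columns of an $m\times m$ Hadamard matrix, and runs a volume-comparison argument: the $\|\cdot\|_{\mu,\cF_2}$-ball is a rotated cube (since Hadamard columns are $\mu$-orthonormal) of volume $(2\varepsilon\sqrt m)^m$, the cube $[-1,1]^m$ has volume $2^m$, and dividing gives $\log N_{\mu,\cF_2}(\cF_1,\varepsilon)\ge m\log(1/\varepsilon\sqrt m)=\Omega(1/\varepsilon^2)$ with $\log|\cF_2|=\log m=O(\log(1/\varepsilon))$. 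You instead go probabilistic: random $\cF_2\subseteq\{-1,1\}^X$ in a much larger ambient dimension $T$, a Gilbert--Varshamov code $B\subseteq\cF_1=\{-1,1\}^X$ as a candidate packing, a reverse-Chernoff anti-concentration estimate to lower-bound the chance that a random sign vector ``witnesses'' the separation of a fixed codeword pair, and a union bound over pairs. Both arguments prove the lemma; the interesting difference is quantitative. The paper's construction pays the $1/\log(1/\varepsilon)$ loss precisely because it keeps $|\cF_2|=m=\Theta(1/\varepsilon^2)$, forcing $\log|\cF_2|=\Theta(\log(1/\varepsilon))$. Your construction lets $T\to\infty$ at fixed $\varepsilon$, giving $\log|\cF_2|=\Theta(\varepsilon^2 T)$ and $\log N\ge T/10$, so the ratio $\log N/(1+\log|\cF_2|)$ tends to $\Theta(\varepsilon^{-2})$ \emph{without} the logarithmic deficit --- this actually saturates the upper bound $\log N_{\mu,\cF_2}(\cF_1,\varepsilon)\le O(\varepsilon^{-2}(1+\log|\cF_2|))$ implied by \theoremref{thm:our-duality}, showing that theorem is tight up to constants rather than merely ``nearly tight.'' The trade-off is that your argument is non-explicit and needs the standard technical hygiene around a probabilistic construction, whereas the Hadamard/volume argument is one clean page.

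A few calibration slips you should clean up, none of them fatal. First, if $v=b-b'\in\{-2,0,2\}^T$ with support $k$, then $\langle v,d\rangle=2S_k$, so the witness event $\tfrac1T|\langle v,d\rangle|>2\varepsilon$ is $|S_k|>\varepsilon T$, not $|S_k|\ge 2\varepsilon T$; this only changes the constant in the exponent (and relatedly, witnessed codewords are only $>2\varepsilon$ apart in $\|\cdot\|_{\mu,\cF_2}$, not ``$\ge 4\varepsilon$'' --- still exactly what the packing-implies-covering argument needs). Second, a Cram\'er-type lower tail estimate gives $\Pr[S_k\ge t]\ge \frac{c}{\sqrt k}\,e^{-C_1t^2/k}$ rather than $c_1e^{-C_1t^2/k}$; the $1/\sqrt k$ prefactor is absorbed by bumping $L$ to $\Theta(T^{3/2}e^{C'\varepsilon^2T})$, which still has $\log L=\Theta(\varepsilon^2T)$ for $T$ large, so the conclusion is unchanged but you should write it in. Third, the order of quantifiers at the end should be: fix $\varepsilon_0$ small enough for the anti-concentration regime ($4\varepsilon_0$ below the reverse-Chernoff threshold, $L\le 2^T$ achievable), \emph{then} choose $c\le\min(c',\,\varepsilon_0^2\log(1/\varepsilon_0))$ where $c'$ is the constant the random construction delivers for $\varepsilon<\varepsilon_0$ --- you wrote it the other way round, which is circular as stated though obviously repairable. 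With those three points tidied, the argument is sound and, as noted, strictly improves on the stated lemma.
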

\begin{proof}
By \lemmaref{lm:covering-basics} Item \ref{item:basics-homogeneity}, we can assume w.l.o.g.\ that $M_1 = M_2 = 1$.
Now we have $\varepsilon < M_1M_2/2 = 1/2$, and $1/2\varepsilon^2 > 2$.
Let $m$ be the largest integer power of $2$ such that $m \le 1/2\varepsilon^2$. We choose $X$ to be $\{1,\ldots,m\}$, and choose $\mu$ to be the uniform distribution over $X$. 
Let $\vect$ be the bijection from $\bR^X$ to $\bR^m$ such that
$\vect(f) = (f(1),\ldots,f(m)) \in \bR^m$ for all $f\in\bR^X$.
Define $H_m$ to be the set of vectors in $\{-1,1\}^m$ consisting of the $m$ columns of the Hadamard matrix of size $m\times m$.
We choose $\cF_1 = [-1,1]^X$, and $\cF_2 = \{\vect^{-1}(v):v\in H_m\}$. 
The intuition behind the choice of $\cF_2$ is to keep $|\cF_2|$ small while making $N_{\mu,\cF_2}(\cF_1,\varepsilon)$ large, which by \lemmaref{lm:covering-basics} Items~\ref{item:basics-monotone} and \ref{item:basics-hull} roughly corresponds to making the symmetric convex hull $\bar\cF_2$ large. That is why we use the Hadamard matrix to ensure that the functions $f$ in $\cF_2$ achieve the maximum norm (with $f(x) = \pm 1$ for every $x\in X$) and are orthogonal to each other.

Define $B = \{f\in\bR^X:\|f\|_{\mu,\cF_2} \le \varepsilon\}$.
By the properties of the Hadamard matrix,
the functions in $\cF_2$ form an orthonormal basis of $\bR^X$ w.r.t.\ the inner product $\langle \cdot,\cdot\rangle_\mu$. This implies
\begin{equation}
\label{eq:tight-duality-ball}
B = \left\{{\sum}_{f\in\cF_2}r_ff: r_f \in [-\varepsilon,\varepsilon] \ \textnormal{for all}\ f\in\cF_2\right\}.
\end{equation}

Let $\cF'_1\subseteq\cF_1$ be an $\varepsilon$-covering of $\cF_1$ w.r.t.\ norm $\|\cdot\|_{\mu,\cF_2}$ such that $|\cF_1'| = N_{\mu,\cF_2}(\cF_1,\varepsilon)$. By the definition of $\varepsilon$-covering,
\[
\cF_1\subseteq {\bigcup}_{f\in\cF_1'}(\{f\} + B).
\]
For a function class $\cF\subseteq\bR^X$, we define $\vect(\cF) = \{\vect(f):f\in\cF\}\subseteq\bR^m$. Now we have $\vect(\cF_1)\subseteq \bigcup_{f\in\cF_1'}\vect(\{f\} + B)$, which implies that the volume of $\vect(\cF_1)$ is at most $|\cF_1'|$ times the volume of $\vect(B)$.

It is clear that $\vect(\cF_1) = [-1,1]^m$ has volume $2^m$. By \eqref{eq:tight-duality-ball}, 
\[
\vect(B) = \left\{{\sum}_{v\in H_m}r_vv: r_v \in [-\varepsilon,\varepsilon] \ \textnormal{for all}\ v \in H_m\right\}.
\]
Since the columns of $H_m$ are orthogonal and have $\ell_2$ norm $\sqrt m$ in $\bR^m$, the volume of $\vect(B)$ is $(2\varepsilon\sqrt m)^m$.
Therefore,
\[
2^m \le |\cF_1'|\cdot (2\varepsilon \sqrt m)^m,
\]
and thus $|\cF_1'| \ge (1 /\varepsilon \sqrt m)^m$. Now we have
\begin{equation}
\label{eq:tight-switch-1}
\log N_{\mu,\cF_2}(\cF_1,\varepsilon) = \log |\cF_1'| \ge m\log (1/\varepsilon \sqrt m) \ge \Omega(m) \ge \Omega(1 / \varepsilon^2),
\end{equation}
and
\begin{equation}
\label{eq:tight-switch-2}
\log |\cF_2| = \log m \le O(\log (1/\varepsilon)).
\end{equation}
Combining \eqref{eq:tight-switch-1} and \eqref{eq:tight-switch-2} proves the lemma.
\end{proof}

\section{Sample Complexity of Distribution-Free OI}
\label{sec:distribution-free}
In this section, we consider the distribution-free setting where the OI learner has no knowledge of the distribution $\mu$, and its performance is measured on the worst-case distribution $\mu$. We focus on the case where $\cP = [0,1]^X$ and characterize the sample complexity of OI in this setting using the fat-shattering dimension of $\cD$ (\theoremref{thm:characterization-free}). Without the assumption that $\cP = [0,1]^X$, we give a sample complexity \emph{upper bound} for realizable OI in \remarkref{remark:general-P-D} and leave the intriguing question of sample complexity \emph{characterization} to future work.
\begin{theorem}
\label{thm:characterization-free}
For every distinguisher class $\cD$, every advantage bound $\varepsilon \in(0,1)$, and every failure probability bound $\delta\in(0,1)$,
the following sample complexity characterization holds for distribution-free OI:
\begin{align}
& \fat_\cD(12\varepsilon) / 8 + \log(1 - \delta)\notag \\
\le {} &
\sampdfr([0,1]^X, \cD,\varepsilon,\delta)\notag \\
= {} & \sampdfa([0,1]^X, \cD,\varepsilon,\delta) \notag \\
\le {} &
O\Big(\varepsilon^{-4}\Big(\fat_\cD(\varepsilon/25)(\log(2/\varepsilon))^2 + \log(2/\delta)\Big)\Big).\label{eq:characterization-free-upper}
\end{align}
\end{theorem}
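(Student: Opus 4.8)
The plan is to establish four separate pieces: the lower bound in terms of $\fat_\cD$, the upper bound in terms of $\fat_\cD$, and the equality $\sampdfr = \sampdfa$. I would handle the lower bound first, since it is the most direct. The idea is to combine the distribution-specific lower bound \lemmaref{lm:OI-sample-lower} with a hard instance built from a fat-shattered set: if $x_1,\ldots,x_n$ are $12\varepsilon$-fat shattered by $\cD$ w.r.t.\ thresholds $r_1,\ldots,r_n$, take $\mu$ to be uniform on these points and let $\cP$ be the set of all $2^n$ predictors that are either slightly above or slightly below the ``threshold level'' on each point (concretely, predictors taking two well-separated values on each $x_i$). One then argues that for any two distinct such predictors $p_1,p_2$, there is a distinguisher $d\in\cD$ witnessing a large advantage (this is exactly where fat shattering is used: the shattering function $f$ for the sign pattern distinguishing $p_1$ from $p_2$ has $|f(x_i)|$ bounded away from zero on a coordinate where $p_1,p_2$ disagree), so $\|p_1-p_2\|_{\mu,\cD}$ is large, giving $N_{\mu,\cD}(\cP,2\varepsilon)\ge 2^n$ and hence sample complexity $\ge n/8 + \log(1-\delta)$ via \lemmaref{lm:OI-sample-lower} after tracking constants. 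Since this is a lower bound on $\sampdfr$, it applies to the worst distribution.

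For the upper bound, I would use the chain $\sampdfr([0,1]^X,\cD,\varepsilon,\delta) \le O(\varepsilon^{-4}\log N_{\mu,\cD}([0,1]^X,\Theta(\varepsilon)) + \varepsilon^{-2}\log(2/\delta))$ coming from \theoremref{thm:characterization} (the first displayed characterization there, which has the $\varepsilon^{-4}$ and $N_{\mu,\cD}(\cP,\cdot)$ form), and then bound $\log N_{\mu,\cD}([0,1]^X,\gamma)$ \emph{uniformly over all $\mu$} by $O(\fat_\cD(\Omega(\gamma))\,(\log(1/\gamma))^2)$. The latter is the key technical input: it is essentially the classical result bounding covering numbers of a function class in terms of its fat-shattering dimension (the Alon--Ben-David--Cesa-Bianchi--Haussler / Bartlett--Long--Williamson--Mendelson type bound), but here applied in ``dual'' form --- we are covering $\cP = [0,1]^X$ with respect to the seminorm $\|\cdot\|_{\mu,\cD}$, which equals $\sup_{d\in\cD}|\langle \cdot, d\rangle_\mu|$. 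One should observe that an $\varepsilon$-covering of $[0,1]^X$ in this seminorm is the same as covering the ``functional'' image; after a discretization/rounding argument reducing to a finite effective domain of size controlled by $\varepsilon$ and an $\ell_\infty$-to-seminorm comparison, the problem becomes bounding the $L_1(\mu)$-covering number of $\cD$ restricted to a sample, which is governed by $\fat_\cD$. I expect this reduction to the standard fat-shattering covering bound --- making precise that covering $\cP$ against $\cD$ is equivalent (up to constants in $\varepsilon$) to covering $\cD$ against a sample --- to be the main obstacle, since it requires care about which class is being covered in which norm and about the uniformity over $\mu$.

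For the equality $\sampdfr = \sampdfa$, the $\le$ direction is immediate from monotonicity (every realizable instance is an agnostic instance with $\inf_{p\in\cP}\adv = 0$, and here $\cP=[0,1]^X$ so actually $\inf_{p}\adv_{\mu,\cD}(p,p^*)=0$ for every $p^*$). For the $\ge$ direction, the crucial point is that when $\cP = [0,1]^X$ the target $p^*$ always lies in $\cP$, so the agnostic guarantee degenerates to the realizable one: for any $p^*\in[0,1]^X$ we have $\inf_{p\in\cP}\adv_{\mu,\cD}(p,p^*) = 0$, hence $\dsa_n([0,1]^X,\cD,\varepsilon,\delta,\mu) = \bigcap_{p^*}\oi_n(p^*,\cD,\varepsilon,\delta,\mu) = \dsr_n([0,1]^X,\cD,\varepsilon,\delta,\mu)$, and taking intersection over $\mu$ gives $\dfa_n = \dfr_n$, so the sample complexities coincide exactly. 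I would state this as a short standalone observation before the quantitative bounds. Finally I would assemble the three pieces, substituting the covering bound into the $\varepsilon^{-4}$ upper bound and absorbing the polylog-in-$1/\varepsilon$ factors, to obtain the claimed $O(\varepsilon^{-4}(\fat_\cD(\varepsilon/25)(\log(2/\varepsilon))^2 + \log(2/\delta)))$, tracking constants so that $\varepsilon/25$ and $12\varepsilon$ come out as stated.
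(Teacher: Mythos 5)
Your handling of the equality $\sampdfr([0,1]^X,\cD,\varepsilon,\delta) = \sampdfa([0,1]^X,\cD,\varepsilon,\delta)$ is correct: since $\cP = [0,1]^X$ always contains $p^*$, the agnostic offset $\inf_{p\in\cP}\adv_{\mu,\cD}(p,p^*)$ vanishes identically and the two sets of algorithms coincide. The other two pieces, however, have genuine gaps.

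\textbf{Lower bound.} Your packing argument does not go through. You propose to take all $2^n$ predictors that choose, on each $\gamma$-fat-shattered point $x_i$, one of two well-separated values, and to claim these are pairwise far in $\|\cdot\|_{\mu,\cD}$. But two such predictors that disagree on only a single coordinate $x_i$ satisfy $|\langle p_1-p_2, d\rangle_\mu| \le \tfrac{1}{n}|p_1(x_i)-p_2(x_i)|\,|d(x_i)| \le \tfrac 1n$ for \emph{every} $d$, since $\mu$ is uniform on $n$ points. This is far below $2\varepsilon$ once $n > 1/(2\varepsilon)$. Fat shattering gives you a distinguisher $d$ with the right signs, but the advantage $\langle p_1-p_2,d\rangle_\mu$ is an \emph{average over coordinates}, so per-coordinate sign control does not give pairwise separation. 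You cannot conclude $N_{\mu,\cD}(\cP,2\varepsilon)\ge 2^n$ this way. The paper instead proves \lemmaref{lm:fat-covering}: fat shattering (via a hyperplane-separation argument) forces a cube $[-6\varepsilon,6\varepsilon]^X$ inside the symmetric convex hull $\bar\cD$, so the unit ball of $\|\cdot\|_{\mu,\cD}$ is small, and a \emph{volume} comparison (\lemmaref{lm:helper}) gives $\log N_{\mu,\cD}([0,1]^X,\varepsilon)\ge n/8$. The volume argument is what lets you compare $[0,1]^X$ against a small dual ball without requiring pairwise packing.

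\textbf{Upper bound.} You propose to apply the distribution-specific bound of \theoremref{thm:characterization}, i.e.\ $\sampdsr(\cdot,\cdot,\varepsilon,\delta,\mu) \le O(\varepsilon^{-4}\log N_{\mu,\cD}([0,1]^X,\Theta(\varepsilon))+\varepsilon^{-2}\log(2/\delta))$, and then argue the metric entropy is bounded uniformly over $\mu$ by a fat-shattering quantity. This is insufficient: $\sampdfr$ requires a \emph{single} algorithm $\cA \in \bigcap_\mu \dsr_n(\cdot,\mu)$, while \theoremref{thm:characterization} is realized by \algorithmref{alg:distinguisher-covering}, which explicitly takes $\mu$ as a parameter to compute its covering. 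Bounding $\sup_\mu \sampdsr(\cdot,\mu)$ does not bound $\sampdfr$, because the witnessing algorithm is allowed to change with $\mu$. (In general $\sampdfr \ge \sup_\mu \sampdsr$, and the inequality can be strict.) The paper sidesteps this by using the multiaccuracy-boost \algorithmref{alg:iterative-updates}, which never refers to $\mu$ at all; its correctness is analyzed with a distribution-free uniform convergence bound over $\cD$ (\lemmaref{lm:uniform-fat} and \lemmaref{lm:uniform-OI}), and the iteration count is controlled by the potential $\|p-p^*\|_2^2$ (\lemmaref{lm:iterative-updates}). Your covering-number-via-fat-shattering intuition is the right complexity bookkeeping, but it must be wrapped around a genuinely distribution-free procedure rather than the covering-based ERM-style algorithm.
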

This theorem is a direct corollary of the sample complexity upper bound (\lemmaref{lm:distribution-free-upper}) and lower bound (\lemmaref{lm:distribution-free-lower}) we prove in the remaining of this section.\footnote{
If we assume $\ell:=\sup_{x\in X}\sup_{d\in\cD}|d(x)| > 3\varepsilon$ and $\delta\in (0,1/3)$,
it is rather straightforward to show a sample complexity lower bound of $\Omega(\ell^2\varepsilon^{-2}\log(1/\delta))$ by a reduction from estimating the bias of a coin from independent tosses.
We omit the details as our focus is on the dependence on $\cD$, rather than on $\varepsilon$ and $\delta$.
}
With some modifications,
\theoremref{thm:characterization-free} also extends to \emph{multicalibration} (see Remarks~\ref{remark:multicalibration-upper} and \ref{remark:multicalibration-lower}).
\begin{remark}
\label{remark:general-P-D}
If we drop the assumption that $\cP = [0,1]^X$ and assume instead that $\cD = [-1,1]^X$, by \lemmaref{lm:advantage-l1}, our error objective $\adv_{\mu,\cD
}(\cdot,\cdot)$ becomes the $\ell_1$ error.
By the ideas and results in \citep{MR1408000,bartlett1995more}, it holds that
\[
\sampdfr(\cP, [-1,1]^X,\varepsilon,\delta) \le O\Big(\varepsilon^{-4}\Big(\fat_\cP(\varepsilon^2/20)(\log(2/\varepsilon))^2 + \log(2/\delta)\Big)\Big).
\]
Combining this with \eqref{eq:characterization-free-upper} and using the monotonicity \eqref{eq:samp-monotone-1} of the sample complexity of realizable OI, without assuming $\cP = [0,1]^X$ or $\cD = [-1,1]^X$, we have
\[
\sampdfr(\cP, \cD, \varepsilon,\delta) \le O\bigg(
\varepsilon^{-4}\Big(\min\{\fat_\cD(\varepsilon/25), \fat_\cP(\varepsilon^2/20)\}(\log(2/\varepsilon))^2 + \log(2/\delta)\Big)\bigg).
\]
This, however, does not give a sample complexity characterization for distribution-free realizable OI because when $\fat_\cD(\varepsilon/25)$ and $\fat_\cP(\varepsilon^2/20)$ are both infinite, it is still possible to have $\sampdfr(\cP, \cD, \varepsilon,\delta)$ being finite (see the example in \sectionref{sec:characterization}).
\end{remark}
\subsection{Upper Bound}
We prove our upper bound using 
the \emph{multiaccuracy boost} algorithm (\algorithmref{alg:iterative-updates}). \citet{kim2019multiaccuracy} used the algorithm to show a sample complexity upper bound for multiaccuracy based on an abstract notion of dimension of the distinguisher class. We make their upper bound concrete using the fat-shattering dimension and match it with a lower bound in \sectionref{sec:fat-lower}. The following standard uniform convergence result based on the fat-shattering dimension is crucial for our upper bound:

\begin{lemma}[Uniform convergence from fat shattering \citep{bartlett1995more}]
\label{lm:uniform-fat}
\ifdefined\preprint
\else
\\
\fi
Let $\cF\subseteq [-1,1]^X$ be a function class. 
For every $\varepsilon,\delta  \in (0,1)$ there exists $n\in\bZ_{>0}$ such that
\[
n = O\Big(\varepsilon^{-2}\Big(\fat_\cF(\varepsilon/5)(\log(2/\varepsilon))^2 + \log(2/\delta)\Big)\Big)
\]
and for every probability distribution $\mu$ over $X$,
\[
\Pr\left[\sup_{f\in \cF}\left|\frac 1n\sum_{i=1}^n f(x_i) - E_f\right| \ge \varepsilon\right] \le \delta,
\]
where $x_1,\ldots,x_n$ are drawn i.i.d.\ from $\mu$ and $E_f:=\bE_{x\sim\mu}[f(x)]$ for all $f\in \cF$.
\end{lemma}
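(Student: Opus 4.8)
The plan is to obtain \lemmaref{lm:uniform-fat} from the standard real-valued uniform convergence machinery, essentially repackaging the argument of \citet{bartlett1995more} (building on \citep{MR1481318}) into the stated quantitative form. First I would apply a symmetrization (ghost-sample) argument: introducing an independent copy $x_1',\ldots,x_n'$ of the sample together with Rademacher signs $\sigma_1,\ldots,\sigma_n$, the probability that $\sup_{f\in\cF}|\frac 1n\sum_{i=1}^n f(x_i) - E_f|$ exceeds $\varepsilon$ is bounded, up to a constant factor, by the probability (conditioned on the $2n$ points) that $\sup_{f\in\cF}|\frac 1n\sum_{i=1}^n\sigma_i(f(x_i) - f(x_i'))|$ exceeds $\varepsilon/2$. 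This reduces everything to a statement about $\cF$ restricted to a fixed finite set of $2n$ points.

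Next comes the heart of the argument: bounding the empirical $\ell_\infty$-covering number of $\cF$ at scale $\Theta(\varepsilon)$ on any finite point set in terms of the fat-shattering dimension. The key input is the scale-sensitive analogue of the Sauer--Shelah lemma due to \citet{MR1481318}, sharpened in \citep{bartlett1995more}, which shows that the $\ell_\infty$-covering number of $\cF$ at scale $\alpha$ on $m$ points is at most $\exp\big(O(\fat_\cF(\alpha/5)\log^2(m/\alpha))\big)$. Taking $\alpha = \varepsilon/c$ for a suitable constant $c$ and $m = 2n$, the log-covering number is $O\big(\fat_\cF(\varepsilon/5)\log^2(n/\varepsilon)\big)$.

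Finally I would close with a net-plus-concentration union bound. Pick a minimal $\ell_\infty$-net of $\cF$ of the size bounded above; for each net element the centered sum $\frac 1n\sum_i\sigma_i(f(x_i)-f(x_i'))$ is an average of independent terms bounded in $[-2,2]$, so Hoeffding gives deviation probability at most $2\exp(-\Omega(n\varepsilon^2))$; a union bound over the net, combined with the symmetrization step, then requires $n\varepsilon^2 \ge \Omega(\log N + \log(1/\delta))$, where $N$ is the net size. Substituting the covering bound and absorbing the residual $\log n$ term (which is dominated once $n$ is polynomially large in $1/\varepsilon$, $\fat_\cF(\varepsilon/5)$ and $\log(1/\delta)$ — one verifies this by the standard trick of solving the inequality $n \ge A + B\log n$ for $n$) yields the claimed $n = O\big(\varepsilon^{-2}(\fat_\cF(\varepsilon/5)(\log(2/\varepsilon))^2 + \log(2/\delta))\big)$, uniformly over $\mu$ since every bound above is distribution-free.

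The main obstacle — really the only nontrivial ingredient — is the fat-shattering covering-number bound invoked in the second step: converting a purely combinatorial shattering parameter into a metric covering estimate with merely a polylogarithmic overhead is the deep part, and it is exactly what \citep{MR1481318, bartlett1995more} establish, so I would cite it rather than reprove it. The remaining steps are routine symmetrization and concentration; the only bookkeeping care needed is aligning the various constant-factor rescalings of $\varepsilon$ (through symmetrization, the net scale, and the argument of $\fat_\cF$ in the covering lemma) so that the fat-shattering dimension ends up evaluated at $\varepsilon/5$ as stated, and handling the mild $\log n$ dependence in the covering bound as described above.
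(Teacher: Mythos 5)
The paper gives no proof of this lemma; it is attributed to \citet{bartlett1995more} and invoked as a black box (via \lemmaref{lm:uniform-OI}), so the only question is whether your reconstruction is sound.

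The symmetrization and $\ell_\infty$-covering steps are standard and fine, but the last step has a genuine gap. The union bound over a single-scale net gives the requirement $n\varepsilon^2 \gtrsim \fat_\cF(\varepsilon/5)\log^2(n/\varepsilon) + \log(1/\delta)$, and solving this self-referentially does \emph{not} erase the $\log n$: the fixed point is $n = O\big(\varepsilon^{-2}\big(\fat_\cF(\varepsilon/5)\,\log^2(\fat_\cF(\varepsilon/5)/\varepsilon) + \log(1/\delta)\big)\big)$, carrying an extra $\log^2\fat_\cF(\varepsilon/5)$ that the stated bound does not have. The lemma allows $\fat_\cF(\varepsilon/5)$ to be arbitrarily large, even super-polynomial in $1/\varepsilon$, and in that regime your bound is weaker by a factor polynomial in $1/\varepsilon$. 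Eliminating that $\log\fat$ dependence cannot be done by a single-resolution net plus a union bound; it is precisely the sharper technical content of \citet{bartlett1995more}, so that step, and not only the covering lemma, needs to be taken from the reference rather than dismissed as bookkeeping.
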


\begin{lemma}
\label{lm:uniform-OI}
Let $\cD\subseteq [-1,1]^X$ be a distinguisher class. For every $\varepsilon,\delta\in(0,1)$ there exists $n\in \bZ_{>0}$ such that 
\[
n = O\Big(\varepsilon^{-2}\Big(\fat_\cD(\varepsilon/5)(\log(2/\varepsilon))^2 + \log(2/\delta)\Big)\Big)
\]
and for every distribution $\mu$ over $X$ and every predictor $p\in[0, 1]^X$,
\[
\Pr\left[\sup_{d\in\cD} \left|\frac 1n \sum_{i=1}^n o_id(x_i) - \langle p, d\rangle_\mu\right|\ge \varepsilon\right] \le \delta,
\]
where $(x_1,o_1),\ldots,(x_n,o_n)$ are drawn i.i.d.\ from $\mu_p$.
\end{lemma}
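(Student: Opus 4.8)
The plan is to reduce \lemmaref{lm:uniform-OI} to the uniform-convergence statement \lemmaref{lm:uniform-fat} by lifting the problem from the ground set $X$ to the ground set $X\times\{0,1\}$. For each distinguisher $d\in\cD$, define $g_d:X\times\{0,1\}\to[-1,1]$ by $g_d(x,o)=o\cdot d(x)$, and set $\cF=\{g_d:d\in\cD\}\subseteq[-1,1]^{X\times\{0,1\}}$. If $(x_1,o_1),\ldots,(x_n,o_n)$ are drawn i.i.d.\ from $\mu_p$, viewed as a distribution over $X\times\{0,1\}$, then $\frac1n\sum_{i=1}^n o_id(x_i)=\frac1n\sum_{i=1}^n g_d(x_i,o_i)$, and since the conditional mean of $o$ given $x$ is $p(x)$ we get $\bE_{(x,o)\sim\mu_p}[g_d(x,o)]=\bE_{x\sim\mu}[p(x)d(x)]=\langle p,d\rangle_\mu$. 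Hence the event whose probability we must bound in \lemmaref{lm:uniform-OI} is exactly the uniform-convergence event for the class $\cF$ under the distribution $\mu_p$, and it suffices to invoke \lemmaref{lm:uniform-fat} for $\cF$ and $\mu_p$.

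To get the stated sample size out of \lemmaref{lm:uniform-fat}, I would next show $\fat_\cF(\gamma)\le\fat_\cD(\gamma)$ for every $\gamma>0$. Suppose pairs $(x_1,o_1),\ldots,(x_n,o_n)\in X\times\{0,1\}$ are $\gamma$-fat shattered by $\cF$ with witnesses $r_1,\ldots,r_n$. First observe that no $o_i$ can be $0$: on such a coordinate $g_d(x_i,0)=0$ for every $d$, so the sign pattern with $b_i=+1$ would force $r_i\le-\gamma$ while the pattern with $b_i=-1$ would force $r_i\ge\gamma$, impossible for $\gamma>0$. Thus every $o_i=1$, and there $g_d(x_i,1)=d(x_i)$, so the fat-shattering condition for $\cF$ becomes precisely the statement that $x_1,\ldots,x_n$ are $\gamma$-fat shattered by $\cD$ with the same witnesses $r_1,\ldots,r_n$. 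Therefore $n\le\fat_\cD(\gamma)$, giving $\fat_\cF(\gamma)\le\fat_\cD(\gamma)$.

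Finally, apply \lemmaref{lm:uniform-fat} to $\cF$: it produces an $n=O(\varepsilon^{-2}(\fat_\cF(\varepsilon/5)(\log(2/\varepsilon))^2+\log(2/\delta)))$ such that for every distribution over $X\times\{0,1\}$, and in particular for $\mu_p$, the sup-deviation exceeds $\varepsilon$ with probability at most $\delta$. Substituting $\fat_\cF(\varepsilon/5)\le\fat_\cD(\varepsilon/5)$ shows this same $n$ meets the bound claimed in \lemmaref{lm:uniform-OI}; note $n$ depends on neither $p$ nor $\mu$ because the bound on $\fat_\cF$ is distribution-independent. Translating the conclusion back through the identities $\frac1n\sum_{i=1}^n g_d(x_i,o_i)=\frac1n\sum_{i=1}^n o_id(x_i)$ and $\bE_{\mu_p}[g_d]=\langle p,d\rangle_\mu$ yields exactly the desired inequality.

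The only step that is not routine bookkeeping is the fat-shattering comparison, and within it the observation that the $o=0$ slice of the lifted class $\cF$ is identically zero and therefore carries no shattering power. Once that is in place, shattering on $X\times\{0,1\}$ collapses to shattering on $X$ and the result follows from the already-available uniform convergence lemma; I do not anticipate any obstacle beyond stating this reduction carefully.
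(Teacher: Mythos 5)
Your proof is correct and takes essentially the same route as the paper: you lift each $d\in\cD$ to $\tilde d(x,o)=o\,d(x)$ on $X\times\{0,1\}$, reduce to \lemmaref{lm:uniform-fat} under $\mu_p$, and observe that the $o=0$ slice is identically zero so fat-shattered pairs must all have $o_i=1$, giving $\fat_{\cF}(\gamma)\le\fat_{\cD}(\gamma)$. The only cosmetic difference is that the paper records the equality $\fat_{\tilde\cD}(\gamma)=\fat_{\cD}(\gamma)$ while you prove only the one-sided inequality actually needed.
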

\begin{proof}
For every $d\in\cD$, define $\tilde d:X\times\{0,1\}\rightarrow [-1,1]$ to be a function that maps $(x,o)\in X\times\{0,1\}$ to $od(x)$. Define $\tilde\cD = \{\tilde d: d\in\cD\}$. 

We show that $\fat_{\tilde\cD}(\varepsilon) = \fat_\cD(\varepsilon)$ for all $\varepsilon >0$. Consider $(x_1,o_1),\ldots,(x_m,o_m)$ that are $\varepsilon$-fat shattered by $\tilde \cD$. Since every $\tilde d$ in $\tilde \cD$ maps $(x,0)$ to $0$ for all $x\in X$, we know that $o_1 = \cdots = o_m = 1$. This then implies that $x_1,\ldots,x_m$ are $\varepsilon$-fat shattered by $\cD$. Conversely, if $x_1,\ldots,x_m$ are $\varepsilon$-fat shattered by $\cD$, it is clear that $(x_1,1),\ldots,(x_m,1)$ are $\varepsilon$-fat shattered by $\tilde\cD$.

The lemma then follows directly from applying \lemmaref{lm:uniform-fat} with $\cF = \tilde\cD$.
\end{proof}

Now we state our sample complexity upper bound for distribution-free OI (\lemmaref{lm:distribution-free-upper}) and prove it using \algorithmref{alg:iterative-updates} and a helper lemma (\lemmaref{lm:iterative-updates}).
\begin{lemma}
\label{lm:distribution-free-upper}
For every distinguisher class $\cD$, every advantage bound $\varepsilon\in(0,1)$, and every failure probability bound $\delta\in(0,1)$,
the following  sample complexity upper bound holds for distribution-free OI:
\begin{align*}
\sampdfr([0,1]^X, \cD,\varepsilon,\delta) & = \sampdfa([0,1]^X, \cD,\varepsilon,\delta)\\
& \le
O\Big(\varepsilon^{-4}\Big(\fat_\cD(\varepsilon/25)(\log(2/\varepsilon))^2 + \log(2/\delta)\Big)\Big).
\end{align*}
\end{lemma}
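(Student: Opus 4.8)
The plan is to run the \emph{multiaccuracy boost} procedure (\algorithmref{alg:iterative-updates}), a projected-gradient-style iteration over predictors, and to show that by drawing a \emph{fresh} batch of examples at each round the whole analysis reduces to a single uniform-convergence statement over $\cD$, namely \lemmaref{lm:uniform-OI}. Before that, note that the equality $\sampdfr([0,1]^X,\cD,\varepsilon,\delta)=\sampdfa([0,1]^X,\cD,\varepsilon,\delta)$ is immediate: since every target $p^*$ already lies in $\cP=[0,1]^X$, we have $\inf_{p\in\cP}\adv_{\mu,\cD}(p,p^*)=0$, so the agnostic and realizable problems are literally the same set of algorithms, and it suffices to bound the realizable sample complexity.

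The algorithm maintains $p_t\in[0,1]^X$ with $p_0\equiv 1/2$. At round $t$ it uses a fresh batch of $n_0$ examples $(x_1,o_1),\ldots,(x_{n_0},o_{n_0})\sim\mu_{p^*}$, internally re-samples $\tilde o_i\sim\ber(p_t(x_i))$ so that $(x_i,\tilde o_i)$ is distributed as $\mu_{p_t}$ conditionally on $p_t$, and forms, for each $d\in\cD$, the estimate $\widehat c_d=\frac1{n_0}\sum_i\tilde o_i d(x_i)-\frac1{n_0}\sum_i o_i d(x_i)$ of $\langle d,p_t-p^*\rangle_\mu$. If $\sup_{d\in\cD}|\widehat c_d|$ falls below a threshold $\Theta(\varepsilon)$ it halts and outputs $p_t$; otherwise it picks the maximizing $d_t$ and the corresponding sign $c_t$ and sets $p_{t+1}=\mathrm{clip}_{[0,1]}\!\big(p_t-\eta\, c_t d_t\big)$ with step size $\eta=\Theta(\varepsilon)$. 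To bound the number of rounds I would use the potential $\Phi_t=\bE_{x\sim\mu}\big[(p_t(x)-p^*(x))^2\big]$: pointwise clipping to $[0,1]$ is the projection onto a convex set containing the range of $p^*$, hence it only decreases the distance to $p^*$, so $\Phi_{t+1}\le\Phi_t-2\eta\, c_t\langle d_t,p_t-p^*\rangle_\mu+\eta^2$; when the halting test fails the correlation term is $\Omega(\varepsilon)$, and with $\eta=\Theta(\varepsilon)$ this gives $\Phi_{t+1}\le\Phi_t-\Omega(\varepsilon^2)$, so from $\Phi_0\le 1$ the process halts within $T=O(\varepsilon^{-2})$ rounds (and we cap it there).

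For correctness I would apply \lemmaref{lm:uniform-OI} twice in each round: once to $(x_i,o_i)\sim\mu_{p^*}$, giving $\sup_{d\in\cD}|\frac1{n_0}\sum_i o_i d(x_i)-\langle d,p^*\rangle_\mu|\le\varepsilon'$, and once to $(x_i,\tilde o_i)\sim\mu_{p_t}$ for the analogous bound on $\langle d,p_t\rangle_\mu$; each needs only $n_0=O\!\big(\varepsilon'^{-2}(\fat_\cD(\varepsilon'/5)(\log(2/\varepsilon'))^2+\log(2/\delta'))\big)$ examples. The point of a fresh batch is that $p_t$ is independent of it, so we never need uniform convergence over the data-dependent iterates, only over $\cD$. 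A union bound over the $T=O(\varepsilon^{-2})$ rounds with $\delta'=\delta/(2T)$ makes all $2T$ events hold with probability $1-\delta$; on that event $|\widehat c_d|$ is within $2\varepsilon'$ of $\adv_{\mu,d}(p_t,p^*)$ for every $d$ and every round, so the output satisfies $\adv_{\mu,\cD}(p_t,p^*)=O(\varepsilon')$ and the potential argument guarantees the algorithm actually halts with such a $p_t$. Taking $\varepsilon'=\Theta(\varepsilon)$ so the final advantage is at most $\varepsilon$ yields a total of $T\cdot n_0=O\!\big(\varepsilon^{-4}(\fat_\cD(\varepsilon/25)(\log(2/\varepsilon))^2+\log(2/\delta))\big)$ examples, after absorbing $\log(2T/\delta)$ into the displayed terms. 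I expect this bookkeeping of constants, together with the helper \lemmaref{lm:iterative-updates} packaging the per-round guarantee, to be the bulk of the writeup.

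The main obstacle is choosing the right sampling scheme. The naive approach of reusing one sample across all rounds would force a uniform-convergence statement over the class $\{x\mapsto d(x)p(x):d\in\cD,\ p\text{ a reachable iterate}\}$; the reachable iterates are clipped sums of $T=O(\varepsilon^{-2})$ functions from $\cD$, so the fat-shattering dimension of this product class is controlled by $\fat_\cD$ evaluated at scale $\sim\varepsilon^2$ rather than $\sim\varepsilon$, which would break the match with the $\fat_\cD(\Theta(\varepsilon))$ lower bound of \lemmaref{lm:distribution-free-lower}. Drawing independent batches per round --- and, within a round, estimating $\langle d,p_t\rangle_\mu$ by resampling outcomes from $p_t$ and invoking \lemmaref{lm:uniform-OI} directly, rather than by an empirical average of $p_t\cdot d$ --- removes this issue and keeps the exponent on the argument of $\fat$ linear; everything else (the potential decrement, the error budget, the union bound) is routine.
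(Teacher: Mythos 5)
Your proposal follows the paper's approach: multiaccuracy boost with fresh batches per round, potential function $\bE[(p_t - p^*)^2]$ decreasing by $\Omega(\varepsilon^2)$ per update (this is \lemmaref{lm:iterative-updates}), uniform convergence over $\cD$ via \lemmaref{lm:uniform-OI} applied twice per round (to $(x_i,o_i)\sim\mu_{p^*}$ and to the resampled $(x_i,\tilde o_i)\sim\mu_{p_t}$), and a union bound over the $T=O(\varepsilon^{-2})$ rounds. The observation that $\sampdfr=\sampdfa$ when $\cP=[0,1]^X$ and your motivating remark about why a shared sample would push $\fat$ to scale $\sim\varepsilon^2$ are both correct, with the latter being a nice explanation the paper omits.

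There is one genuine gap in the bookkeeping, which the paper handles but you do not. Your union bound over $2T$ events forces $\delta'=\delta/(2T)$, so each per-round sample size picks up a $\log(2T/\delta)=\log(2/\delta)+\Theta(\log(1/\varepsilon))$ term. You claim this is absorbed into the displayed terms, but the absorption only works when $\fat_\cD(\varepsilon/25)\ge 1$, since then $\log(1/\varepsilon)\le\fat_\cD(\varepsilon/25)(\log(2/\varepsilon))^2$. When $\fat_\cD(\varepsilon/25)=0$ the boosting-based argument yields $O(\varepsilon^{-4}(\log(1/\varepsilon)+\log(2/\delta)))$, which does not match the claimed $O(\varepsilon^{-4}\log(2/\delta))$ (e.g.\ for constant $\delta$). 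The paper disposes of this case separately: $\fat_\cD(\varepsilon/25)=0$ means there is a single $d^*\in[-1,1]^X$ with $|d(x)-d^*(x)|\le\varepsilon/25$ for every $d\in\cD$ and $x\in X$, so it suffices to estimate $\langle p^*, d^*_+\rangle_\mu$ and $\langle p^*, d^*_-\rangle_\mu$ from $O(\varepsilon^{-2}\log(2/\delta))$ samples and output a suitable two-valued predictor; no boosting and no union over rounds is needed. You should add this branch (or otherwise weaken the claim to assume $\fat_\cD(\varepsilon/25)\ge 1$) before the argument is complete.
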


\begin{algorithm2e}
  \LinesNumbered
  \SetKwInOut{KwPa}{Parameters}
  \SetKwInOut{KwIn}{Input}
  \SetKwInOut{KwOut}{Output}
  \KwPa{distinguisher class $\cD$, advantage bound $\varepsilon\in(0,1)$.}
  \KwIn{examples $(x_1,o_1),\ldots,(x_n,o_n)$.}
  \KwOut{predictor $p\in [0,1]^X$. }
  Initialize $p(x) = 1/2$ for all $x\in X$\;
  $T \gets \lceil 16/\varepsilon^2 \rceil$\;
  $m \gets \lfloor n / T\rfloor$\;
  \For{$t = 1,\ldots,T$}{
    take fresh examples $(x_1^*,o_1^*),\ldots,(x_m^*, o_m^*)$ where $(x_i^*,o_i^*) = (x_{(t - 1)m + i}, o_{(t-1)m + i)})$\;
  	draw $o_i'$ from $\ber(p(x_i^*))$ independently for all $i = 1,\ldots,n$\;
  	\eIf{there exists $d\in\cD\cup(-\cD)$ such that $\frac 1m\sum_{i=1}^m d(x_i^*)(o_i^* - o'_i) \ge 3\varepsilon /5$\label{line:1}
	}{
		$p(x)\gets p(x) + \varepsilon d(x)/5$ for all $x\in X$\label{line:2}\;
		$p(x) \gets \max\{0,\min\{1, p(x)\}\}$ for all $x\in X$\label{line:3}\;
	}{
	\Return $p$\label{line:4}\;
	}
  }{
  \Return $p$\label{line:5}\;
  }
  \caption{Multiaccuracy Boost \citep{hebert2018multicalibration,kim2019multiaccuracy}}
  \label{alg:iterative-updates}
\end{algorithm2e}
\begin{lemma}
\label{lm:iterative-updates}
If $\langle p^* - p, d\rangle_{\mu} \ge \varepsilon/5$ for some predictor $p^*\in[0,1]^X$ and some distribution $\mu\in\Delta_X$ before Line~\ref{line:2} of \algorithmref{alg:iterative-updates} is executed, then Lines~\ref{line:2}-\ref{line:3} decrease
$\|p - p^*\|_2^2
$
by at least $\varepsilon^2/25$. Here, we use $\|f\|_2^2$ as a shorthand for $\langle f, f\rangle_\mu$.
\end{lemma}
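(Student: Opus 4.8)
The plan is to analyze one execution of Lines~\ref{line:2}--\ref{line:3} as a projected gradient step and track the potential $\|p-p^*\|_2^2$. Write $p$ for the predictor just before Line~\ref{line:2} and $p'$ for the predictor after Line~\ref{line:3}; let $p''=p+\varepsilon d/5$ be the intermediate predictor produced by Line~\ref{line:2} alone (before clipping). First I would handle the unclipped update: expanding the square,
\[
\|p'' - p^*\|_2^2 = \|p - p^*\|_2^2 - \tfrac{2\varepsilon}{5}\langle p^* - p, d\rangle_\mu + \tfrac{\varepsilon^2}{25}\langle d, d\rangle_\mu.
\]
Since $d\in\cD\cup(-\cD)\subseteq[-1,1]^X$ we have $\langle d,d\rangle_\mu = \bE_{x\sim\mu}[d(x)^2]\le 1$, and by hypothesis $\langle p^*-p,d\rangle_\mu \ge \varepsilon/5$, so the middle term is at most $-2\varepsilon^2/25$. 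This already gives $\|p''-p^*\|_2^2 \le \|p-p^*\|_2^2 - \varepsilon^2/25$.

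The remaining point is that clipping to $[0,1]$ (Line~\ref{line:3}) only helps. This is the standard fact that projection onto a convex set is nonexpansive, but here it is especially simple because the clipping is coordinatewise: for each fixed $x\in X$, the map $t\mapsto \max\{0,\min\{1,t\}\}$ is $1$-Lipschitz and fixes $p^*(x)\in[0,1]$, so $|p'(x)-p^*(x)| = |\,\mathrm{clip}(p''(x)) - \mathrm{clip}(p^*(x))\,| \le |p''(x) - p^*(x)|$. Taking expectations over $x\sim\mu$ of the squared inequality yields $\|p'-p^*\|_2^2 \le \|p''-p^*\|_2^2$. Chaining the two displays gives the claimed decrease of at least $\varepsilon^2/25$.

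I do not expect a genuine obstacle here; the lemma is essentially a one-line convexity/gradient-descent computation. The only thing to be careful about is bookkeeping around the clipping step --- making sure $p^*$ lands in the clipping range so that it is a fixed point, and that the nonexpansiveness is applied coordinatewise before integrating against $\mu$ --- and the bound $\langle d,d\rangle_\mu\le 1$, which is where the constant $1/25$ (rather than something smaller) comes from. No earlier results from the excerpt are actually needed beyond the definitions of $\langle\cdot,\cdot\rangle_\mu$ and $\|\cdot\|_2$.
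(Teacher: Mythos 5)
Your proof is correct and takes essentially the same route as the paper's: expand the square for the unclipped update using $\langle d,d\rangle_\mu\le 1$ and the hypothesis $\langle p^*-p,d\rangle_\mu\ge\varepsilon/5$ to get a decrease of $\varepsilon^2/25$, then observe that clipping cannot increase $\|p-p^*\|_2^2$. The only difference is that you spell out the coordinatewise nonexpansiveness argument for the clipping step, which the paper states in one sentence without proof.
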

\begin{proof}
Line~\ref{line:2} decreases $\|p^* - p\|_2^2$ by at least $\varepsilon ^2 /25$ because
\begin{align*}
& \|p^* - p\|_2^2 - \|p^* - (p + (\varepsilon  / 5)d)\|_2^2\\
= {} & 2 \langle p^* - p, (\varepsilon / 5) d \rangle_\mu - (\varepsilon /5)^2\|d\|_2^2\\
\ge {} & 2(\varepsilon / 5)^2 - (\varepsilon / 5)^2\\
= {} & \varepsilon ^ 2 / 25.
\end{align*}
The lemma is proved by noting that Line~\ref{line:3} never increases $\|p^* - p\|_2^2$.
\end{proof}
Now we finish the proof of \lemmaref{lm:distribution-free-upper}.
\begin{proof}
We first consider the relatively simple case where $\fat_\cD(\varepsilon/25) = 0$. By the definition of the fat-shattering dimension, there exists $d^*\in[-1,1]^X$ such that 
\begin{equation}
\label{eq:fat=0-1}
|d^*(x) - d(x)| \le \varepsilon/25\quad \textnormal{for every $d\in \cD$ and every $x\in X$.}
\end{equation}
Therefore, for every fixed distribution $\mu\in\Delta_X$ and target predictor $p^*\in [0,1]^X$, as long as the learned predictor $p$ satisfies $|\langle p - p^*,d^*\rangle_\mu| \le \varepsilon/2$, we get the desired error bound $\|p - p^*\|_{\mu,\cD} \le \varepsilon$. Decomposing $d^* = d^*_+ + d^*_-$ with $d^*_+(x) := \max\{d^*(x), 0\}$ and $d^*_-(x) := \min\{d^*(x),0\}$ for every $x\in X$, we aim for the stronger goal that $|\langle p - p^*,d^*_+\rangle_\mu| \le \varepsilon/4$ and $|\langle p - p^*,d^*_-\rangle_\mu| \le \varepsilon/4$.

Given input examples $(x_1,o_1),\ldots,(x_n,o_n)$ drawn i.i.d.\ from $\mu_{p^*}$,  we first compute $u_+:= \frac 1n\sum_{i=1}^n d^*_+(x_i)o_i$ and $v_+:= \frac 1n\sum_{i=1}^n d^*_+(x_i)$. It is clear that $0 \le u_+ \le v_+$, so there exists $r_+\in [0,1]$ such that $u_+ = r_+v_+$. Similarly, we compute $u_-,v_-$ satisfying $0 \ge u_- \ge v_-$ and define $r_-\in [0,1]$ so that $u_- = r_-v_-$. We output $p\in [0,1]^X$ with $p(x) = r_+$ if $d^*(x) \ge 0$ and $p(x) = r_-$ otherwise.

By the Chernoff bound and the union bound, we can choose $n = O(\varepsilon^{-2}\log(2/\delta))$ such that with probability at least $1-\delta/2$, both of the following inequalities hold:
\begin{align}
|u_+ - \langle p^*, d^*_+\rangle_\mu| \le \varepsilon/8,\label{eq:fat=0-2}\\
|v_+ - \bE_{x\sim\mu}[d^*_+(x)]| \le \varepsilon/8.\label{eq:fat=0-3}
\end{align}
When multiplied by $r_+$, \eqref{eq:fat=0-3} implies $|u_+ - \langle p, d^*_+ \rangle_\mu| \le \varepsilon/8$. Combining this with \eqref{eq:fat=0-2}, with probability at least $1-\delta/2$, $|\langle p^* - p,d^*_+\rangle_\mu| \le \varepsilon/4$. Similarly, with probability at least $1-\delta/2$, $|\langle p^* - p,d^*_-\rangle_\mu| \le \varepsilon/4$. 
By the union bound, with probability at least $1-\delta$, $|\langle p^* - p,d^*_+\rangle_\mu| \le \varepsilon/4$ and $|\langle p^* - p,d^*_-\rangle_\mu| \le \varepsilon/4$ both hold as desired, so
\begin{align*}
\sampdfr([0,1]^X, \cD,\varepsilon,\delta) & \le O(\varepsilon^{-2}\log(2/\delta))\\
& \le O\Big(\varepsilon^{-4}\Big(\fat_\cD(\varepsilon/25)(\log(2/\varepsilon))^2 + \log(2/\delta)\Big)\Big).
\end{align*}

We thus assume $\fat_\cD(\varepsilon/25)\ge 1$ from now on.
We use \algorithmref{alg:iterative-updates} with $n  = \lceil 25/\varepsilon^2\rceil m$ for a positive integer $m$ chosen as follows. 
Given input examples $(x_1,o_1),\ldots,(x_n,o_n)$ drawn i.i.d.\ from $\mu_{p^*}$ for some $\mu\in\Delta_X$ and $p^*\in [0,1]^X$,
by \lemmaref{lm:uniform-OI} and the union bound, we can choose 
\[
m = O\Big(\varepsilon^{-2}\Big(\fat_\cD(\varepsilon/25)(\log(2/\varepsilon))^2 + \log(2/\delta\varepsilon)\Big)\Big)
\]
so that with probability at least $1-\delta$, every time Line~\ref{line:1} is executed, we have
\begin{align}
& \sup_{d\in\cD}\left|\frac 1m\sum_{i=1}^m d(x_i^*)o_i^* - \langle d, p^*\rangle_\mu\right| \le \varepsilon /5, \textnormal{and} \label{eq:free-upper-1}\\
& \sup_{d\in\cD}\left|\frac 1m\sum_{i=1}^m d(x_i^*)o_i' - \langle d, p\rangle_\mu\right| \le \varepsilon /5. \label{eq:free-upper-2}
\end{align}
Assuming this is true, when the output predictor $p$ is returned at Line~\ref{line:4}, we have
\[
\sup_{d\in\cD}|\langle p - p^*,d\rangle_\mu| \le 3\varepsilon/5 + \varepsilon/5 + \varepsilon / 5 \le\varepsilon,
\]
as desired. 
Moreover, \eqref{eq:free-upper-1} and \eqref{eq:free-upper-2} imply $\langle p^* - p,d\rangle_\mu \ge 3\varepsilon/5 - \varepsilon/5 - \varepsilon/5 = \varepsilon/5$ before every time Line~\ref{line:2} is executed, so
by \lemmaref{lm:iterative-updates},
when the output predictor $p$ is returned at Line~\ref{line:5}, 
\[
\|p - p^*\|_2^2 \le 1 - (\varepsilon^2/25)\lceil 25/\varepsilon^2\rceil \le 0,
\]
as desired. Therefore,
\begin{align*}
\sampdfr([0,1]^X, \cD,\varepsilon,\delta) & = \sampdfa([0,1]^X, \cD,\varepsilon,\delta)\\
& \le n\\
& = \lceil 25/\varepsilon^2\rceil m\\
& = O\Big(\varepsilon^{-4}\Big(\fat_\cD(\varepsilon/25)(\log(2/\varepsilon))^2 + \log(2/\delta\varepsilon)\Big)\Big)\\
& = O\Big(\varepsilon^{-4}\Big(\fat_\cD(\varepsilon/25)(\log(2/\varepsilon))^2 + \log(2/\delta)\Big)\Big).\tag{by $\fat_\cD(\varepsilon/25)\ge 1$}
\end{align*}
\end{proof}
\begin{remark}
\label{remark:multicalibration-upper}
\citet{hebert2018multicalibration} used a modified version of \algorithmref{alg:iterative-updates} for \emph{multicalibration}, and indeed, this gives a sample complexity upper bound for multicalibration similar to \lemmaref{lm:distribution-free-upper}. In multicalibration, the only difference is in the error objective: the interval $[0,1]$ is partitioned into $k$ subsets $\Lambda_1,\ldots,\Lambda_k$ for some $k\in\bZ_{>0}$, and given a distribution $\mu\in \Delta_X$ and a distinguisher class $\cD\subseteq[-1,1]^X$, we measure the error of a learned predictor $p\in [0,1]^X$ w.r.t.\ the target predictor $p^*\in [0,1]^X$ by 
\begin{equation}
\label{eq:mc-error}
\mcerror_{\mu,\cD}(p,p^*) :=
\sup_{d\in \cD,1\le j \le k}|\bE_{x\sim\mu}[(p(x) - p^*(x))d(x)\one(p(x)\in \Lambda_j)]|.
\end{equation}
For $\varepsilon,\delta\in (0,1)$, suppose
our goal is to achieve $\mcerror_{\mu,\cD}(p,p^*) \le \varepsilon$ with probability at least $1-\delta$ in the distribution-free setting. Changing Line~\ref{line:1} of \algorithmref{alg:iterative-updates} to
\begin{align*}
\text{``}
\textnormal{\bfseries if} \textit{ there exists } d\in\cD\cup (-\cD)\textit{ and } j\in \{1,\ldots,k\} 
\textit{ such that }\\
\frac 1m\sum_{i=1}^md(x_i^*)(o_i^* - o_i')\one(p(x_i^*)\in \Lambda_j) \ge 3\varepsilon/5 \textnormal{ \bfseries then}\text{''}
\end{align*}
and changing Line~\ref{line:2} to
\[
\text{``}
p(x) \gets p(x) + \varepsilon d(x)\one(p(x)\in \Lambda_j)/5 \textnormal { for all }x\in X\text{''},
\]
we can prove a sample complexity upper bound of
\[
O\Big(\varepsilon^{-4}\Big(\fat_\cD(\varepsilon/25)(\log(2/\varepsilon))^2 + \log(1/\delta)\Big)\Big).
\]
This bound follows from combining the proof of \lemmaref{lm:distribution-free-upper} with the observation that for every predictor $p\in [0,1]^X$, the following distinguisher class 
\begin{align*}
\cD_p := \{d'\in [-1,1]^X:\exists d\in \cD \text{ and } j\in \{1,\ldots,k\}\text{ such that}\\ \text{for every }x\in X,d'(x) = d(x)\one(p(x)\in \Lambda_j)\}
\end{align*}
satisfies $\fat_{\cD_p}(\varepsilon) \le \fat_{\cD}(\varepsilon) + 1$ for every $\varepsilon>0$.
\end{remark}
\subsection{Lower Bound}
\label{sec:fat-lower}
We first prove the following lemma showing that the fat-shattering dimension gives a lower bound for the metric entropy on a particular distribution $\mu$.
\begin{lemma}
\label{lm:fat-covering}
If $x_1,\ldots,x_n\in X$ are $6\varepsilon$-fat shattered by $\cD$, then
$\log N_{\mu, \cD}([0,1]^X, \varepsilon) \ge n/8$, where $\mu$ is the uniform distribution over $x_1,\ldots,x_n$.
\end{lemma}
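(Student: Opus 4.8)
The plan is to exhibit a large set of predictors that are pairwise far apart in the norm $\|\cdot\|_{\mu,\cD}$, so that a $2\varepsilon$-packing (hence, essentially, the $\varepsilon$-covering number) of $[0,1]^X$ is exponentially large in $n$. Since $x_1,\ldots,x_n$ are $6\varepsilon$-fat shattered by $\cD$, there are reals $r_1,\ldots,r_n$ such that for every sign pattern $b\in\{-1,1\}^n$ there is a distinguisher $d_b\in\cD$ with $b_i(d_b(x_i)-r_i)\ge 6\varepsilon$ for all $i$. For each subset $S\subseteq\{1,\ldots,n\}$, define a predictor $p_S\in[0,1]^X$ by $p_S(x_i)=1$ if $i\in S$ and $p_S(x_i)=0$ otherwise (the values off $\{x_1,\ldots,x_n\}$ are irrelevant since $\mu$ is supported on these points). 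The first step is to show that if $S\ne S'$ differ in a $\rho$-fraction of coordinates, then $\|p_S-p_{S'}\|_{\mu,\cD}$ is at least a constant times $\rho$ (roughly $\varepsilon\rho$ up to constants): pick the sign pattern $b$ that agrees with $\mathrm{sign}(p_S(x_i)-p_{S'}(x_i))$ on the symmetric difference and is arbitrary elsewhere, use the corresponding $d_b$, and bound $\langle d_b, p_S-p_{S'}\rangle_\mu = \frac1n\sum_i d_b(x_i)(p_S(x_i)-p_{S'}(x_i))$ from below. The terms in the symmetric difference each contribute at least $6\varepsilon$ in absolute value in the right direction (after subtracting $r_i$, which cancels when we compare the $\pm$ contributions appropriately — here one should be slightly careful, see below).

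The second step is a standard counting argument: by the Gilbert–Varshamov / Plotkin-type bound, there is a family of at least $2^{n/8}$ subsets of $\{1,\ldots,n\}$ that are pairwise at Hamming distance more than, say, $n/4$ (a random code or a greedy packing in the Hamming cube of relative distance bounded away from $1/2$ achieves rate bounded below by a constant; the constant $1/8$ in the exponent is comfortably attainable). By the first step, the corresponding predictors $p_S$ are pairwise more than $2\varepsilon$ apart in $\|\cdot\|_{\mu,\cD}$ (here the $6\varepsilon$ fat-shattering margin and the $1/4$ relative Hamming distance together buy the needed slack). Hence any $\varepsilon$-covering of $[0,1]^X$ w.r.t.\ $\|\cdot\|_{\mu,\cD}$ must contain a distinct element within distance $\varepsilon$ of each $p_S$, so $N_{\mu,\cD}([0,1]^X,\varepsilon)\ge 2^{n/8}$, i.e.\ $\log N_{\mu,\cD}([0,1]^X,\varepsilon)\ge n/8$.

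The step I expect to be the main obstacle is making the lower bound on $\langle d_b, p_S - p_{S'}\rangle_\mu$ clean in the presence of the thresholds $r_i$. The fat-shattering condition only guarantees $d_b(x_i)\ge r_i+6\varepsilon$ on the "+1" coordinates and $d_b(x_i)\le r_i-6\varepsilon$ on the "$-1$" coordinates, with no control on $r_i$ itself; meanwhile $p_S(x_i)-p_{S'}(x_i)\in\{-1,0,1\}$. The trick is that on coordinates outside the symmetric difference $p_S-p_{S'}$ vanishes, so those $r_i$'s do not enter; and on the symmetric difference we split into the two classes of coordinates where $p_S-p_{S'}=+1$ versus $=-1$, choose $b_i$ to match the sign, and note that $\sum_i d_b(x_i)(p_S(x_i)-p_{S'}(x_i)) = \sum_{i\in\Delta}\mathrm{sign}(p_S(x_i)-p_{S'}(x_i))d_b(x_i)$; writing $d_b(x_i)=r_i+b_i\gamma_i$ with $\gamma_i\ge 6\varepsilon$, the $r_i$ contributions come with the sign $\mathrm{sign}(p_S(x_i)-p_{S'}(x_i))$, which need not cancel. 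The resolution is to instead use a \emph{pair} of predictors built so that the symmetric difference is split evenly in both directions, or more simply to observe that we only need this for \emph{one well-chosen} coding and can arrange $p_S,p_{S'}$ so that the signed sum of the $r_i$'s over $\Delta$ is controlled — alternatively, take $r_i$ out of the picture entirely by using that $\frac1n\sum_{i\in\Delta}|d_b(x_i)|$-type quantities dominate, or average over $b$ and $-b$. I would first try the symmetric construction: replace each $p_S$ by the pair-difference and show directly that for the right $b$ the inner product is at least $6\varepsilon\cdot|\Delta|/n \ge 6\varepsilon\cdot(1/4) > 2\varepsilon$, where the $r_i$ terms telescope because the $+1$ and $-1$ coordinates of $p_S-p_{S'}$ can be matched into pairs sharing the same $r$; if that pairing is not available, fall back on choosing the code inside a single Hamming sphere (constant weight code) so that $|S|=|S'|$ and the symmetric difference splits evenly, forcing the $r_i$ contributions to cancel in pairs.
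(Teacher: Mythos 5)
Your approach is genuinely different from the paper's: you build a $2\varepsilon$-separated packing of $\{0,1\}$-valued predictors indexed by a binary error-correcting code, while the paper proves $[-6\varepsilon,6\varepsilon]^X\subseteq\bar\cD$ via a separating-hyperplane argument and then applies a volume comparison (Lemma~\ref{lm:helper}) to the cube $[-1,1]^n$. The two are close in spirit (both reduce to packing w.r.t.\ $\|\cdot\|_{\mu,\cD}$), but the paper packs continuous boxes whereas you pack Hamming-cube corners; the discretization costs you something quantitatively, as I explain below.

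First, the $r_i$ worry you raise \emph{can} be dispatched cleanly, but not by a constant-weight code: making $|S|=|S'|$ does not force $\sum_{i\in S\setminus S'}r_i - \sum_{i\in S'\setminus S}r_i$ to vanish, since the $r_i$'s are arbitrary and there is no canonical pairing between $S\setminus S'$ and $S'\setminus S$. The correct fix is to use \emph{both} $d_b$ and $d_{-b}$: set $b_i=\mathrm{sign}(p_S(x_i)-p_{S'}(x_i))$ on the symmetric difference $\Delta$, let $R=\sum_{i\in S\setminus S'}r_i-\sum_{i\in S'\setminus S}r_i$, and observe
\[
\langle d_b, p_S-p_{S'}\rangle_\mu \ge \tfrac{1}{n}(R+6\varepsilon|\Delta|),
\qquad
\langle d_{-b}, p_S-p_{S'}\rangle_\mu \le \tfrac{1}{n}(R-6\varepsilon|\Delta|).
\]
Whatever the sign of $R$, one of the two inner products has absolute value at least $\tfrac{1}{n}(6\varepsilon|\Delta|+|R|)\ge 6\varepsilon|\Delta|/n$. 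So the correct separation bound is $\|p_S-p_{S'}\|_{\mu,\cD}\ge 6\varepsilon|\Delta|/n$, and the thresholds $r_i$ never cause trouble.

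Second, and this is the genuine gap: the constants do not work. With separation $6\varepsilon|\Delta|/n$ you need $|\Delta|>n/3$, not $n/4$, to exceed $2\varepsilon$ (at $|\Delta|=n/4$ you only get $3\varepsilon/2$, which is insufficient for the packing-to-covering argument). But the Gilbert--Varshamov bound for binary codes of relative distance $>1/3$ gives rate $1-H(1/3)\approx 0.082$, which is strictly less than $1/8=0.125$; no larger rate is known to be attainable at that distance, and your ``comfortably attainable'' claim is only true at relative distance $1/4$, where the separation is too small. So the $\{0,1\}$-valued construction proves $\log N_{\mu,\cD}([0,1]^X,\varepsilon)\gtrsim 0.08\,n$, not the stated $n/8$. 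The paper escapes this by packing boxes $\{f\}+[-6\varepsilon,6\varepsilon]^X$ inside $[-1,1]^X$ and comparing volumes, which yields $n\log(3/e)\approx 0.142\,n > n/8$; in effect it exploits the full continuum $[0,1]^X$ rather than its Hamming-cube discretization. If you want to retain the coding-theoretic flavor you would need to pack against a denser grid (e.g.\ $q$-ary levels in $[0,1]$), at which point the argument essentially becomes the volume bound.
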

\begin{proof}
We can assume without loss of generality that $X = \{x_1,\ldots,x_n\}$.
The assumption that $x_1,\ldots,x_n$ are $6\varepsilon$-fat shattered by $\cD$ implies the existence of a function $r\in\bR^X$ such that for every function $b \in\{-1,1\}^X$, there exists a distinguisher $d_b\in\cD$ satisfying 
\begin{equation}
\label{eq:fat-covering-shattering}
b(x)(d_b(x) - r(x)) \ge 6\varepsilon\quad \text{for all}\ x\in X.
\end{equation}

We first show that $\{r\} + [-6\varepsilon, 6\varepsilon]^X$ is a subset of the symmetric convex hull $\bar \cD$. 
Assume for the sake of contradiction that some $f\in \{r\} + [-6\varepsilon, 6\varepsilon]^X$ does not belong to $\bar\cD$.
In particular, $f$ does not belong to the symmetric convex hull of $\{d_b:b\in\{-1,1\}^X\}$. 
By the hyperplane separation theorem, there exists $g\in \bR^X$ such that 
\begin{equation}
\label{eq:hyperplane-separation}
\langle g, d_b - f\rangle_\mu < 0\quad \text{for all}\ b\in\{-1,1\}^X. 
\end{equation}
Consider the function $b\in\{-1,1\}^X$ such that $b(x) = 1$ if and only if $g(x) \ge 0$. For $x\in X$ with $b(x) = 1$, we have $d_b(x) - r(x) \ge 6\varepsilon$ by \eqref{eq:fat-covering-shattering} and thus $d_b(x) \ge f(x)$. Similarly, for $x\in X$ with $b(x) = -1$ we have $d_b(x) \le f(x)$. In both cases, we have $g(x)(d_b(x) - f(x)) \ge 0$, a contradiction with \eqref{eq:hyperplane-separation}.

Now we have proved that $\{r\} + [-6\varepsilon,6\varepsilon]^X\subseteq \bar\cD$. By the symmetry of $\bar\cD$, $\{-r\} + [-6\varepsilon,6\varepsilon]^X\subseteq \bar\cD$. Then by the convexity of $\bar\cD$, $[-6\varepsilon,6\varepsilon]^X\subseteq \bar\cD$.

The lemma is proved by the following chain of inequalities:
\begin{align}
\log N_{\mu, \cD}([0,1]^X, \varepsilon) & = \log N_{\mu,\bar\cD}([-1,1]^X, 2\varepsilon) \label{eq:distribution-free-lower-1}\\
& \ge \log N_{\mu,[-6\varepsilon,6\varepsilon]^X}([-1,1]^X, 2\varepsilon) \label{eq:distribution-free-lower-2}\\
& = \log N_{\mu,[-1,1]^X}([-1, 1]^X, 1/3) \label{eq:distribution-free-lower-3}\\
& \ge n/8. \label{eq:distribution-free-lower-4}
\end{align}
We used \lemmaref{lm:covering-basics} Items \ref{item:basics-hull}, \ref{item:basics-shift}, and \ref{item:basics-homogeneity} in \eqref{eq:distribution-free-lower-1}, 
\lemmaref{lm:covering-basics} Item \ref{item:basics-monotone} in
\eqref{eq:distribution-free-lower-2}, and 
\lemmaref{lm:covering-basics} Item \ref{item:basics-homogeneity} in 
\eqref{eq:distribution-free-lower-3}. We used \lemmaref{lm:helper} to get \eqref{eq:distribution-free-lower-4}.
\end{proof}
Combining the lemma above with \lemmaref{lm:OI-sample-lower}, we obtain the following sample complexity lower bound for distribution-free OI. 
\begin{lemma}
\label{lm:distribution-free-lower}
For every distinguisher class $\cD$, every advantage bound $\varepsilon > 0$, and every failure probability bound $\delta\in(0,1)$,
the following sample complexity lower bound holds for distribution-free OI:
\begin{align*}
\sampdfr([0,1]^X, \cD,\varepsilon,\delta) & = \sampdfa([0,1]^X, \cD,\varepsilon,\delta)\\
& \ge
\fat_\cD(12\varepsilon) / 8 + \log(1 - \delta).
\end{align*}
\end{lemma}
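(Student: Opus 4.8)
The plan is to obtain the lower bound purely by composing three results already established: the elementary inequality $\sampdsr(\cP,\cD,\varepsilon,\delta,\mu)\le\sampdfr(\cP,\cD,\varepsilon,\delta)$ from \eqref{eq:samp-monotone-2}, the metric-entropy lower bound of \lemmaref{lm:OI-sample-lower}, and the fat-shattering-to-metric-entropy bound of \lemmaref{lm:fat-covering}. No new heavy machinery is needed here; the real content already lives in those lemmas (the packing-and-counting argument behind \lemmaref{lm:OI-sample-lower} and the hyperplane-separation argument behind \lemmaref{lm:fat-covering}).

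First I would dispose of the claimed equality $\sampdfr([0,1]^X,\cD,\varepsilon,\delta)=\sampdfa([0,1]^X,\cD,\varepsilon,\delta)$. Since $\cP=[0,1]^X$, for every target $p^*\in[0,1]^X$ we have $p^*\in\cP$, so $\inf_{p\in\cP}\adv_{\mu,\cD}(p,p^*)=0$; substituting this into the definition of $\dsa_n$ gives $\dsa_n([0,1]^X,\cD,\varepsilon,\delta,\mu)=\bigcap_{p^*\in[0,1]^X}\oi_n(p^*,\cD,\varepsilon,\delta,\mu)=\dsr_n([0,1]^X,\cD,\varepsilon,\delta,\mu)$ for every $\mu$, and intersecting over all $\mu'\in\Delta_X$ yields the equality of the distribution-free variants, hence of their sample complexities. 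It therefore suffices to lower bound $\sampdfr([0,1]^X,\cD,\varepsilon,\delta)$.

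Next I would set up the witnessing distribution. If $\fat_\cD(12\varepsilon)=\infty$, the argument below produces the bound for arbitrarily large $n$, forcing $\sampdfr=\infty$ and making the claim trivial; so assume $\fat_\cD(12\varepsilon)$ is finite and let $n:=\fat_\cD(12\varepsilon)$. Because the fat-shattering dimension is a supremum of nonnegative integers, when finite it is attained, so there exist $x_1,\ldots,x_n\in X$ that are $12\varepsilon$-fat shattered by $\cD$. Let $\mu$ be the uniform distribution over $x_1,\ldots,x_n$. Applying \lemmaref{lm:fat-covering} with its parameter $\varepsilon$ replaced by $2\varepsilon$ (noting $6\cdot 2\varepsilon=12\varepsilon$) gives $\log N_{\mu,\cD}([0,1]^X,2\varepsilon)\ge n/8$.

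Finally I would chain the inequalities
\[
\sampdfr([0,1]^X,\cD,\varepsilon,\delta)\ \ge\ \sampdsr([0,1]^X,\cD,\varepsilon,\delta,\mu)\ \ge\ \log\!\big((1-\delta)N_{\mu,\cD}([0,1]^X,2\varepsilon)\big)\ \ge\ \log(1-\delta)+n/8,
\]
using \eqref{eq:samp-monotone-2} for the first step and \lemmaref{lm:OI-sample-lower} (with $\cP=[0,1]^X$) for the second; recalling $n=\fat_\cD(12\varepsilon)$ finishes the proof. There is essentially no obstacle beyond bookkeeping: the only thing to watch is that the fat-shattering margin demanded by \lemmaref{lm:fat-covering} ($6$ times the covering radius) matches the covering radius $2\varepsilon$ coming out of \lemmaref{lm:OI-sample-lower}, which is exactly why the factor $12$ appears, and that the infinite-dimension case is handled separately.
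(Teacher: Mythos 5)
Your proposal is correct and takes essentially the same route as the paper: choose a maximal $12\varepsilon$-fat-shattered set, take $\mu$ uniform on it, apply \lemmaref{lm:fat-covering} (with covering radius $2\varepsilon$) to get $\log N_{\mu,\cD}([0,1]^X,2\varepsilon)\ge n/8$, then chain through \eqref{eq:samp-monotone-2} and \lemmaref{lm:OI-sample-lower}. The paper states the equality $\sampdfr=\sampdfa$ for $\cP=[0,1]^X$ without spelling it out and does not separately treat the infinite-dimension case; you fill in both of those small gaps, but the substance of the argument is identical.
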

\begin{proof}
Define $n = \fat_\cD(12\varepsilon)$, and suppose $x_1,\ldots,x_n\in X$ are $12\varepsilon$-shattered by $\cD$. Let $\mu$ be the uniform distribution over $x_1,\ldots,x_n$. By \lemmaref{lm:fat-covering}, $\log N_{\mu,\cD}([0,1]^X,2\varepsilon) \ge n/8$. By \lemmaref{lm:OI-sample-lower},
\begin{align}
\sampdfa([0,1]^X, \cD,\varepsilon,\delta) & = \sampdfr([0,1]^X, \cD,\varepsilon,\delta) \notag\\
& \ge \sampdsr([0,1]^X,\cD,\varepsilon,\delta,\mu) \tag{by \eqref{eq:samp-monotone-2}}\\
& \ge \log N_{\mu,\cD}([0,1]^X,2\varepsilon) + \log(1-\delta) \notag \\
& \ge n/8 + \log(1-\delta). \notag
\ifdefined\preprint
\qedhere
\fi
\end{align}
\end{proof}
\begin{remark}
\label{remark:multicalibration-lower}
It is clear that the error objective $\mcerror_{\mu,\cD}(p,p^*)$ for multicalibration defined in \eqref{eq:mc-error} satisfies
\[
\mcerror_{\mu,\cD}(p,p^*) \ge \adv_{\mu,\cD}(p,p^*)/k,
\]
so \lemmaref{lm:distribution-free-lower} directly implies a sample complexity lower bound for multicalibration. Specifically, assuming the predictor class $\cP$ is $[0,1]^X$, if we want to achieve $\mcerror_{\mu,\cD}(p,p^*) \le \varepsilon$ with probability at least $1-\delta$ in the distribution-free setting, the sample complexity is at least
\[
\fat_\cD(12k\varepsilon)/8 + \log(1 - \delta).
\]
\end{remark}

\section{Separation between Agnostic and Realizable OI}
\label{sec:separation}

The sample complexities of realizable and agnostic learning have the same characterization in many settings.
They are both characterized by the VC dimension in distribution-free PAC learning \citep{MR3408730},
whereas in distribution-specific PAC learning
they are both characterized by the metric entropy (this characterization was proved in the realizable setting by \citet{MR1122796}, and it extends straightforwardly to the agnostic setting (see \lemmaref{lm:binary-l1})). Recently, an independent work by \cite{hopkins2021realizable} shows that this relationship between realizable and agnostic learning holds very broadly for a unified reason.

In this section, we study this relationship between realizable and agnostic learning in outcome indistinguishability. In contrast to many common learning settings including PAC learning, we show a strong separation between the sample complexities of realizable OI and agnostic OI in both the distribution-free and the distribution-specific settings.

\subsection{No separation with additional assumptions}
Before we present the sample complexity separation between realizable and agnostic OI, we first discuss several assumptions that make the separation disappear in the following two lemmas. 

In the first lemma (\lemmaref{lm:inside-hull}), we make the assumption that the target predictor $p^*$ belongs to the symmetric convex hull $\bar\cP$ of $\cP$.

\begin{lemma}[Inside the symmetric convex hull]
\label{lm:inside-hull}
For every predictor class $\cP\subseteq[0,1]^X$, every distinguisher class $\cD\subseteq[-1,1]^X$, every distribution $\mu\in\Delta_X$, every advantage bound $\varepsilon >0$, and every failure probability bound $\delta \in(0,1)$,
there exist a nonnegative integer $n$ and an algorithm $\cA$ such that
\begin{align*}
n & = O(\varepsilon^{-2}(\log N_{\mu,\cP}(\cD,\varepsilon/4) + \log(2/\delta))),\quad\textnormal{and}\\
\cA & \in {\bigcap}_{p^*\in\bar\cP}\oi_n(p^*,\cD,\varepsilon,\delta,\mu).
\end{align*}
\end{lemma}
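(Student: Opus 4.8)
The key idea is that the agnostic benchmark $\inf_{p\in\cP}\adv_{\mu,\cD}(p,p^*)$ that \algorithmref{alg:distinguisher-covering} competes against can only shrink---in fact it drops to $0$---if we enlarge $\cP$ to contain its symmetric convex hull, while the covering number $N_{\mu,\cP}(\cD,\cdot)$ that controls the sample size is unchanged under this enlargement, since it depends on $\cP$ only through the dual norm $\|\cdot\|_{\mu,\cP}=\|\cdot\|_{\mu,\bar\cP}$. So the plan is to run \algorithmref{alg:distinguisher-covering} with the predictor class $\cP$ replaced by the larger, convex class $\cP^\circ:=\bar\cP\cap[0,1]^X$, and then read off a realizable (rather than merely agnostic) guarantee for every target in $\bar\cP$.

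In detail: if $\varepsilon\ge 1$, then by \lemmaref{lm:advantage-l1} every $p\in[0,1]^X$ satisfies $\adv_{\mu,\cD}(p,p^*)\le\bE_{x\sim\mu}|p(x)-p^*(x)|\le 1\le\varepsilon$, so $n=0$ and a constant algorithm $\cA$ suffice; assume $\varepsilon\in(0,1)$ henceforth. Set $\cP^\circ:=\bar\cP\cap[0,1]^X$. This class is non-empty (it contains $\cP$), convex (an intersection of convex sets), and contained in $[0,1]^X$, so \lemmaref{lm:alg-distinguisher-covering} applies with predictor class $\cP^\circ$: there is a positive integer
\[
n\le O\big(\varepsilon^{-2}\big(\log N_{\mu,\cP^\circ}(\cD,\varepsilon/4)+\log(2/\delta)\big)\big)
\]
such that \algorithmref{alg:distinguisher-covering}, run with predictor class $\cP^\circ$ and distinguisher class $\cD$, belongs to $\bigcap_{p^*\in[0,1]^X}\oi_n\big(p^*,\cD,3\inf_{p\in\cP^\circ}\adv_{\mu,\cD}(p,p^*)+\varepsilon,\delta,\mu\big)$. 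For any $p^*\in\bar\cP$ that is a genuine predictor (i.e.\ $[0,1]$-valued, which is the only case in which $\oi_n(p^*,\cdot)$ is defined), we have $p^*\in\cP^\circ$, so choosing $p=p^*$ in the infimum gives $\inf_{p\in\cP^\circ}\adv_{\mu,\cD}(p,p^*)=0$; hence this algorithm lies in $\oi_n(p^*,\cD,\varepsilon,\delta,\mu)$. Intersecting over all such $p^*$ yields the algorithm $\cA$ claimed in the lemma.

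It remains to rewrite the bound on $n$ in terms of $N_{\mu,\cP}$. Since $\cP^\circ\subseteq\bar\cP$, \lemmaref{lm:covering-basics} Item~\ref{item:basics-monotone} gives $N_{\mu,\cP^\circ}(\cD,\varepsilon/4)\le N_{\mu,\bar\cP}(\cD,\varepsilon/4)$, and \lemmaref{lm:covering-basics} Item~\ref{item:basics-hull} gives $N_{\mu,\bar\cP}(\cD,\varepsilon/4)=N_{\mu,\cP}(\cD,\varepsilon/4)$; combining these, $n=O(\varepsilon^{-2}(\log N_{\mu,\cP}(\cD,\varepsilon/4)+\log(2/\delta)))$, as required. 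There is no real difficulty here beyond this bookkeeping; the one point deserving care is that $\bar\cP$ in general is not contained in $[0,1]^X$, so it is important to work with $\cP^\circ=\bar\cP\cap[0,1]^X$---which keeps \algorithmref{alg:distinguisher-covering} and its output legitimate---rather than with $\bar\cP$ itself, and the monotonicity above ensures this intersection does not weaken the covering-number bound.
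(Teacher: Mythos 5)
Your proof is correct and takes essentially the same route as the paper: apply the distinguisher-covering upper bound to an enlarged predictor class so that every target in the symmetric convex hull is realizable, then use $N_{\mu,\bar\cP}=N_{\mu,\cP}$ to recover the stated covering-number dependence. The paper's proof is terser---it simply invokes \lemmaref{lm:OI-sample-upper} ``with $\cP$ replaced by $\bar\cP$''---and glosses over the fact that $\bar\cP\not\subseteq[0,1]^X$ in general, so the proper algorithm from \lemmaref{lm:alg-distinguisher-covering} could output something outside $[0,1]^X$; your replacement of $\bar\cP$ by $\cP^\circ=\bar\cP\cap[0,1]^X$ (together with the observation that this only shrinks the covering number by monotonicity) is a genuine and worthwhile tightening of the written argument.
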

\begin{proof}
By \lemmaref{lm:covering-basics} Item \ref{item:basics-hull}, 
\[\log N_{\mu,\cP}(\cD,\varepsilon/4))
=
\log N_{\mu,\bar\cP}(\cD,\varepsilon/4)).
\]
The lemma is then a consequence of applying \lemmaref{lm:OI-sample-upper} with $\cP$ replaced by $\bar\cP$.
\end{proof}

In the second lemma (\lemmaref{lm:binary-l1}), we make the assumption that $\{-1,1\}\subseteq \cD$ (i.e.\ we consider the $\ell_1$ error, see \lemmaref{lm:advantage-l1}), and that either $\cP$ only contains binary classifiers or the target predictor $p^*$ is a binary classifier.
\begin{lemma}[Binary classifiers with $\ell_1$ error]
\label{lm:binary-l1}
Assume $\{-1,1\}^X\subseteq\cD$.
For every predictor class $\cP\subseteq[0,1]^X$, every distribution $\mu\in\Delta_X$, every advantage bound $\varepsilon \in (0,1)$ and every failure probability bound $\delta\in(0,1)$,
there exists a positive integer $n$ such that
\[
n = O((\log N_{\mu,\cD}(\cP, \varepsilon/2) + \log(2/\delta))/\varepsilon^2),
\]
and for every target predictor $p^*\in[0,1]^X$,
\algorithmref{alg:erm} belongs to
\[
\oi_n(p^*,\cD, \inf_{p\in\cP}\adv_{\mu,\cD}(p,p^*) + \varepsilon,\delta,\mu)
\]
whenever $\cP\subseteq\{0,1\}^X$ or $p^*\in\{0,1\}^X$.
\end{lemma}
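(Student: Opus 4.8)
The plan is to analyze \algorithmref{alg:erm} directly, using the hypothesis $\{-1,1\}^X\subseteq\cD$ to collapse two quantities. First, by \lemmaref{lm:advantage-l1}, $\adv_{\mu,\cD}(p_1,p_2)=\bE_{x\sim\mu}[|p_1(x)-p_2(x)|]$ for all predictors $p_1,p_2$. Second, the empirical loss in \algorithmref{alg:erm} simplifies: since $|d(x)|\le 1$ for every $d\in\cD$ we have $\loss(p)\le\frac1n\sum_{i=1}^n|p(x_i)-o_i|$ for any $p$, while taking $d\in\{-1,1\}^X\subseteq\cD$ with $d(x_i)$ equal to the sign of $p(x_i)-o_i$ gives the reverse inequality, so $\loss(p)=\frac1n\sum_{i=1}^n|p(x_i)-o_i|$ is exactly the empirical mean absolute error between $p$ and the observed outcomes (in particular the supremum is attained and at most $1$).

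The conceptual crux is the identity $\bE_{(x,o)\sim\mu_{p^*}}[|p(x)-o|]=\adv_{\mu,\cD}(p,p^*)$ for $p\in\cP$, which is where one of the two extra hypotheses is used. Using $p(x)\in[0,1]$, the left side equals $\bE_{x\sim\mu}[p^*(x)(1-p(x))+(1-p^*(x))p(x)]$, and a short case analysis shows this equals $\bE_{x\sim\mu}[|p(x)-p^*(x)|]$ whenever $p(x)\in\{0,1\}$ for all $x$ (the case $\cP\subseteq\{0,1\}^X$, which also covers all $p$ in the covering $\cP'\subseteq\cP$ below) or whenever $p^*(x)\in\{0,1\}$ for all $x$ (the case $p^*\in\{0,1\}^X$). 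Without one of these hypotheses the two quantities genuinely differ, which is precisely why the restriction is needed and why \algorithmref{alg:erm} fails in general (see \appendixref{sec:failure-erm}); I expect this identity to be the main thing to get right, with the remainder being standard.

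Next I would carry out the usual covering/ERM concentration argument. Let $\cP'$ be the minimum-size $\varepsilon/2$-covering of $\cP$ w.r.t.\ $\|\cdot\|_{\mu,\cD}$ constructed by \algorithmref{alg:erm}, so that $|\cP'|=N_{\mu,\cD}(\cP,\varepsilon/2)$, which we may take to be finite (otherwise the asserted bound on $n$ is vacuous and a minimum-size covering need not exist). For $p\in\cP'$ we have $\bE[\loss(p)]=\adv_{\mu,\cD}(p,p^*)$ by the previous paragraph and $|p(x_i)-o_i|\in[0,1]$, so Hoeffding's inequality together with a union bound over $\cP'$ shows that there is $n=O((\log N_{\mu,\cD}(\cP,\varepsilon/2)+\log(2/\delta))/\varepsilon^2)$ — depending only on $\cP,\cD,\mu,\varepsilon,\delta$ and not on $p^*$ — such that, for input drawn from $\mu_{p^*}$, with probability at least $1-\delta$ we have $|\loss(p)-\adv_{\mu,\cD}(p,p^*)|\le\varepsilon/8$ for every $p\in\cP'$ simultaneously.

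Finally I would combine these. Write $\varepsilon_0=\inf_{p\in\cP}\adv_{\mu,\cD}(p,p^*)$, pick $\tilde p\in\cP$ with $\adv_{\mu,\cD}(\tilde p,p^*)\le\varepsilon_0+\varepsilon/8$, and let $p'\in\cP'$ satisfy $\|\tilde p-p'\|_{\mu,\cD}\le\varepsilon/2$. The triangle inequality for the semi-norm $\|\cdot\|_{\mu,\cD}$ gives $\adv_{\mu,\cD}(p',p^*)\le\varepsilon_0+5\varepsilon/8$, so on the good event $\loss(p')\le\varepsilon_0+3\varepsilon/4$; since the output $p$ of \algorithmref{alg:erm} minimizes $\loss$ over $\cP'$, also $\loss(p)\le\varepsilon_0+3\varepsilon/4$, and applying the concentration bound once more to $p\in\cP'$ yields $\adv_{\mu,\cD}(p,p^*)\le\loss(p)+\varepsilon/8\le\varepsilon_0+\varepsilon$. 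As $p\in\cP'\subseteq\cP\subseteq[0,1]^X$ and this holds for every $p^*\in[0,1]^X$ with the same $n$, \algorithmref{alg:erm} lies in $\oi_n(p^*,\cD,\varepsilon_0+\varepsilon,\delta,\mu)$, which is the claim.
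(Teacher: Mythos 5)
Your proof is correct and follows essentially the same route as the paper's: the same simplification $\loss(p)=\frac1n\sum_i|p(x_i)-o_i|$ from $\{-1,1\}^X\subseteq\cD$, the same crucial identity $\bE_{(x,o)\sim\mu_{p^*}}[|p'(x)-o|]=\bE_{x\sim\mu}[|p'(x)-p^*(x)|]$ under the binarity hypothesis (the paper states it as \eqref{eq:binary-l1-4}), the same Chernoff/Hoeffding concentration over the covering $\cP'$, and the same triangle-inequality chain to conclude. Your explicit case analysis justifying the identity and your remark about finiteness of the covering number are slightly more detailed than the paper, but the argument is the same.
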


\begin{proof}
We choose $n = O((\log N_{\mu,\cD}(\cP, \varepsilon/2) + \log(2/\delta))/\varepsilon^2)$ so that by the Chernoff bound and the union bound, with probability at least $1 - \delta$ over $(x_1,o_1),\ldots,(x_n,o_n)$ drawn i.i.d.\ from $\mu_{p^*}$, for all $p'\in\cP'$,
\begin{equation}
\label{eq:binary-l1-1}
\left|\frac 1n\sum_{i=1}^n|p'(x_i) - o_i| - \bE_{(x,o)\sim\mu_{p^*}}[|p'(x) - o|]\right| \le \varepsilon/8.
\end{equation}

It remains to prove that whenever the input $(x_1,o_1),\ldots,(x_n,o_n)$ to \algorithmref{alg:erm} satisfies the inequality above, the output $p$ satisfies $\adv_{\mu,\cD}(p,p^*) \le \inf_{p\in\cP}\adv_{\mu,\cD}(p,p^*) + \varepsilon$ assuming $\cP\subseteq\{0,1\}^X$ or $p^*\in\{0,1\}^X$.

By definition, there exists $\hat p\in\cP$ such that $\|\hat p - p^*\|_{\mu,\cD} \le \inf_{p\in\cP}\|p - p^*\|_{\mu,\cD} + \varepsilon/4$. Since $\cP'$ is an $\varepsilon/2$-covering of $\cP$, there exists $\tilde p\in\cP'$ such that $\|\tilde p - \hat p\|_{\mu,\cD} \le \varepsilon/2$. Combining these two inequalities together,
\begin{equation}
\label{eq:binary-l1-2}
\|\tilde p - p^*\|_{\mu,\cD} \le \|\tilde p - \hat p\|_{\mu,\cD} + \|\hat p - p^*\|_{\mu,\cD} \le \inf_{p\in\cP}\|p - p^*\|_{\mu,\cD} + 3\varepsilon/4.
\end{equation}
Since $\{-1,1\}^X\subseteq \cD$, 
\[
\loss(p') = \frac 1n\sum_{i=1}^n|p'(x_i) - o_i| \quad \textnormal{for all}\ p'\in\cP'.
\]
Therefore, the output $p$ of \algorithmref{alg:erm} satisfies
\begin{equation}
\label{eq:binary-l1-3}
\frac 1n\sum_{i=1}^n|p(x_i) - o_i|  = \loss(p) \le \loss(\tilde p)  = \frac 1n\sum_{i=1}^n|\tilde p(x_i) - o_i|.
\end{equation}
Our assumption $\cP\subseteq\{0,1\}^X$ or $p^*\in\{0,1\}^X$ implies that for all $p'\in\cP'$,
\begin{equation}
\label{eq:binary-l1-4}
\bE_{(x,o)\sim \mu_{p^*}}[|p'(x) - o|] = 
\bE_{x\sim\mu}[|p'(x) - p^*(x)|] = 
\|p' - p^*\|_{\mu,\cD},
\end{equation}
where the last equation is by \lemmaref{lm:advantage-l1}.

Combining everything together,
\begin{align*}
\|p - p^*\|_{\mu,\cD} & \le \frac{1}{n}\sum_{i=1}^n|p(x_i) - o_i| + \varepsilon/8 \tag{by \eqref{eq:binary-l1-1} and \eqref{eq:binary-l1-4}}\\
& \le \frac{1}{n}\sum_{i=1}^n|\tilde p(x_i) - o_i| + \varepsilon/8 \tag{by \eqref{eq:binary-l1-3}}\\
& \le \|\tilde p - p^*\|_{\mu,\cD} + \varepsilon/4 \tag{by \eqref{eq:binary-l1-1} and \eqref{eq:binary-l1-4}}\\
& \le \inf_{p\in\cP}\|p - p^*\|_{\mu,\cD} + \varepsilon. \tag{by \eqref{eq:binary-l1-2}}
\end{align*}
\end{proof}

\subsection{Separation without additional assumptions}
\label{sec:real-separation}
Without the additional assumptions in \lemmaref{lm:inside-hull} and \lemmaref{lm:binary-l1}, 
we give examples where the sample complexity of agnostic OI can be arbitrarily larger than that of realizable OI in both the distribution-specific and the distribution-free settings.
Given a positive integer $m$, we choose $X = \{\bot\}\cup\{0,1\}^m$, and choose the predictor class $\cP$ to consist of only two predictors $p_1$ and $p_2$ where $p_1(\bot) = 0,p_2(\bot) = 1$ and $p_1(x) = p_2(x) = 1/2$ for all $x\in\{0,1\}^m$.

We first show that $O(\varepsilon^{-2}\log (2/\delta))$ examples are sufficient for distribution-free agnostic OI as long as $\cD = [-1,1]^X$. 
\begin{lemma}
\label{lm:easy-OI}
For all $\varepsilon,\delta \in(0,1)$, there exists a positive integer $n = O(\varepsilon^{-2}\log (2/\delta))$ such that for all $m$ and $X,\cP$ defined as above, there exists an algorithm $\cA\in \dfa_n(\cP,[-1,1]^X,\varepsilon,\delta)$.
\end{lemma}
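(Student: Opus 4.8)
The plan is to exploit the extremely rigid structure of $\cP$: the two predictors $p_1,p_2$ agree with each other (both equal $1/2$) on all of $\{0,1\}^m$ and differ only at the single point $\bot$, where $p_1(\bot)=0$ and $p_2(\bot)=1$. Since $\cD=[-1,1]^X$, \lemmaref{lm:advantage-l1} gives $\adv_{\mu,\cD}(p,p^*)=\bE_{x\sim\mu}|p(x)-p^*(x)|$, the $\ell_1$ error. Writing $R:=\sum_{x\in\{0,1\}^m}\mu(x)\,|1/2-p^*(x)|$ for the part of the error common to $p_1$ and $p_2$, and setting $a:=\mu(\{\bot\})\,p^*(\bot)$ and $b:=\mu(\{\bot\})\,(1-p^*(\bot))$, we get $\adv_{\mu,\cD}(p_1,p^*)=R+a$ and $\adv_{\mu,\cD}(p_2,p^*)=R+b$, so $\inf_{q\in\cP}\adv_{\mu,\cD}(q,p^*)=R+\min\{a,b\}$. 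Thus the task collapses to making the correct binary choice between $p_1$ and $p_2$, where an incorrect choice costs exactly $|a-b|$ in excess error; it suffices to choose correctly whenever $|a-b|>\varepsilon$.

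The key observation is that $a$ and $b$ are just probabilities of events under $\mu_{p^*}$, namely $a=\Pr_{(x,o)\sim\mu_{p^*}}[x=\bot,\,o=1]$ and $b=\Pr_{(x,o)\sim\mu_{p^*}}[x=\bot,\,o=0]$. So the algorithm $\cA$ is: from the $n$ examples compute the empirical frequencies $\hat a=\frac1n|\{i:x_i=\bot,\,o_i=1\}|$ and $\hat b=\frac1n|\{i:x_i=\bot,\,o_i=0\}|$, and output $p_1$ if $\hat a\le\hat b$ and $p_2$ otherwise (a proper algorithm that depends only on the data, not on $\mu$ or $p^*$). I would emphasize that one must \emph{not} attempt to estimate $p^*(\bot)$ directly, which is impossible when $\mu(\{\bot\})$ is tiny and would force $n$ to scale worse than $\varepsilon^{-2}$; estimating $a$ and $b$ instead works because each $\hat a,\hat b$ is an average of i.i.d.\ $\{0,1\}$-valued random variables with means $a,b$. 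Hoeffding's inequality together with a union bound then gives a value $n=O(\varepsilon^{-2}\log(2/\delta))$ (independent of $m$) for which $|\hat a-a|\le\varepsilon/2$ and $|\hat b-b|\le\varepsilon/2$ hold simultaneously with probability at least $1-\delta$.

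To conclude I would condition on this good event and check the two cases. If $\cA$ outputs $p_1$ then $\hat a\le\hat b$, hence $a\le\hat a+\varepsilon/2\le\hat b+\varepsilon/2\le b+\varepsilon$, so $\adv_{\mu,\cD}(p_1,p^*)=R+a\le R+b+\varepsilon=\adv_{\mu,\cD}(p_2,p^*)+\varepsilon$, which gives $\adv_{\mu,\cD}(p_1,p^*)\le\inf_{q\in\cP}\adv_{\mu,\cD}(q,p^*)+\varepsilon$; the case where $\cA$ outputs $p_2$ is symmetric. Since this holds for every target $p^*\in[0,1]^X$, we get $\cA\in\dsa_n(\cP,[-1,1]^X,\varepsilon,\delta,\mu)$, and since it holds for every $\mu\in\Delta_X$, we get $\cA\in\dfa_n(\cP,[-1,1]^X,\varepsilon,\delta)$, as required. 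There is essentially no obstacle beyond choosing the right quantities to estimate: once $a$ and $b$ (rather than $p^*(\bot)$) are the targets, the whole argument is a two-line concentration bound.
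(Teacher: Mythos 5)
Your proof is correct, and it takes a genuinely different route from the paper's. The paper's algorithm is \emph{improper}: it outputs a predictor with $p(\bot)=r$, where $r$ is the conditional empirical frequency of $o_i=1$ among the examples with $x_i=\bot$ (defaulting to $1/2$ if no such examples appear). The paper then splits into cases on $\mu_\bot := \mu(\{\bot\})$: when $\mu_\bot\le\varepsilon$ the guarantee holds unconditionally because the $\bot$-contribution to the $\ell_1$ error is at most $\mu_\bot\le\varepsilon$; when $\mu_\bot>\varepsilon$ it first shows via one Chernoff bound that $\Omega(\mu_\bot\varepsilon^{-2}\log(2/\delta))$ of the $n$ samples land on $\bot$, and then via a second Chernoff bound that $p^*(\bot)$ is estimated to the \emph{relaxed} accuracy $\varepsilon/\sqrt{\mu_\bot}$, which after multiplying by $\mu_\bot$ gives the needed $\varepsilon$. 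Your approach sidesteps both the conditional estimation and the case analysis: by observing that the only thing the learner must get right is the binary choice between $p_1$ and $p_2$, and that the excess cost of the wrong choice is exactly $|a-b|$ where $a=\Pr_{\mu_{p^*}}[(x,o)=(\bot,1)]$ and $b=\Pr_{\mu_{p^*}}[(x,o)=(\bot,0)]$, you reduce to estimating two unconditional means of i.i.d.\ $\{0,1\}$-valued variables, which a single application of Hoeffding plus a union bound handles uniformly in $\mu_\bot$. Your algorithm is moreover proper (always outputs $p_1$ or $p_2$), which is a small bonus. The paper's plug-in estimator is arguably more natural and would generalize more readily to richer predictor classes, but for this two-element $\cP$ your unconditional-probability comparison is cleaner and avoids the $\varepsilon/\sqrt{\mu_\bot}$ bookkeeping entirely.
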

\begin{proof}
We choose algorithm $\cA$ to be the following simple algorithm: after seeing examples $(x_1,o_1),\allowbreak\ldots,\allowbreak(x_n,o_n)$, it 
computes 
\[
r := \frac{|\{i\in\{1,\ldots,n\}:(x_i,o_i) = (\bot, 1)\}|}{|\{i\in\{1,\ldots,n\}:x_i = \bot\}|}.
\]
If the denominator $|\{i\in\{1,\ldots,n\}:x_i = \bot\}|$ is zero, define $r = 1/2$. Algorithm $\cA$ outputs the predictor $p$ such that $p(\bot) = r$, and $p(x) = 1/2$ for all $x\in\{0,1\}^m$.

It remains to show that when the input examples $(x_1,o_1),\ldots,(x_n,o_n)$ are drawn i.i.d.\ from $\mu_{p^*}$ for some distribution $\mu\in\Delta_X$ and some $p^*\in[0,1]^X$, the output $p$ of algorithm $\cA$ satisfies the following with probability at least $1 - \delta$:
\begin{equation}
\label{eq:easy-OI-1}
\|p - p^*\|_{\mu,[-1,1]^X} \le \inf_{p'\in\cP}\|p'-p^*\|_{\mu,[-1,1]^X} + \varepsilon.
\end{equation}

Let $\mu_\bot$ denote the probability mass on $\bot$ in distribution $\mu$. 
Since $p(x) = 1/2 = p'(x)$ for all $p'\in\cP$ and $x\in\{0,1\}^X$,
by \lemmaref{lm:advantage-l1},
\begin{equation}
\label{eq:easy-OI-2}
\|p - p^*\|_{\mu,[-1,1]^X} - \mu_\bot|p(\bot) - p^*(\bot)| \le \inf_{p'\in\cP}\|p'-p^*\|_{\mu,[-1,1]^X}.
\end{equation}

If $\mu_\bot \le \varepsilon$, inequality \eqref{eq:easy-OI-2} implies that \eqref{eq:easy-OI-1} is always true. 
If $\mu_\bot > \varepsilon$,
we choose $n = O(\varepsilon^{-2}\log(2/\delta))$ so that the following two conditions hold:
\begin{enumerate}
\item by the Chernoff bound, with probability at least $1 - \delta/2$, $|\{i\in\{1,\ldots,n\}:x_i = \bot\}| \ge \mu_\bot n/2$;
\item it holds that $\mu_{\bot}n/2 \ge C \mu_\bot\varepsilon^{-2}\log(2/\delta)$, where $C$ is an absolute constant determined later.
\end{enumerate}
When combined, the two conditions guarantee that with probability at least $1-\delta/2$, 
$|\{i\in\{1,\ldots,n\}:x_i = \bot\}|\ge C\mu_\bot\varepsilon^{-2}\log(2/\delta)$.
Conditioned on this being true, by the Chernoff bound, choosing $C$ sufficiently large guarantees that with probability at least $1 - \delta/2$,
\[
|p^*(\bot) - p(\bot)| = \left|p^*(\bot) - \frac{|\{i:(x_i,o_i) = (\bot, 1)\}|}{|\{i:x_i = \bot\}|}\right| \le \varepsilon/{\sqrt{\mu_\bot}}.
\]
Combining this with \eqref{eq:easy-OI-2}, we know \eqref{eq:easy-OI-1} holds with probability at least $1-\delta$, as desired.
\end{proof}

By the monotonicity properties of sample complexities (see \eqref{eq:samp-monotone-1} and \eqref{eq:samp-monotone-2}), the lemma above implies that for all $\cD\subseteq[-1,1]^X$ and $\mu\in\Delta_X$,
\begin{equation}
\sampdsa(\cP,[-1,1]^X,\varepsilon,\delta,\mu) \le \sampdfa(\cP,[-1,1]^X,\varepsilon,\delta) \le O(\varepsilon^{-2}\log(2/\delta)), \label{eq:easy-OI-monotone-1}\\
\end{equation}
and
\begin{align}
\sampdsr(\cP,\cD,\varepsilon,\delta,\mu) \le \sampdfr(\cP,\cD,\varepsilon,\delta) & \le \sampdfr(\cP,[-1,1]^X,\varepsilon,\delta) \notag \\ & \le O(\varepsilon^{-2}\log(2/\delta)). \label{eq:easy-OI-monotone-3}
\end{align}

Now we show that for a specific distribution $\mu$ and a particular distinguisher class $\cD$, it requires at least $m - 20$ examples for distribution-specific agnostic OI when $\varepsilon = 1/8$ and $\delta = 1/3$ (\lemmaref{lm:agnostic-lower}). Sending $m$ to infinity and comparing with \eqref{eq:easy-OI-monotone-3}, this implies that the sample complexity of agnostic OI can be arbitrarily large even when the sample complexity of realizable OI is bounded by a constant in both the distribution-specific and the distribution-free settings. Comparing this with \eqref{eq:easy-OI-monotone-1}, we also know that the sample complexity of agnostic OI is not monotone w.r.t.\ the distinguisher class $\cD$ in both the distribution-specific and distribution-free settings.

Before we describe the $\mu$ and $\cD$ used in \lemmaref{lm:agnostic-lower}, we need the following definitions.
Let $\bool$ denote the set of all functions $t:\{0,1\}^m\rightarrow\{0,1\}$. A function $f\in\bool$ is a parity function if for a subset $S\subseteq \{1,\ldots,m\}$, it holds that $f(x) = \bigoplus_{i\in S} x_i$ for all $x\in\{0,1\}^m$. A function $g\in\bool$ is an anti-parity function if for some parity function $f$, it holds that $g(x) = 1\oplus f(x)$ for all $x\in\{0,1\}^m$. 
Let $\parity\subseteq \bool$ (resp.\ $\anti\subseteq \bool$) denote the set of parity functions (resp.\ anti-parity) functions.

We choose $\mu$ so that it puts $1/3$ probability mass on $\bot$, and puts the remaining $2/3$ probability mass evenly on $\{0,1\}^m$. We choose $\cD$ to contain $2^m$ hypotheses as follows: for every parity function $f\in\parity$, there is a distinguisher $d\in \cD$ such that $d(\bot) = 1$ and $d(x) = (-1)^{f(x)}$ for all $x\in \{0,1\}^m$.
\begin{lemma}
\label{lm:agnostic-lower}
For $m,\cP,\cD,\mu$ defined as above,
$\sampdsa(\cP,\cD,1/8,1/3,\mu)\ge m - 20$.
\end{lemma}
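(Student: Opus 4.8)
The plan is to apply Yao's minimax principle: it suffices to exhibit a distribution $\nu$ over target predictors on which every \emph{deterministic} learner taking $n = m-21$ i.i.d.\ examples from $\mu_{p^*}$ fails --- outputs a predictor $p$ with $\adv_{\mu,\cD}(p,p^*) > \inf_{p'\in\cP}\adv_{\mu,\cD}(p',p^*) + 1/8$ --- with probability more than $1/3$. I take $\nu$ supported on ``parity-like'' targets: sample a uniformly random nonempty $S_0\subseteq\{1,\ldots,m\}$ and a uniformly random sign $\theta\in\{+1,-1\}$, and let $p^*$ be the predictor with $p^*(\bot) = 1/2$ that on $\{0,1\}^m$ equals the parity function $f_{S_0}$ (with $f_{S_0}(x)=\bigoplus_{i\in S_0}x_i$) if $\theta=+1$ and the anti-parity function $1-f_{S_0}$ if $\theta=-1$. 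Recalling $\adv_{\mu,d}(p,q) = |\langle d,p-q\rangle_\mu|$ and using that the characters $x\mapsto(-1)^{f(x)}$, $f\in\parity$, are orthonormal under the uniform distribution on $\{0,1\}^m$, a direct computation gives $\adv_{\mu,\cD}(p_1,p^*) = 1/6$ and $\adv_{\mu,\cD}(p_2,p^*)=1/2$ when $\theta=+1$, and symmetrically when $\theta=-1$. Hence $\inf_{p'\in\cP}\adv_{\mu,\cD}(p',p^*) = 1/6$ for every $p^*$ in the support of $\nu$, and a successful learner must output $p$ with $\adv_{\mu,\cD}(p,p^*)\le 1/6+1/8 = 7/24 < 1/2$; in particular it cannot hedge by always returning $p_1$, or always $p_2$.

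Next I would fix a deterministic learner $\cA$ with $n=m-21$ examples and condition on the ``schedule'' $W$: which of the $n$ draws land on $\bot$, which points of $\{0,1\}^m$ are queried (with multiplicities), and the outcomes of the $\bot$-draws. Since $W$ depends on $\mu$ alone it is independent of $(S_0,\theta)$; the $\bot$-outcomes are unbiased coins and carry no information; and because $p^*$ is deterministic on $\{0,1\}^m$, the only remaining randomness in the sample is the vector $o=(o_1,\ldots,o_k)$ of outcomes at the $k\le n$ distinct queried hypercube points $x_1,\ldots,x_k$, where $o_i = f_{S_0}(x_i)\oplus\tfrac{1-\theta}{2}$. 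With probability at least $1 - 2^{k-m}\ge 1-2^{-21}$ the $x_i$ are linearly independent over $\mathbb{F}_2$; on that event the map $S\mapsto(f_S(x_i))_i$ is onto $\mathbb{F}_2^k$ with fibers of size $2^{m-k}$, so $o$ is essentially uniform on $\mathbb{F}_2^k$ and essentially independent of $\theta$. Consequently, given $(W,o)$ on this event: the posterior of $\theta$ is within $2^{-(m-k)}$ of uniform; the learner's output $\hat p = \cA(W,o)$ is a fixed predictor; and the posterior of $S_0$ given $(W,o,\theta=+1)$ (resp.\ $\theta=-1$) is uniform over a fiber of size $\approx 2^{m-k}$, the two fibers being disjoint.

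The heart of the argument --- which I expect to be the main obstacle --- is to show this fixed $\hat p$ is a valid answer (advantage $\le 7/24$) for only a vanishing fraction of the targets in one of the two fibers; this is where the parity construction earns its keep. Write $v := \tfrac12(\hat p(\bot)-\tfrac12)\in[-\tfrac14,\tfrac14]$ and $c_S := \bE_{x}[(-1)^{f_S(x)}(\hat p(x)-\tfrac12)]$ for the uniform distribution on $\{0,1\}^m$. Suppose $\hat p$ is valid for at least two targets $p^*_{S,\theta_0},p^*_{S',\theta_0}$ with the same sign $\theta_0$. Testing $\hat p$ with $d_{f_S}$ against $p^*_{S',\theta_0}$ (which has zero Fourier weight at $f_S$ since $S\ne S'$) forces $|\tfrac13(\hat p(\bot)-\tfrac12)+\tfrac23 c_S|\le 7/24$, while testing $d_{f_S}$ against $p^*_{S,\theta_0}$ itself --- for which the relevant Fourier coefficient equals $-\theta_0/2$ --- adds a shift of $\theta_0/3$ inside the absolute value. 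Combining the two bounds confines $c_S$ to a length-$3/8$ interval $I_{\theta_0}$ depending only on $v$, with $I_{+1}=[-\tfrac{7}{16}-v,-\tfrac{1}{16}-v]$ and $I_{-1}=[\tfrac{1}{16}-v,\tfrac{7}{16}-v]$; these are disjoint and satisfy $\max(\mathrm{dist}(0,I_{+1}),\mathrm{dist}(0,I_{-1}))\ge 1/16$ for every $v\in[-\tfrac14,\tfrac14]$. Thus for the sign $\theta_0$ achieving that maximum, every parity $f_S$ for which $\hat p$ is valid against $p^*_{S,\theta_0}$ has $|c_S|\ge 1/16$; by Parseval's identity $\sum_{f\in\parity}c_f^2 = \bE_x[(\hat p(x)-\tfrac12)^2]\le 1/4$, so there are at most $64$ such $S$, and the fraction of the corresponding fiber that $\hat p$ answers correctly is at most $64\cdot 2^{-(m-k)}$.

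Finally I would assemble the pieces. Given $(W,o)$ on the linear-independence event, $\Pr[\text{success}\mid W,o] = \gamma\,\beta_{+1}+(1-\gamma)\,\beta_{-1}$, where $\gamma\in\tfrac12\pm 2^{-(m-k)}$ is the posterior probability of $\theta=+1$ and $\beta_{\theta_0}$ is the fraction of the $\theta_0$-fiber that $\hat p$ answers correctly; by the previous paragraph $\min(\beta_{+1},\beta_{-1})\le 64\cdot 2^{-(m-k)}$, so this quantity is at most $\tfrac12 + O(2^{-(m-k)})$. Averaging over $(W,o)$ and adding the at most $2^{k-m}\le 2^{-21}$ chance of a linear dependence among the hypercube queries, the overall success probability is at most $\tfrac12 + O(2^{-21}) < \tfrac23$; hence every learner fails with probability more than $1/3$, which gives $\dsa_{m-21}(\cP,\cD,1/8,1/3,\mu)=\emptyset$ and so $\sampdsa(\cP,\cD,1/8,1/3,\mu)\ge m-20$ (the cases $m\le 20$ being trivial since sample complexities are nonnegative). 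The one place requiring care is keeping $\mathrm{dist}(0,I_{\theta_0})$ bounded away from $0$ uniformly in the uncontrolled value $\hat p(\bot)$, which is exactly why $I_{+1}$ and $I_{-1}$ are arranged to lie on opposite sides of the origin.
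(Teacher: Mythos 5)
Your proof is correct and rests on exactly the same ingredients as the paper's: a parity/anti-parity target ensemble, the computation that $\inf_{p'\in\cP}\adv_{\mu,\cD}(p',p^*)=1/6$ for such targets, linear independence of the queried hypercube points to make the posterior over $(S_0,\theta)$ nearly uniform, and a Parseval bound that a fixed $\hat p$ can be ``valid'' for at most $64$ parities of one sign. The only genuine difference is packaging: the paper explicitly reduces an OI algorithm to an auxiliary list-learning task ($\listl$) for parity/anti-parity functions and proves a standalone information-theoretic lemma about $\listl$ (\lemmaref{lm:parity}), with \lemmaref{lm:agnostic-candidates} (the Parseval/$64$ bound) and \lemmaref{lm:prob-containing} (the $1/6$ bound) supplying the correctness of the reduction; you instead inline the whole thing as a direct Yao-style minimax argument with an explicit posterior computation. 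Your variant of the $64$ bound is also slightly different in form --- you keep $\hat p(\bot)$ free, derive the intervals $I_{\pm 1}$, and observe they lie on opposite sides of the origin so one side always has distance $\ge 1/16$ --- whereas the paper case-splits on $\hat p(\bot)\le 1/2$ versus $\ge 1/2$. Both routes yield the identical quantitative conclusion. One small bookkeeping point worth noting: the paper's $\parity$ includes the empty set $S=\emptyset$ (so $|\cD|=2^m$), while you sample $S_0$ from nonempty subsets; this is harmless since your Parseval sum still ranges over all $S$ and the fiber counts change by at most one.
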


To prove the lemma, we first consider the following task of list-learning parity and anti-parity functions, which we abbreviate as \listl. 
Given a positive integer list size $k$, in the task of \listl,
an algorithm takes as input examples
$(u_1,v_1),\ldots,(u_n,v_n)$ where $u_i$'s are drawn independently and uniformly from $\{0,1\}^m$, and $v_i = t(u_i)$ for some unknown $t\in\bool$, and the algorithm outputs a subset (i.e.\ list) $L\subseteq \bool$. The algorithm succeeds if $t\in L$ and $\min\{|L\cap \parity|, |L\cap \anti|\} \le k$.

\begin{lemma}
\label{lm:parity}
Assuming $n \le m$ and $k \le 2^{m - n}$,
no algorithm has failure probability smaller than $(1/2)(1 - 1/2^{m - n})(1 - k/2^{m - n})$ for all choices of $t\in\bool$ in the task \listl.
\end{lemma}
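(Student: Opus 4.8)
The plan is a minimax argument against a carefully chosen prior on $t$, combined with a bit of $\mathbb{F}_2$-linear algebra. We may assume $n\le m-1$ and $k\le 2^{m-n}-1$, since otherwise $B:=\tfrac12(1-1/2^{m-n})(1-k/2^{m-n})\le 0$ and the statement is vacuous. Identify $\{0,1\}^m$ with the vector space $\mathbb{F}_2^m$; for $S\subseteq\{1,\dots,m\}$ write $f_S(x)=\bigoplus_{i\in S}x_i$ and $g_S=1\oplus f_S$, so $\parity=\{f_S\}$ and $\anti=\{g_S\}$ are disjoint (they differ at $x=0$), each of size $2^m$. I would prove the contrapositive: for every (possibly randomized) \listl algorithm $\cA$ with list size $k$, some $t\in\bool$ makes $\cA$ err with probability at least $B$, the probability being over the examples and $\cA$'s coins. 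Let $\pi$ be the uniform distribution on $\parity\cup\anti$; it suffices to show $\Pr_{t\sim\pi}[\cA\text{ errs}]\ge B$, and by averaging over $\cA$'s coins first and over $t$ second, it suffices to do this for a \emph{deterministic} $\cA$, whose output on examples $(u,v)=((u_i,v_i))_{i=1}^n$ is a fixed list $L(u,v)$.

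First I would isolate a symmetry-breaking event. Let $U\in\mathbb{F}_2^{n\times m}$ have rows $u_1,\dots,u_n$, let $r=\mathrm{rank}(U)\le n$, and let $W=\mathrm{colspan}(U)\subseteq\mathbb{F}_2^n$. Let $E$ be the event $\mathbf{1}\in W$, equivalently that there is no odd-size $T\subseteq\{1,\dots,n\}$ with $\bigoplus_{i\in T}u_i=0$. A union bound over the $2^{n-1}$ odd-weight vectors of $\mathbb{F}_2^n$, each giving a nonempty sum of i.i.d.\ uniform vectors (hence uniform, hence $0$ with probability $2^{-m}$), yields $\Pr[\neg E]\le 2^{n-1-m}\le 2^{-(m-n)}$. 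Conditioning on any $(u,v)$ with $E$ holding, I would verify: (i) given $t\sim\pi$, the label vector $v$ is uniform on $W$ whether $t\in\parity$ or $t\in\anti$ (using $\mathbf{1}\oplus W=W$), so $v$ carries no information about which of the two ``sides'' contains $t$; and (ii) exactly $2^{m-r}$ parity functions and exactly $2^{m-r}$ anti-parity functions are consistent with $(u,v)$ (cosets of $\ker U$, nonempty since $v,\mathbf{1}\oplus v\in W$). Hence the posterior of $t$ given $(u,v)$ is uniform over these $2\cdot 2^{m-r}$ functions, with mass $1/2$ on each side.

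Next I would bound the success probability for each such $(u,v)$. If $|L\cap\parity|>k$ and $|L\cap\anti|>k$, the $\min$-condition fails and $\cA$ surely errs; otherwise assume without loss of generality $|L\cap\parity|\le k$, so $L$ holds at most $k$ consistent parity functions and at most $2^{m-r}$ consistent anti-parity functions, whence $\Pr[t\in L\mid u,v]\le\frac{k+2^{m-r}}{2\cdot 2^{m-r}}=\tfrac12+\frac{k}{2^{m-r+1}}$. Therefore $\Pr[\text{fail}\mid u,v]\ge\tfrac12-\frac{k}{2^{m-r+1}}=\tfrac12(1-k\,2^{r-m})\ge\tfrac12(1-k/2^{m-n})$, using $r\le n$ and $k<2^{m-n}$. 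Combining with the bound on $\Pr[E]$,
\[
\Pr_{t\sim\pi,\,u,\,\cA}[\text{fail}]\ \ge\ \Pr[E]\cdot\tfrac12\bigl(1-k/2^{m-n}\bigr)\ \ge\ \bigl(1-2^{-(m-n)}\bigr)\cdot\tfrac12\bigl(1-k/2^{m-n}\bigr)\ =\ B,
\]
so some $t\in\parity\cup\anti\subseteq\bool$ witnesses failure probability at least $B$.

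The crux is conceptual rather than computational: realizing that the right prior is uniform over the $2^{m+1}$-element set $\parity\cup\anti$, and that the quantity that makes the argument work is the event $E$ under which the observed labels leak nothing about which side holds $t$ --- this is exactly what forces the algorithm's failure probability up toward $1/2$. Given $E$, the dimension count of consistent functions, the per-$(u,v)$ success bound, and the union bound for $\Pr[\neg E]$ are all routine.
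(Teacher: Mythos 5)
Your proof is correct and follows the same Bayesian strategy as the paper: place a uniform prior on $\parity\cup\anti$, show that after conditioning on a high-probability event about $u_1,\dots,u_n$ the posterior over $t$ splits evenly between parity and anti-parity functions, and then use the list-size constraint to cap the success probability near $1/2$. The only difference is a minor refinement in the conditioning event: the paper conditions on the rows $u_1,\dots,u_n$ being linearly independent in $\mathbb{F}_2^m$, while you condition on the strictly weaker (hence at least as likely) event $\mathbf{1}\in\mathrm{colspan}(U)$ and handle the case $\mathrm{rank}(U)<n$ directly, arriving at the same bound.
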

\begin{proof}
Assume that for some $\alpha\ge 0$,
there exists an algorithm $\cA$ that has failure probability at most $\alpha$ for all $t\in\bool$. In particular, when $t$ is drawn uniformly at random from $\parity \cup \anti$, the failure probability of $\cA$ is at most $\alpha$. For this fixed distribution of $t$, we can assume that algorithm $\cA$ is deterministic without increasing its failure probability. By the law of total probability,
\[
\alpha \ge \Pr[\textnormal{failure}]  = \bE[\Pr[\textnormal{failure}|(u_1,v_1),\ldots,(u_n, v_n)]].
\]
Here, $(u_1,v_1),\ldots,(u_n,v_n)$ are the input examples to algorithm $\cA$ where $u_1,\ldots,u_n$ are drawn independently and uniformly from $\{0,1\}^m$, and $v_i = t(u_i)$ for every $i = 1,\ldots,n$ with $t$ drawn uniformly at random from $\parity\cup\anti$.

With probability 
\[
(1 - 2^{-m})(1 - 2^{-m + 1})\cdots (1 - 2^{-m + n - 1}) \ge 1 - 2^{-m} - \cdots - 2^{-m + n - 1} \ge 1 - 1/2^{m-n},
\]
$u_1,\ldots,u_n$ are linearly independent in $\mathbb F_2^m$, in which case
the conditional distribution of $t$ given $(u_1,v_1),\ldots,(u_n,v_n)$ is the uniform distribution over $L_1\cup L_2$ for some $L_1\subseteq \parity,L_2\subseteq \anti$ with $|L_1| = |L_2| = 2^{m - n}$, and thus
\[
\Pr[\textnormal{failure}|(u_1,v_1),\ldots,(u_n,v_n)] \ge (1/2)(1 - k/2^{m - n}).
\]
Therefore,
\[
\alpha\ge
\bE[\Pr[\textnormal{failure}|(u_1,v_1),\ldots,(u_n, v_n)]] \ge (1/2)(1 - 1/2^{m - n})(1 - k/2^{m - n}),
\]
as desired.
\end{proof}
We are now ready to prove \lemmaref{lm:agnostic-lower}.
Let $\cA$ be an algorithm in $\dsa_n(\cP,\cD,1/8,\delta,\mu)$ for a nonnegative integer $n$ and a positive real number $\delta$. We use this algorithm to build an algorithm for \listl.
Suppose $(u_1,v_1),\ldots,(u_n,v_n)$ are the input examples in \listl, where $u_i$ are drawn independently and uniformly from $\{0,1\}^m$, and $v_i = t(u_i)$ for some $t\in\bool$. 

Before we construct the algorithm for \listl, we need the following definition.
For every $f\in\bool$, we define $p_f\in[0,1]^X$ such that $p_f(\bot) = 1/2$, and $p_f(x) = (1 + (-1)^{f(x)})/2$ for all $x\in\{0,1\}^m$.

Now we construct the algorithm for \listl\ using algorithm $\cA$.
We start by generating the input examples $(x_1,o_1),\ldots,(x_n, o_n)$ to algorithm $\cA$. Independently for every $i = 1,\ldots,n$, with probability $1/6$ we set $(x_i,o_i) = (\bot, 0)$, with probability $1/6$ we set $(x_i,o_i) = (\bot, 1)$ and with the remaining probability $2/3$ we set $(x_i,o_i) = (u_i, (1 + (-1)^{v_i})/2)$. After running algorithm $\cA$ and obtaining the output $p$, we return the list
\begin{equation}
\label{eq:def-L}
L = \{f\in\bool: \|p_f - p\|_{\mu,\cD} \le 1/6 + 1/8\}\cup(\bool\backslash(\parity\cup\anti)).
\end{equation}
We prove the following two helper lemmas before we finish the proof of \lemmaref{lm:agnostic-lower}.
\begin{lemma}
\label{lm:agnostic-candidates}
If $p(\bot) \le 1/2$, then $|L \cap \parity| \le 64$. Similarly, if $p(\bot) \ge 1/2$, then $|L\cap \anti| \le 64$.
\end{lemma}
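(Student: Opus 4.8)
The plan is to use the parity structure of $\cD$ together with Fourier analysis on $\{0,1\}^m$. Write $\chi_S(x)=(-1)^{\bigoplus_{i\in S}x_i}$ for $S\subseteq\{1,\ldots,m\}$, so that the parity functions are exactly the $f_S$ with $f_S(x)=\bigoplus_{i\in S}x_i$, and $\cD$ contains, for each such $S$, the distinguisher $d_S$ with $d_S(\bot)=1$ and $d_S(x)=\chi_S(x)$ on $\{0,1\}^m$. Since $1/6+1/8=7/24$, membership $f\in L\cap\parity$ is the same as $\|p_f-p\|_{\mu,\cD}\le 7/24$, and because $d_S\in\cD$ this forces $|\langle p_{f_S}-p,d_S\rangle_\mu|\le 7/24$. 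So the first step is to evaluate this single inner product. Letting $\hat p(S)=\bE_{x\sim U}[p(x)\chi_S(x)]$ be the Fourier coefficients of $p$ restricted to $\{0,1\}^m$ (with $U$ uniform there), and using $\mu(\bot)=1/3$, $\mu|_{\{0,1\}^m}=\tfrac23 U$, $p_{f_S}(x)=(1+\chi_S(x))/2$, and $\sum_{x\in\{0,1\}^m}\chi_S(x)=0$ for $S\ne\emptyset$, a short computation yields, for $S\ne\emptyset$,
\[
\langle p_{f_S}-p,d_S\rangle_\mu \;=\; \tfrac12-\tfrac{p(\bot)}{3}-\tfrac23\hat p(S).
\]
When $p(\bot)\le 1/2$ this is at least $\tfrac13-\tfrac23\hat p(S)$, so $f_S\in L$ forces $\hat p(S)\ge 1/16$. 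The trivial parity $f_\emptyset\equiv 0$ must be treated separately, since then $p_{f_\emptyset}\equiv 1$ on $\{0,1\}^m$ and the analogous computation gives $\langle p_{f_\emptyset}-p,d_\emptyset\rangle_\mu=\tfrac56-\tfrac{p(\bot)}{3}-\tfrac23\bar p\ge\tfrac23(1-\bar p)$ when $p(\bot)\le1/2$, where $\bar p=\hat p(\emptyset)=\bE_U[p]$; hence $f_\emptyset\in L$ forces $\bar p\ge 9/16$.

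The second step is a Parseval/Plancherel count. One has $\sum_{S\ne\emptyset}\hat p(S)^2=\bE_U[p^2]-\bar p^2\le\bar p-\bar p^2=\bar p(1-\bar p)$, using $p(x)^2\le p(x)$ pointwise. If $f_\emptyset\notin L$, then every $S$ with $f_S\in L\cap\parity$ is non-empty with $\hat p(S)^2\ge 1/256$, while $\bar p(1-\bar p)\le 1/4$, so there are at most $(1/4)/(1/256)=64$ of them. If $f_\emptyset\in L$, then $\bar p\ge 9/16$, so $\bar p(1-\bar p)\le(9/16)(7/16)=63/256$, giving at most $63$ non-empty such $S$, plus $f_\emptyset$ itself, hence again at most $64$. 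Either way $|L\cap\parity|\le 64$, which is the first claim. The second claim is symmetric: for an anti-parity $g_S=1\oplus f_S$ one has $p_{g_S}(x)=(1-\chi_S(x))/2$, and pairing against the same $d_S\in\cD$ gives $\langle p_{g_S}-p,d_S\rangle_\mu=-\tfrac16-\tfrac{p(\bot)}{3}-\tfrac23\hat p(S)$ for $S\ne\emptyset$, which is $\le-\tfrac13-\tfrac23\hat p(S)$ once $p(\bot)\ge1/2$; thus $g_S\in L$ forces $\hat p(S)\le-1/16$, and $g_\emptyset\equiv 1\in L$ forces $\bar p\le 7/16$, after which the same count gives $|L\cap\anti|\le 64$.

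The step I expect to matter most is getting the constant right rather than the inner-product algebra: the crude estimate $\sum_S\hat p(S)^2=\bE_U[p^2]\le 1$ only yields a bound of $256$. Sharpening to $64$ requires two observations: (i) the trivial parity (resp.\ anti-parity) function has a different inner product against its distinguisher ($5/6$ resp.\ $1/6$ in place of $1/2$ resp.\ $-1/6$), so it must be split off, and this is precisely what lets us drop the $\hat p(\emptyset)^2$ term from the Fourier sum; and (ii) the remaining sum is a variance, bounded by $\bar p(1-\bar p)\le 1/4$ rather than by $1$. A minor point worth noting is that $\cD$ contains no distinguisher tailored to anti-parity functions, but none is needed: the parity distinguisher $d_S$ already has the correct sign against $p_{g_S}-p$ exactly in the regime $p(\bot)\ge 1/2$ assumed in the second half of the lemma. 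The remaining work is the routine arithmetic of the four inner-product evaluations sketched above.
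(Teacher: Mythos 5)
Your proof is correct, and its core mechanism—bound a single inner product against the matching parity distinguisher to extract a lower bound of $1/16$, then count via Parseval against a variance bounded by $1/4$—is the same as the paper's. The one genuine difference is your choice of reference point. You work directly with the Fourier coefficients $\hat p(S)=\bE_U[p\chi_S]$, which forces you to split off the trivial parity $f_\emptyset$ because its inner-product formula ($\langle p_{f_\emptyset}-p,d_\emptyset\rangle_\mu = 5/6 - p(\bot)/3 - \tfrac23\bar p$) has a different constant from the $S\ne\emptyset$ case, and then you recover the constant $64$ by the slightly delicate accounting $\bar p(1-\bar p)\le 63/256$ plus one. The paper instead centers at the constant predictor $p_0\equiv 1/2$: since $p_{f_S}-p_0 = \chi_S/2$ on $\{0,1\}^m$ for every $S$ including $S=\emptyset$, one gets $\langle p_{f}-p_0,d\rangle_\mu = 1/3$ uniformly over all $f\in\parity$, and the Parseval sum becomes $\sum_{d\in\cD}\langle p-p_0,d\rangle_{\mu'}^2 = \bE_U[(p-1/2)^2]\le 1/4$ directly, with no special case. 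So your route is a bit more laborious but arrives at exactly the same bound; the paper's centering trick is a small but worthwhile simplification that collapses the empty-set case analysis you correctly identified as the fiddly part.
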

\begin{proof}
We prove the first half of the lemma, and the second half follows from a similar argument. Pick any function $f\in L\cap \parity$, and pick $d\in\cD$ such that $d(\bot) = 1$ and $d(x) = (-1)^{f(x)}$. Define $p_0$ to be the predictor that maps everything to $1/2$. We have $\langle p_f - p_0, d \rangle_\mu = 1/3$. Define $\mu'$ to be the uniform distribution over $\{0,1\}^d$. We have 
\[
\langle p - p_0,d\rangle_{\mu} = (1/3)( p(\bot) - p_0(\bot)) + (2/3)\langle p - p_0, d\rangle_{\mu'} \le (2/3)\langle p - p_0, d\rangle_{\mu'}. 
\]
Therefore,
\[
1 / 3 - (2/3)\langle p - p_0, d\rangle_{\mu'} \le \langle p_f - p,d\rangle_\mu \le 1/6 + 1/8.
\]
This implies
\begin{equation}
\label{eq:agnostic-inner}
\langle p - p_0, d \rangle_{\mu'} \ge 1/16.
\end{equation}
However, the predictors in $\cD$, when restricted to the sub-domain $\{0,1\}^m\subseteq X$, form an orthonormal basis for $\bR^{\{0,1\}^m}$ w.r.t.\ the inner product $\langle \cdot, \cdot \rangle_{\mu'}$, so
\[
{\sum}_{d\in\cD}\langle p - p_0, d\rangle_{\mu'}^2 = \langle p - p_0, p - p_0\rangle_{\mu'} \le 1/4.
\]
Therefore, there can be at most $64$ different $d\in \cD$ that satisfy \eqref{eq:agnostic-inner}. Since $d$ is defined differently for different $f\in L\cap \parity$, we get $|L\cap \parity| \le 64$ as desired.
\end{proof}

\begin{lemma}
\label{lm:prob-containing}
The event $t\in L$ happens with probability at least $1 - \delta$.
\end{lemma}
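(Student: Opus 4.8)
The plan is to observe that the examples $(x_1,o_1),\dots,(x_n,o_n)$ fed to $\cA$ are distributed exactly as $n$ i.i.d.\ draws from $\mu_{p_t}$ (writing $p_t := p_f$ for $f=t$), and then to invoke the agnostic guarantee of $\cA$ with target predictor $p_t$. First I would check the distributional claim coordinate by coordinate: the marginal of each $x_i$ puts mass $\tfrac16+\tfrac16=\tfrac13$ on $\bot$ and mass $\tfrac23\cdot 2^{-m}$ on each $x\in\{0,1\}^m$, matching $\mu$; conditioned on $x_i=\bot$ the outcome $o_i$ is uniform on $\{0,1\}$, matching $\ber(p_t(\bot))=\ber(\tfrac12)$; and conditioned on $x_i=u_i\in\{0,1\}^m$ we have $o_i=(1+(-1)^{v_i})/2=(1+(-1)^{t(u_i)})/2=p_t(u_i)\in\{0,1\}$, matching $\ber(p_t(u_i))$. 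Independence across $i$ is immediate from the construction.

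If $t\notin\parity\cup\anti$ then $t$ lies in the second set of the union in \eqref{eq:def-L}, so $t\in L$ deterministically and there is nothing to prove. It remains to treat $t\in\parity\cup\anti$, and here the key step is to show $\inf_{p'\in\cP}\adv_{\mu,\cD}(p',p_t)\le 1/6$. Say $t$ is associated with the parity set $S\subseteq\{1,\dots,m\}$. For any $d\in\cD$, associated with a parity set $S'$, we have $d(\bot)=1$, so $\langle p'-p_t,d\rangle_\mu$ splits as a fixed term $\tfrac13\big(p'(\bot)-\tfrac12\big)$ from $\bot$ plus $\tfrac23\,\bE_{x}\big[(p'(x)-p_t(x))(-1)^{\mathrm{parity}_{S'}(x)}\big]$ with $x$ uniform on $\{0,1\}^m$. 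Since both $p_1,p_2$ equal $\tfrac12$ on $\{0,1\}^m$ while $\tfrac12-p_t(x)=\mp\tfrac12(-1)^{\mathrm{parity}_S(x)}$ there (sign depending on whether $t\in\parity$ or $t\in\anti$), orthogonality of the parity characters makes the $\{0,1\}^m$-term equal to $\mp\tfrac13\one[S=S']$. Choosing $p'=p_2$ when $t\in\parity$ (so the $\bot$-term is $+\tfrac16$) gives $\langle p_2-p_t,d\rangle_\mu\in\{\tfrac16,-\tfrac16\}$ for every $d\in\cD$, hence $\|p_2-p_t\|_{\mu,\cD}=\tfrac16$; choosing $p'=p_1$ when $t\in\anti$ gives $\|p_1-p_t\|_{\mu,\cD}=\tfrac16$ by the symmetric computation. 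Either way, $\inf_{p'\in\cP}\adv_{\mu,\cD}(p',p_t)=\inf_{p'\in\cP}\|p'-p_t\|_{\mu,\cD}\le \tfrac16$.

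Finally, since $\cA\in\dsa_n(\cP,\cD,1/8,\delta,\mu)\subseteq\oi_n\big(p_t,\cD,\inf_{p'\in\cP}\adv_{\mu,\cD}(p',p_t)+1/8,\delta,\mu\big)$ and the examples fed to $\cA$ are i.i.d.\ from $\mu_{p_t}$, with probability at least $1-\delta$ the output $p$ satisfies $\|p_t-p\|_{\mu,\cD}=\adv_{\mu,\cD}(p,p_t)\le \tfrac16+\tfrac18$. Because $p_t=p_f$ for $f=t$, this puts $t$ into the first set of the union in \eqref{eq:def-L}, i.e.\ $t\in L$. Combined with the case $t\notin\parity\cup\anti$, we conclude $t\in L$ with probability at least $1-\delta$.

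The main obstacle is the middle step: correctly identifying which of $p_1,p_2$ is $\tfrac16$-close to $p_t$ in $\|\cdot\|_{\mu,\cD}$, which requires tracking both the orthonormality of the parity characters on $\{0,1\}^m$ under the uniform inner product and the fixed $\pm\tfrac16$ contribution of the $\bot$-coordinate forced by $d(\bot)=1$. Everything else is bookkeeping with the definitions of $\dsa_n$, $\oi_n$, and $\adv_{\mu,\cD}$.
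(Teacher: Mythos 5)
Your proposal is correct and takes essentially the same route as the paper: verify the distinguishing advantages $\|p_2-p_t\|_{\mu,\cD}=\|p_1-p_t\|_{\mu,\cD}=1/6$ via the orthonormality of the parity characters on $\{0,1\}^m$ under $\mu'$ and the fixed $\pm 1/6$ contribution from $\bot$, then invoke the agnostic guarantee of $\cA$ at target $p_t$ to get $\|p-p_t\|_{\mu,\cD}\le 1/6+1/8$ with probability $\ge 1-\delta$. The only difference is that you explicitly verify that the synthetic examples are distributed as i.i.d.\ draws from $\mu_{p_t}$, a detail the paper asserts without elaboration; this is a welcome clarification but not a change of argument.
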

\begin{proof}
By the definition of $L$ in \eqref{eq:def-L},
the lemma is trivial if $t\notin \parity\cup \anti$, so we assume $t\in\parity\cup\anti$. 

Define $\mu'$ to be the uniform distribution over $\{0,1\}^m$.
If $t\in\parity$, for the predictor $d\in\cD$ that satisfy $d(\bot) = 1$ and $d(x) = (-1)^{t(x)}$ for all $x\in \{0,1\}^m$, we have $\langle p_t - p_2,d\rangle_{\mu'} = 1/2$, so
\[
\langle p_t - p_2,d \rangle_\mu = (1/3)(-1/2) + (2/3)\langle p_t - p_2,d\rangle_{\mu'} = 1/6.
\]
For all other predictors $d\in\cD$, we have $\langle p_t - p_2,d\rangle_{\mu'} = 0$, so
\[
\langle p_t - p_2,d \rangle_\mu = (1/3)(-1/2) + (2/3)\langle p_t - p_2,d\rangle_{\mu'} = -1/6.
\]
Therefore, $\|p_t - p_2\|_{\mu,\cD} = 1/6$. Similarly, we can show that when $t \in \anti$, $\|p_t - p_1\|_{\mu,\cD} = 1/6$.

Since the input examples $(x_1,o_1),\ldots,(x_n, o_n)$ to algorithm $\cA$ are generated i.i.d.\ from $\mu_{p_t}$, and
since $\cA\in \dsa_n(\cP,\cD,1/8, \delta,\mu)$,
with probability at least $1 - \delta$, algorithm $\cA$ outputs $p$ such that $\|p - p_t\|_{\mu,\cD} \le 1/6 + 1/8$, in which case $t\in L$, as desired.
\end{proof}
Now we complete the proof of \lemmaref{lm:agnostic-lower}.
\begin{proof}
Combining \lemmaref{lm:parity}, \lemmaref{lm:agnostic-candidates}, and \lemmaref{lm:prob-containing}, we have 
\[
1 - \delta \le 1 - (1/2)(1 - 1/2^{m - n})(1 - 64/2^{m - n})
\]
whenever $n \le m - 8$.
Setting $\delta = 1/3$, we get $n \ge m - 20$, which completes the proof. 
\end{proof}

\ifdefined\preprint
\else
\acks{
LH is supported by Moses Charikar’s Simons Investigator award and Omer Reingold's NSF Award IIS-1908774. 
CP is supported by the Simons Foundation Collaboration on the Theory of Algorithmic Fairness.
OR is supported by the Simons Foundation Collaboration on the Theory of Algorithmic Fairness, the Sloan Foundation Grant 2020-13941 and the Simons Foundation investigators award 689988.
}
\fi

\bibliography{alt2022-ref}

\appendix

\section{Failure of Empirical Risk Minimization}
\label{sec:failure-erm}
The following two lemmas show that in certain cases the failure probability of \algorithmref{alg:erm} approaches $1$ (instead of $0$) when the number of examples increases.
\begin{lemma}
Assume $X$ is finite and $\mu$ is the uniform distribution over $X$.
Assume $\cD = [-1,1]^X$, and $\cP$ consists of the following three predictors: $p_b$ that maps every $x\in X$ to $b$ for $b = 0,1/2,1$. For every positive integer $n$, \algorithmref{alg:erm} does not belong to
\[
\oi_n(p_{1/2},\cD, 1/3, 1 - 1/\sqrt{n+1}, \mu).
\]
\end{lemma}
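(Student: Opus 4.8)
The plan is to exhibit a sample on which \algorithmref{alg:erm} outputs one of the two extreme constant predictors $p_0$ or $p_1$ with probability exceeding $1-1/\sqrt{n+1}$; since (by \lemmaref{lm:advantage-l1}, using $\{-1,1\}^X\subseteq\cD=[-1,1]^X$) we have $\adv_{\mu,\cD}(p_b,p_{1/2})=\bE_{x\sim\mu}|b-1/2|=1/2>1/3$ for $b\in\{0,1\}$, this shows \algorithmref{alg:erm} does not lie in $\oi_n(p_{1/2},\cD,1/3,1-1/\sqrt{n+1},\mu)$. The first step is to identify the covering $\cP'$ the algorithm uses: since $\cD=[-1,1]^X$, the norm $\|\cdot\|_{\mu,\cD}$ equals the $\ell_1$ distance $\bE_{x\sim\mu}|\cdot|$, so the three pairwise distances among $p_0,p_{1/2},p_1$ are $1/2,1/2,1$, all exceeding $\varepsilon/2=1/6$. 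Hence the unique minimum-size $(1/6)$-covering is $\cP'=\{p_0,p_{1/2},p_1\}$ itself, and the output is whichever of the three minimizes $\loss$. (I will take $X$ to be a finite set whose size is large relative to $n$, which is where the phrase ``in certain cases'' bites.)

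Next I would evaluate $\loss$ on the event $E$ that the sampled individuals $x_1,\dots,x_n$ are pairwise distinct. For a constant predictor $p_b$, because the supremum runs over all of $[-1,1]^X$ and the $x_i$ are distinct,
\[
\loss(p_b)=\frac1n\sup_{d\in[-1,1]^X}\Big|\sum_{i=1}^n d(x_i)(b-o_i)\Big|=\frac1n\sum_{i=1}^n|b-o_i|.
\]
Writing $k=|\{i:o_i=1\}|$, this yields $\loss(p_0)=k/n$, $\loss(p_1)=(n-k)/n$, and $\loss(p_{1/2})=1/2$. The crucial point is that the freedom to pick $d(x_i)$ separately for each outcome $o_i$ keeps $\loss(p_{1/2})$ pinned at exactly $1/2$ rather than letting it concentrate near $0$. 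Consequently, on $E$, whenever $k\ne n/2$ one of $\loss(p_0),\loss(p_1)$ is strictly below $1/2$, so \algorithmref{alg:erm} has a unique minimizer in $\cP'$ — it is $p_0$ if $k<n/2$ and $p_1$ if $k>n/2$ — and the output then has distinguishing advantage $1/2>1/3$ from $p_{1/2}$, a failure.

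Finally I would assemble the probability bound: on $E$, \algorithmref{alg:erm} succeeds only if $k=n/2$, so $\Pr[\textup{success}]\le\Pr[\overline E]+\Pr[k=n/2]$. A union bound over pairs gives $\Pr[\overline E]\le\binom n2/|X|$, while $\Pr[k=n/2]=\binom n{n/2}2^{-n}$, which is $0$ for odd $n$ and, by the standard estimate $\binom{2m}{m}\le 4^m/\sqrt{\pi m}$, is at most $\sqrt{2/(\pi n)}$ for even $n$; one checks $\sqrt{2/(\pi n)}<1/\sqrt{n+1}$ for all $n\ge 2$. Taking $|X|$ large enough that $\binom n2/|X|$ fits into the positive gap $1/\sqrt{n+1}-\sqrt{2/(\pi n)}$ (the case $n=1$ being immediate, since then $\loss(p_{1/2})=1/2$ while one of $\loss(p_0),\loss(p_1)$ equals $0$) gives $\Pr[\textup{success}]<1/\sqrt{n+1}$, which is exactly the required failure. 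The main obstacle — and essentially the whole content — is the observation that $\sup_{d\in[-1,1]^X}$ in $\loss$ lets the ``adversary'' adapt $d$ to each observed $o_i$, so the empirical loss of $p_{1/2}$ stays at $1/2$ instead of shrinking, which is precisely what tricks empirical risk minimization; the covering computation, the distinctness event, and the binomial tail bound are routine, the only delicate points being the tie at $k=n/2$ (absorbed into the failure-probability slack) and the need to take $|X|$ large relative to $n$.
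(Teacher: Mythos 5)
Your proof is correct, and it follows the same skeleton as the paper's (identify $\cP'=\cP$ via the pairwise distances, compute the empirical losses of the three constant predictors, show ERM outputs $p_0$ or $p_1$ whenever $k\ne n/2$, and bound $\Pr[k=n/2]$). But in one respect you are substantially more careful than the paper, and this is worth emphasizing: the identity
\[
\loss(p_{1/2})=\sup_{d\in[-1,1]^X}\Big|\tfrac1n\sum_{i=1}^n d(x_i)(1/2-o_i)\Big|=\tfrac1n\sum_{i=1}^n|1/2-o_i|=1/2
\]
is \emph{not} valid unconditionally. Grouping the supremum by distinct $x$-values gives $\loss(p_{1/2})=\tfrac1n\sum_{x}\bigl|\sum_{i:x_i=x}(1/2-o_i)\bigr|$, and two colliding examples with opposite outcomes cancel exactly, so the loss can be strictly below $1/2$. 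The paper asserts $\loss(p_b)=\tfrac1n\sum_i|b-o_i|$ for all three predictors without comment, which is fine for $p_0$ and $p_1$ (where the summands $b-o_i$ never change sign, so collisions do no harm), but is wrong for $p_{1/2}$ when the $x_i$ are not distinct. This is not a cosmetic issue: if $X$ is held fixed and small (say $|X|=1$) while $n\to\infty$, the grouped sums concentrate, $\loss(p_{1/2})\to 0$ in probability, ERM returns $p_{1/2}$ with probability tending to $1$, and the lemma as literally stated fails. Your device of conditioning on the distinctness event $E$, paying $\Pr[\overline E]\le\binom n2/|X|$, and requiring $|X|$ to be large relative to $n$ is exactly the missing hypothesis that makes the statement true and the argument rigorous. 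The remaining pieces are minor variants of the paper's: you bound $\Pr[k=n/2]$ via Stirling ($\binom{n}{n/2}2^{-n}\le\sqrt{2/(\pi n)}<1/\sqrt{n+1}$ for $n\ge 2$) where the paper simply asserts $\binom n{n/2}/2^n<1/\sqrt{n+1}$; both are fine. Your treatment of the tie at $k=n/2$ (fold it into the allowed success probability rather than relying on any particular tie-break) is also the right way to handle an underspecified algorithm.
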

\begin{proof}
Consider the $n$ input examples to \algorithmref{alg:erm}: $(x_1,o_1),\ldots,(x_n,o_n)$ drawn i.i.d.\ from $\mu_{p_{1/2}}$. We first show that with probability above $1 - 1/\sqrt{n+1}$, 
\[
n':=|\{i\in\{1,\ldots,n\}:o_i = 1\}| \ne n/2. 
\]
This is trivially true when $n$ is odd. When $n$ is even, this is also true because
\[
\Pr[n' = n/2] = {n\choose n/2}/2^n < 1/\sqrt{n+1}.
\]

It remains to prove that whenever $n' \ne n/2$, the output $p$ of \algorithmref{alg:erm} satisfies $\|p - p_{1/2}\|_{\mu,\cD} > 1/3$. 
By \lemmaref{lm:advantage-l1}, $\|p_0 - p_{1/2}\|_{\mu,\cD} = 1/2, \|p_{1/2} - p_1\|_{\mu,\cD} = 1/2$, and $\|p_0 - p_1\|_{\mu,\cD} = 1$. Therefore, the $\varepsilon/2(= 1/6)$-covering $\cP'$ in \algorithmref{alg:erm} is equal to $\cP$. The loss of predictor $p_b$ is
\[
\loss(p_b) = \frac 1n \sum_{i=1}^n |b - o_i| = (1 - b)n' + b(n - n') = n' + b(n - 2n').
\]
When $n' < n/2$, $p_0$ has the smallest loss, so \algorithmref{alg:erm} returns $p = p_0$, in which case we indeed have $\|p - p_{1/2}\|_{\mu,\cD} = 1/2 > 1/3$. Similarly, when $n' > n/2$, $p_1$ has the smallest loss, so $p = p_1$ and $\|p - p_{1/2}\|_{\mu,\cD} = 1/2 > 1/3$.
\end{proof}
In the lemma below, 
we give an example of the failure of \algorithmref{alg:erm} in distribution-specific realizable OI when all the predictors in $\cP$ are binary classifiers.
In this example,
$X,\mu,\cP,\cD$ are parametrized by two positive integers $m$ and $n$ as follows.
We choose the individual set to be $X = \{-1,-2,-3\}\cup \{1,\ldots,m\}$, and choose the distinguisher class $\cD$ as follows. For every size-$n$ subset $Y\subseteq \{1,\ldots,m\}$, define $\cD_Y\subseteq [-1,1]^X$ to be the set of all distinguishers $d\in [-1,1]^X$ satisfying $d(x) = 0$ for all $x\in\{1,\ldots,m\}\backslash Y$. The distinguisher class $\cD$ is then defined as $\cD = \bigcup_{Y} \cD_Y$, where the union is over all size-$n$ subsets $Y\subseteq\{1,\ldots,m\}$. The predictor class $\cP$ consists of $4$ predictors $p_0,p_1,p_2,p_3$ defined as follows:
\begin{align*}
p_0(-1) = 0, p_0(-2) = 0, p_0(-3) = 0, {} & p_0(x) = 0 \ \textnormal{for all}\ x\in\{1,\ldots,m\},\\
p_1(-1) = 0, p_1(-2) = 0, p_1(-3) = 0, {} & p_1(x) = 1 \ \textnormal{for all}\ x\in\{1,\ldots,m\},\\
p_2(-1) = 1, p_2(-2) = 1, p_2(-3) = 0, {} & p_2(x) = 0 \ \textnormal{for all}\ x\in\{1,\ldots,m\},\\
p_3(-1) = 0, p_3(-2) = 1, p_3(-3) = 1, {} & p_3(x) = 1 \ \textnormal{for all}\ x\in\{1,\ldots,m\}.
\end{align*}
The distribution $\mu$ spreads $1/2$ probability mass evenly on $\{-1,-2,-3\}$, and the spreads the remaining $1/2$ probability mass evenly on $\{1,\ldots,m\}$.

Since $N_{\mu,\cD}(\cP,\varepsilon/32) \le |\cP| = 4$, \theoremref{thm:characterization} tells us that
\[
\sampdsr(\cP,\cD,\varepsilon,\delta,\mu) \le O(\varepsilon^{-2}\log(2/\delta) + \varepsilon^{-4}).
\]
The lemma below shows that this sample complexity upper bound cannot be achieved using \algorithmref{alg:erm} when $\varepsilon \le 1/4$ and $\delta$ is close to zero.
\begin{lemma}
\label{lm:failure-erm-2}
For every positive integer $n$, there exists a positive integer $m$ such that when $X, \cP,\cD,\mu$ are defined as above, \algorithmref{alg:erm} does not belong to
\[
\dsr_n(\cP,\cD,1/4,\max\{2^{-2-O(n)}, 1 - O(2^{-\Omega(n)})\},\mu).
\]
\end{lemma}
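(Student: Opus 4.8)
The plan is to exploit the fact that among the four predictors in $\cP$ only $p_0$ and $p_1$ are close in $\|\cdot\|_{\mu,\cD}$, whereas \algorithmref{alg:erm} minimizes an empirical $\ell_1$-type loss that weighs the three special points $-1,-2,-3$ roughly three times more heavily than $\mu$ does, so ERM is lured into picking a predictor that is far from the target with respect to $\cD$.

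\textbf{Step 1: pairwise distances and the covering.} First I would compute $\|p_i-p_j\|_{\mu,\cD}$ for all $i,j$. A distinguisher in $\cD=\bigcup_Y\cD_Y$ may take arbitrary signs on $\{-1,-2,-3\}$ and on an arbitrary size-$n$ set $Y\subseteq\{1,\dots,m\}$, and each $p_i-p_j$ is constant on $\{1,\dots,m\}$, so the supremum is attained by matching signs and $\|p_i-p_j\|_{\mu,\cD}=\tfrac16\sum_{x\in\{-1,-2,-3\}}|p_i(x)-p_j(x)|+\tfrac{n}{2m}|p_i(1)-p_j(1)|$. This gives $\|p_0-p_1\|_{\mu,\cD}=\tfrac{n}{2m}$, while $\|p_i-p_j\|_{\mu,\cD}\ge\tfrac13$ for every other pair; in particular $p_2$ and $p_3$ are at distance $\ge\tfrac13$ from all other predictors. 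Taking $m>4n$ so that $\tfrac{n}{2m}<\tfrac18=\varepsilon/2$, the minimum-size $\varepsilon/2$-coverings of $\cP$ w.r.t.\ $\|\cdot\|_{\mu,\cD}$ are exactly $C_1=\{p_0,p_2,p_3\}$ and $C_2=\{p_1,p_2,p_3\}$ (both $p_2$ and $p_3$ are isolated and must be included, and exactly one of $p_0,p_1$ suffices to cover the pair $\{p_0,p_1\}$). Hence the set $\cP'$ that \algorithmref{alg:erm} computes is one of $C_1,C_2$, and this choice is independent of the input sample.

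\textbf{Step 2: simplifying the empirical loss.} Every predictor in $\cP$ is $\{0,1\}$-valued on $\{-1,-2,-3\}$ and is a constant in $\{0,1\}$ on $\{1,\dots,m\}$, so for any target $p^*\in\cP$ the outcomes are deterministic, $o_i=p^*(x_i)$. A sample of size $n$ meets at most $n$ points of $\{1,\dots,m\}$, and since $|Y|=n\le m$ we may choose $Y$ to contain every sampled point of $\{1,\dots,m\}$; the optimal $d\in\cD$ then realizes the sign of $p(x_i)-o_i$ at every sampled point, so $\loss(p)=\tfrac1n\sum_{i=1}^n|p(x_i)-o_i|=\tfrac1n\sum_{i=1}^n|p(x_i)-p^*(x_i)|$, the empirical $\ell_1$ error.

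\textbf{Step 3: the two cases and concentration.} If $\cP'=C_2$, take $p^*=p_0$; writing $n_{-j}$ for the number of samples at $-j$ and $n_+$ for those in $\{1,\dots,m\}$, one gets $\loss(p_1)=n_+/n$, $\loss(p_2)=(n_{-1}+n_{-2})/n$, $\loss(p_3)=(n_{-2}+n_{-3}+n_+)/n$, with expectations $\tfrac12,\tfrac13,\tfrac56$. By Hoeffding's inequality $\loss(p_2)$ is the strict minimizer over $C_2$ with probability $1-2^{-\Omega(n)}$, so ERM outputs $p_2$, whose advantage is $\|p_2-p_0\|_{\mu,\cD}=\tfrac13>\tfrac14$. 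Symmetrically, if $\cP'=C_1$, take $p^*=p_1$: then $\loss(p_0)=n_+/n$, $\loss(p_2)=(n_{-1}+n_{-2}+n_+)/n$, $\loss(p_3)=(n_{-2}+n_{-3})/n$ with expectations $\tfrac12,\tfrac56,\tfrac13$, so with probability $1-2^{-\Omega(n)}$ ERM outputs $p_3$, whose advantage is $\|p_3-p_1\|_{\mu,\cD}=\tfrac13>\tfrac14$. In either case some $p^*\in\cP$ makes \algorithmref{alg:erm} fail with probability $1-2^{-\Omega(n)}$, which exceeds $\delta$ for the constants in the statement; if the covering $\cP'$ is instead chosen at random, then the worse of the targets $p_0,p_1$ still yields failure probability $\Omega(1)$ (roughly $\tfrac14$), accounting for the $2^{-2-O(n)}$ term. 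Therefore \algorithmref{alg:erm}$\notin\dsr_n(\cP,\cD,1/4,\delta,\mu)$.

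The main obstacle is Step 1 together with the subtle point in Step 2: one must see that it is precisely the combination of the size restriction $|Y|=n$ and the small sample size that prevents $\loss$ from reflecting the true $\cD$-distance, and one must carefully verify that the only minimum-size $\varepsilon/2$-coverings are $C_1$ and $C_2$. The concentration step is routine.
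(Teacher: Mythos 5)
Your overall structure matches the paper closely: you compute the same pairwise distances under $\|\cdot\|_{\mu,\cD}$, identify the same two candidate coverings $\{p_0,p_2,p_3\}$ and $\{p_1,p_2,p_3\}$, observe correctly (and this is a genuinely subtle point you handled well) that because every sampled point of $\{1,\dots,m\}$ can be placed inside some size-$n$ set $Y$, the empirical $\loss$ in \algorithmref{alg:erm} collapses to the empirical $\ell_1$ error, and you run the same concentration argument showing the expected losses $\tfrac12,\tfrac13,\tfrac56$ force ERM to output the wrong predictor with probability $1-2^{-\Omega(n)}$. All of that is correct and mirrors the paper.

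However, there is a genuine gap in how you handle the $2^{-2-O(n)}$ term. You attribute it to a scenario where the covering is ``chosen at random,'' giving failure probability $\Omega(1)$. That is not what this term is doing, and the claim as you state it does not establish it. In the paper this term comes from a separate information-theoretic argument: draw $p^*$ uniformly from $\{p_0,p_2\}$; with probability $(1/2)^n$ all $n$ samples land in $\{1,\dots,m\}$, where $p_0$ and $p_2$ agree, so the sample carries no information distinguishing them; since $\nei(p_0)=\{p_0,p_1\}$ and $\nei(p_2)=\{p_2\}$ are disjoint, any fixed output of the (derandomized) algorithm is wrong for at least one of the two targets, giving failure probability at least $(1/2)^{n+1}$. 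This bound is needed because your Hoeffding bound $1-2^{-\Omega(n)}$ is vacuous for small $n$ (the hidden constant can push it below $0$), while $2^{-2-O(n)}$ is always positive; the $\max$ in the lemma exists precisely to cover all $n$. Your proof as written therefore does not establish the stated lemma for small $n$, and the ``random covering'' handwaving neither matches nor substitutes for the missing argument.
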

\begin{proof}
By choosing $m$ sufficiently large, we get
\begin{align}
\|p_0 - p_1\|_{\mu,\cD} & = (1/2)(n/m) \le 1/100,\label{eq:failure-distance-1}\\
\|p_i - p_j\|_{\mu,\cD} & \ge (1/2)(2/3) = 1/3 \ \textnormal{for all $i,j\in\{1,2,3,4\}$ satisfying $i < j$ and $(i,j)\ne (0,1)$}.\label{eq:failure-distance-2}
\end{align}

Suppose \algorithmref{alg:erm} belongs to $\dsr_n(\cP,\cD,1/4,\delta,\mu)$ for some $\delta\in(0,1)$. Consider the case where the input to \algorithmref{alg:erm} is $n$ examples $(x_1,o_1),\ldots,(x_n,o_n)$ drawn i.i.d.\ from $\mu_{p^*}$ where $p^*$ is drawn uniformly at random from $\{p_0,p_2\}$. By assumption, the output $p$ of \algorithmref{alg:erm} satisfies $\Pr[\|p - p^*\|_{\mu,\cD}\le 1/4] \ge 1 - \delta$.
Since \algorithmref{alg:erm} only outputs $p\in\cP$, this means that $\Pr[p \in \nei(p^*)] \ge 1- \delta$ where $\nei(p_0) = \{p_0,p_1\}$ and $\nei(p_2) = \{p_2\}$.
However,
with probability $(1/2)^n$, all the $x_i$'s belong to $\{1,\ldots,m\}$, in which case $o_1 = \cdots = o_n = 0$, giving no information about $p^*$. Therefore, $\Pr[p\notin\nei(p^*)] \ge (1/2)^n(1/2)$. This implies
\begin{equation}
\label{eq:failure-binary-1}
\delta \ge (1/2)^n(1/2) > 2^{-2 - O(n)}.
\end{equation}

Inequalities \eqref{eq:failure-distance-1} and \eqref{eq:failure-distance-2} imply that the covering $\cP'$ computed in \algorithmref{alg:erm} is either $\{p_0,p_2,p_3\}$ or $\{p_1,p_2,p_3\}$. Without loss of generality, we assume  $\cP' = \{p_1,p_2,p_3\}$ because the other case can be handled similarly. 
Now consider the case where the input examples $(x_1,o_1),\ldots,(x_n,o_n)$ are drawn i.i.d.\ from $\mu_{p^*}$ with $p^* = p_0$.

By our construction of $\cD$, the loss of every predictor in $p'\in\cP'$ is
\[
\loss(p') = \frac 1n\sum_{i=1}^n |p'(x_i) - p_0(x_i)|.
\]
By the Chernoff bound and the union bound, with probability above $1 - O(2^{-\Omega(n)})$, the absolute difference between $\loss(p')$ and $\bE_{x\sim\mu}|p'(x) - p_0(x)|$ is below $1/100$ for all $p'\in\cP'$.
Note that
\[
\bE_{x\sim\mu}|p_1(x) - p_0(x)| = 1/2, \bE_{x\sim\mu}|p_2(x) - p_0(x)| = 1/3, \bE_{x\sim\mu}|p_3(x) - p_0(x)| = 5/6.
\]
Therefore, with probability above $1 - O(2^{-\Omega(n)})$, \algorithmref{alg:erm} returns $p = p_2$, which does not satisfy $\|p - p^*\|_{\mu,\cD} \le 1/4$. This implies
\begin{equation}
\label{eq:failure-binary-2}
\delta > 1 - O(2^{-\Omega(n)}).
\end{equation}
The lemma is proved by combining \eqref{eq:failure-binary-1} and \eqref{eq:failure-binary-2}.
\end{proof}

\section{A Helper Lemma}
\begin{lemma}
\label{lm:helper}
Let $\mu$ be the uniform distribution over a set $X$ of individuals with size $|X| = n\in\bZ_{>0}$. Then for all $\varepsilon \in (0,1/e)$,
\[
\log N_{\mu, [-1,1]^X}([-1,1]^X, \varepsilon) \ge n\log (1/e\varepsilon),
\]
where $e$ is the base of the natural logarithm.
\end{lemma}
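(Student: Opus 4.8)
The plan is a standard volume‑counting argument. First I would identify the semi‑norm $\|\cdot\|_{\mu,[-1,1]^X}$ explicitly: for $f\in\bR^X$ we have $\|f\|_{\mu,[-1,1]^X}=\sup_{g\in[-1,1]^X}|\langle f,g\rangle_\mu|$, and the supremum is attained at $g(x)=\operatorname{sign}(f(x))$, so $\|f\|_{\mu,[-1,1]^X}=\bE_{x\sim\mu}[|f(x)|]=\tfrac1n\sum_{x\in X}|f(x)|$ (this is also immediate from \lemmaref{lm:advantage-l1}). Passing to the coordinate representation via $\vect$ (identifying $\bR^X$ with $\bR^n$, recalling $|X|=n$), the metric on $[-1,1]^X$ induced by this semi‑norm becomes $\tfrac1n$ times the $\ell_1$ distance on the cube $[-1,1]^n$, and hence an $\varepsilon$‑ball corresponds to a translate of the cross‑polytope $B=\{v\in\bR^n:\sum_{i=1}^n|v_i|\le n\varepsilon\}$.

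Next, let $\cF'\subseteq[-1,1]^X$ be an $\varepsilon$‑covering of $[-1,1]^X$ w.r.t.\ $\|\cdot\|_{\mu,[-1,1]^X}$ with $|\cF'|=N_{\mu,[-1,1]^X}([-1,1]^X,\varepsilon)$. Then $[-1,1]^n=\vect([-1,1]^X)\subseteq\bigcup_{f\in\cF'}(\vect(f)+B)$, so comparing Lebesgue volumes in $\bR^n$ gives $2^n\le|\cF'|\cdot\operatorname{vol}(B)$. The volume of the cross‑polytope $B$ of $\ell_1$‑radius $R:=n\varepsilon$ is $\operatorname{vol}(B)=(2R)^n/n!=(2n\varepsilon)^n/n!$, so
\[
|\cF'|\ \ge\ \frac{2^n}{(2n\varepsilon)^n/n!}\ =\ \frac{n!}{(n\varepsilon)^n}.
\]

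Finally, using the elementary bound $n!\ge (n/e)^n$, we obtain $|\cF'|\ge (n/e)^n/(n\varepsilon)^n=(e\varepsilon)^{-n}$; since $\varepsilon<1/e$ forces $e\varepsilon<1$, taking logarithms yields $\log N_{\mu,[-1,1]^X}([-1,1]^X,\varepsilon)=\log|\cF'|\ge n\log(1/(e\varepsilon))$, as claimed. No step here is a genuine obstacle; the only points requiring any care are recalling the cross‑polytope volume formula $(2R)^n/n!$ and observing that the argument never uses where the covering centers lie (they may be arbitrary points of $\bR^n$; here they happen to lie in $[-1,1]^X$, which only helps).
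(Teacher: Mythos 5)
Your proposal is correct and follows essentially the same route as the paper: identify the semi-norm as $\tfrac1n\sum_x|f(x)|$ via \lemmaref{lm:advantage-l1}, pass to $\bR^n$ coordinates, compare Lebesgue volumes of $[-1,1]^n$ against $|\cF'|$ translates of the cross-polytope $\{v:\sum_i|v_i|\le n\varepsilon\}$ of volume $(2n\varepsilon)^n/n!$, and finish with the Stirling-type bound $n!\ge(n/e)^n$. The only cosmetic difference is that the paper states the factorial bound as $\log(n!)\ge\int_1^n\log t\,\mathrm{d}t$, which is the same inequality.
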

\begin{proof}
Suppose $X = \{x_1,\ldots,x_n\}$. Let $\vect$ be the bijection from $\bR^X$ to $\bR^n$ such that $\vect(f) = (f(x_1),\ldots,f(x_n))\in\bR^n$ for all $f\in\bR^X$.
Let $\{f_1,\ldots,f_m\}\subseteq [-1,1]^X$ be an $\varepsilon$-covering of $[-1,1]^X$ w.r.t.\ the norm $\|\cdot\|_{\mu,[-1,1]^X}$. Defining
\[
B = \{f\in \bR^X:\|f\|_{\mu,[-1,1]^X} \le \varepsilon\},
\]
we have $[-1,1]^X\subseteq \bigcup_{i=1}^m (\{f_i\} + B)$,
which implies $\vect([-1,1]^X) \subseteq \bigcup_{i=1}^m \vect(\{f_i\} + B)$.
Therefore, the volume of $\vect([-1,1]^X)$ is at most $m$ times the volume of $\vect(B)$.

It is clear that $\vect([-1,1]^X) = [-1,1]^n$ has volume $2^n$ in $\bR^n$. Moreover, by \lemmaref{lm:advantage-l1},
\[
\vect(B) = \{(r_1,\ldots,r_n)\in\bR^n: |r_1| + \cdots + |r_n| \le \varepsilon n\},
\]
which has volume $(2\varepsilon n)^n/n!$. 
Therefore,
\[
2^n \le m(2\varepsilon n)^n/n!,
\]
and thus
\[
\log m \ge \log (n!/(n\varepsilon)^n) \ge n \log (1/e\varepsilon),
\]
as desired. We used the fact $\log (n!) \ge \int_1^n \log t \mathrm{d}t > n\log (n/e)$ in the last inequality.
\end{proof}
\end{document}